\documentclass{article}

     \PassOptionsToPackage{numbers}{natbib}


\usepackage[final]{neurips_2020}



\usepackage[utf8]{inputenc} 
\usepackage[T1]{fontenc}    
\usepackage[colorlinks=true, linkcolor=blue, citecolor=blue,urlcolor=black]{hyperref}       
\usepackage{url}            
\usepackage{booktabs}       
\usepackage{amsfonts}       
\usepackage{nicefrac}       
\usepackage{microtype}      

\newif\ifspacehack

\usepackage{graphicx}
\usepackage{mathtools}
\usepackage{footnote}
\usepackage{float}
\usepackage{xspace}
\usepackage{multirow}
\usepackage{xcolor}
\usepackage{wrapfig}
\usepackage{framed}
\usepackage{bbm}
\usepackage{footnote}
\usepackage{nicefrac}
\usepackage{makecell}
\usepackage[algo2e]{algorithm2e} 
\usepackage{algorithm}
\usepackage{amssymb}
\usepackage{amsthm}
\usepackage{bm}
\makesavenoteenv{tabular}
\makesavenoteenv{table}

\newtheorem{theorem}{Theorem}[section]
\newtheorem{corollary}{Corollary}[theorem]
\newtheorem{lemma}[theorem]{Lemma}
\newtheorem{remark}{Remark}

\definecolor{Green}{rgb}{0.13, 0.65, 0.3}
\usepackage[]{color-edits}

\addauthor{HL}{Green}
\addauthor{MZ}{cyan}
\addauthor{CL}{orange}

\newcommand{\expthree}{\textsc{Exp3}\xspace}

\newcommand{\scrible}{\textsc{SCRiBLe}\xspace}

\newcommand{\calS}{{\mathcal{S}}}
\newcommand{\calF}{{\mathcal{F}}}

\newcommand{\calK}{{\mathcal{K}}}

\newcommand{\calT}{{\mathcal{T}}}
\newcommand{\calP}{{\mathcal{P}}}

\newcommand{\Reg}{\text{\rm Reg}}

\newcommand{\one}{\boldsymbol{1}}

\newcommand{\Lstar}{{L^\star}}
\newcommand{\istar}{{i^\star}}
\newcommand{\Bstar}{{B^\star}}
\newcommand{\Ustar}{\rho}
\newcommand{\Aconst}{a}
\newcommand{\dplus}[1]{\bm{#1}}
\newcommand{\lambdamax}{\lambda_\text{\rm max}}
\newcommand{\biasone}{\textsc{Deviation}\xspace}
\newcommand{\bias}{\textsc{Bias-1}\xspace}
\newcommand{\biastwo}{\textsc{Bias-2}\xspace}
\newcommand{\biasthree}{\textsc{Bias-3}\xspace}
\newcommand{\errorterm}{\textsc{Error}\xspace}
\newcommand{\regterm}{\textsc{Reg-Term}\xspace}

\newcommand{\Bomega}{B_{\Omega}}
\newcommand{\UOB}{UOB-REPS\xspace}
\newcommand{\Holder}{{H{\"o}lder}\xspace}

\newcommand{\dpw}{\dplus{w}}
\newcommand{\dpu}{\dplus{u}}
\newcommand{\dpwtilde}{\dplus{\wtilde}}
\newcommand{\dps}{\dplus{s}}
\newcommand{\dpe}{\dplus{e}}
\newcommand{\dpH}{\dplus{H}}
\newcommand{\dpOmega}{\dplus{\Omega}}
\newcommand{\dpellhat}{\dplus{\ellhat}}
\newcommand{\dpell}{\dplus{\ell}}
\newcommand{\dpr}{\dplus{r}}
\newcommand{\dpxi}{\dplus{\xi}}
\newcommand{\dpv}{\dplus{v}}
\newcommand{\dpI}{\dplus{I}}
\newcommand{\dpA}{\dplus{A}}
\newcommand{\dph}{\dplus{h}}
\newcommand{\cprob}{6}

\DeclareMathOperator*{\argmin}{argmin}
\DeclareMathOperator*{\argmax}{argmax}
\DeclarePairedDelimiter\ceil{\lceil}{\rceil}

\newcommand{\field}[1]{\mathbb{#1}}

\newcommand{\fR}{\field{R}}

\newcommand{\fS}{\field{S}}
\newcommand{\E}{\field{E}}

\newcommand{\inner}[1]{ \left\langle {#1} \right\rangle }
\newcommand{\inn}[1]{ \langle {#1} \rangle }

\newcommand{\norm}[1]{\left\|{#1}\right\|}

\newcommand{\wh}{\widehat}
\newcommand{\wt}{\widetilde}

\newcommand{\Jd}{J}
\newcommand{\ellhat}{\wh{\ell}}

\newcommand{\wtilde}{\wt{w}}
\newcommand{\what}{\wh{w}}

\newcommand{\upconf}{\phi}
\newcommand{\ind}{\mathbbm{1}}

\newcommand{\LT}{L_T}
\newcommand{\LTbar}{\overline{L}_T}
\newcommand{\LTbarep}{\mathring{L}_T}
\newcommand{\Lubar}{\overline{L}_{u}}

\newcommand{\Lyr}{J}
\newcommand{\QQ}{{w}}
\newcommand{\Qt}{{\QQ_t}}
\newcommand{\Qstar}{{u}}
\newcommand{\Qpistar}{{\Qstar^{\star}}}
\newcommand{\Qhat}{\wh{\QQ}}
\newcommand{\Ut}{{\upconf_t}}
\newcommand{\intO}{\mathrm{int}(\Omega)}
\newcommand{\intK}{\mathrm{int}(K)}

\newcommand{\order}{\ensuremath{\mathcal{O}}}
\newcommand{\otil}{\ensuremath{\tilde{\mathcal{O}}}}

\newcommand{\rbr}[1]{\left(#1\right)}
\newcommand{\sbr}[1]{\left[#1\right]}
\newcommand{\cbr}[1]{\left\{#1\right\}}

\usepackage{prettyref}
\newcommand{\pref}[1]{\prettyref{#1}}
\newcommand{\pfref}[1]{Proof of \prettyref{#1}}
\newcommand{\savehyperref}[2]{\texorpdfstring{\hyperref[#1]{#2}}{#2}}
\newrefformat{eq}{\savehyperref{#1}{Eq. \textup{(\ref*{#1})}}}
\newrefformat{eqn}{\savehyperref{#1}{Equation~\ref*{#1}}}
\newrefformat{lem}{\savehyperref{#1}{Lemma~\ref*{#1}}}
\newrefformat{def}{\savehyperref{#1}{Definition~\ref*{#1}}}
\newrefformat{line}{\savehyperref{#1}{Line~\ref*{#1}}}
\newrefformat{thm}{\savehyperref{#1}{Theorem~\ref*{#1}}}
\newrefformat{corr}{\savehyperref{#1}{Corollary~\ref*{#1}}}
\newrefformat{cor}{\savehyperref{#1}{Corollary~\ref*{#1}}}
\newrefformat{sec}{\savehyperref{#1}{Section~\ref*{#1}}}
\newrefformat{app}{\savehyperref{#1}{Appendix~\ref*{#1}}}
\newrefformat{assum}{\savehyperref{#1}{Assumption~\ref*{#1}}}
\newrefformat{ex}{\savehyperref{#1}{Example~\ref*{#1}}}
\newrefformat{fig}{\savehyperref{#1}{Figure~\ref*{#1}}}
\newrefformat{alg}{\savehyperref{#1}{Algorithm~\ref*{#1}}}
\newrefformat{rem}{\savehyperref{#1}{Remark~\ref*{#1}}}
\newrefformat{conj}{\savehyperref{#1}{Conjecture~\ref*{#1}}}
\newrefformat{prop}{\savehyperref{#1}{Proposition~\ref*{#1}}}
\newrefformat{proto}{\savehyperref{#1}{Protocol~\ref*{#1}}}
\newrefformat{prob}{\savehyperref{#1}{Problem~\ref*{#1}}}
\newrefformat{claim}{\savehyperref{#1}{Claim~\ref*{#1}}}
\newrefformat{que}{\savehyperref{#1}{Question~\ref*{#1}}}
\newrefformat{op}{\savehyperref{#1}{Open Problem~\ref*{#1}}}
\newrefformat{fn}{\savehyperref{#1}{Footnote~\ref*{#1}}}

\title{Bias no more: high-probability data-dependent regret bounds for adversarial bandits and MDPs}

%

\author{%
  Chung-Wei Lee \\
  University of Southern California \\
  \texttt{leechung@usc.edu} \\
  \And
  Haipeng Luo \\
  University of Southern California \\
  \texttt{haipengl@usc.edu} \\
    \And
  Chen-Yu Wei\\
  University of Southern California \\
  \texttt{chenyu.wei@usc.edu} \\
    \And
  Mengxiao Zhang \\
  University of Southern California \\
  \texttt{mengxiao.zhang@usc.edu} \\
}

\begin{document}
\SetAlgoVlined
\DontPrintSemicolon
\maketitle

\begin{abstract}
We develop a new approach to obtaining high probability regret bounds for online learning with bandit feedback against an adaptive adversary.
While existing approaches all require carefully constructing optimistic and biased loss estimators,
our approach uses standard unbiased estimators and relies on a simple increasing learning rate schedule, together with the help of logarithmically homogeneous self-concordant barriers and a strengthened Freedman's inequality.

Besides its simplicity, our approach enjoys several advantages.
First, the obtained high-probability regret bounds are data-dependent and could be much smaller than the worst-case bounds, which resolves an open problem asked by~\citet{neu2015explore}.
Second, resolving another open problem of~\citet{bartlett2008high} and~\citet{abernethy2009beating}, our approach leads to the first general and efficient algorithm with a high-probability regret bound for adversarial linear bandits, while previous methods are either inefficient or only applicable to specific action sets.
Finally, our approach can also be applied to learning adversarial Markov Decision Processes and provides the first algorithm with a high-probability small-loss bound for this problem.
\end{abstract}


\section{Introduction}\label{sec:intro}
Online learning with partial information in an adversarial environment, such as the non-stochastic Multi-armed Bandit (MAB) problem~\citep{auer2002nonstochastic}, is by now a well-studied topic.
However, the majority of work in this area has been focusing on obtaining algorithms with sublinear {\it expected} regret bounds, 
and these algorithms can in fact be highly instable and suffer a huge variance.
For example, it is known that the classic \expthree algorithm~\citep{auer2002nonstochastic} for MAB suffers {\it linear regret with a constant probability} (over its internal randomness), despite having nearly optimal expected regret (see~\citep[Section 11.5, Note 1]{lattimore2018bandit}), making it a clearly undesirable choice in practice.

To address this issue, a few works develop algorithms with regret bounds that hold with {\it high probability}, including those for MAB~\citep{auer2002nonstochastic, audibert2009minimax, neu2015explore}, linear bandits~\citep{bartlett2008high, abernethy2009beating}, and even adversarial Markov Decision Processes (MDPs)~\citep{jin2019learning}.
Getting high-probability regret bounds is also the standard way of deriving guarantees against an {\it adaptive adversary} whose decisions can depend on learner's previous actions.
This is especially important for problems such as routing in wireless networks (modeled as linear bandits in~\citep{awerbuch2004adaptive}) where adversarial attacks can indeed adapt to algorithm's decisions on the fly.

As far as we know, all existing high-probability methods (listed above) are based on carefully constructing {\it biased} loss estimators that enjoy smaller variance compared to standard unbiased ones.
While this principle is widely applicable, the actual execution can be cumbersome;
for example, the scheme proposed in~\citep{abernethy2009beating} for linear bandits needs to satisfy seven conditions (see their Theorem~4), and other than two examples with specific action sets, no general algorithm satisfying these conditions was provided.

In this work, we develop a new and simple approach to obtaining high-probability regret bounds that works for a wide range of bandit problems with an adaptive adversary (including MAB, linear bandits, MDP, and more).
Somewhat surprisingly, in contrast to all previous methods, our approach uses standard {\it unbiased} loss estimators.
More specifically, our algorithms are based on Online Mirror Descent with a self-concordant barrier regularizer~\citep{abernethy2008competing}, a standard approach with expected regret guarantees.
The key difference is that we adopt an increasing learning rate schedule, inspired by several recent works using similar ideas for completely different purposes (e.g.,~\citep{agarwal2017corralling}).
At a high level, the effect of this schedule magically cancels the potentially large variance of the unbiased estimators. 

Apart from its simplicity, there are several important advantages of our approach.
First of all, our algorithms all enjoy {\it data-dependent regret bounds}, which could be much smaller than the majority of existing high-probability bounds in the form of $\otil(\sqrt{T})$ where $T$ is the number of rounds.
As a key example, we provide details for obtaining a particular kind of such bounds called ``small-loss'' bounds in the form $\otil(\sqrt{\Lstar})$, where $\Lstar \leq T$ is the loss of the benchmark in the regret definition.
For MAB and linear bandits, our approach also obtains bounds in terms of the variation of the environment in the vein of~\citep{hazan2011better, rakhlin2013online, wei2018more, bubeck2019improved}, resolving an open problem asked by~\citet{neu2015explore}.

Second, our approach provides the {\it first general and efficient} algorithm for adversarial linear bandits (also known as bandit linear optimization) with a high-probability regret guarantee.
As mentioned, \citet{abernethy2009beating} provide a general recipe for this task but in the end only show concrete examples for two specific action sets.
The problem of obtaining a general and efficient approach with regret $\otil(\sqrt{T})$ was left open since then.
The work of~\citep{bartlett2008high} proposes an inefficient but general approach, while the work of~\citep{gyorgy2007line, braun2016efficient} develop efficient algorithms for polytopes but with $\otil(T^{2/3})$ regret.
We not only resolve this long-standing open problem, but also provide improved data-dependent bounds.

Third, our approach is also applicable to learning episodic MDPs with unknown transition, adversarial losses, and bandit feedback.
The algorithm is largely based on a recent work~\citep{jin2019learning} on the same problem where a high-probability $\otil(\sqrt{T})$ regret bound is obtained.
We again develop the first algorithm with a high-probability small-loss bound $\otil(\sqrt{\Lstar})$ in this setting.
The problem in fact shares great similarity with the simple MAB problem.
However, none of the existing methods for obtaining small-loss bounds for MAB can be generalized to the MDP setting (at least not in a direct manner) as we argue in \pref{sec:MDP}.
Our approach, on the other hand, generalizes directly without much effort.


\paragraph{Techniques.}
Most new techniques of our work is in the algorithm for linear bandits (\pref{sec:LB}), which is based on the \scrible algorithm from the seminal work~\citep{abernethy2008competing, abernethy2012interior}.
The first difference is that we propose to lift the problem from $\fR^d$ to $\fR^{d+1}$ (where $d$ is the dimension of the problem) and use a {\it logarithmically homogeneous} self-concordant barrier of the conic hull of the action set (which always exists) as the regularizer for Online Mirror Descent.
The nice properties of such a regularizer lead to a smaller variance of the loss estimators.
Equivalently, this can be viewed as introducing a new sampling scheme for the original \scrible algorithm in the space of $\fR^d$.
The second difference is the aforementioned new learning rate schedule, where we increase the learning rate by a small factor whenever the Hessian of the regularizer at the current point is ``large'' in some sense.

In addition, we also provide a strengthened version of the Freedman's concentration inequality for martingales~\citep{freedman1975tail}, which is crucial to all of our analysis and  might be of independent interest.

\paragraph{Related work.}
In online learning, there are subtle but important differences and connections between the concept of pseudo-regret, expected regret, and the actual regret, in the context of either oblivious or adaptive adversary.
We refer the readers to~\citep{audibert2009minimax} for detailed related discussions. 

While getting expected small-loss regret is common~\citep{AllenbergAuGyOt06, neu2015first, FosterLiLySrTa16, allen2018make, abernethy2019online, LLZ20},
most existing high-probability bounds are of order $\otil(\sqrt{T})$.
Although not mentioned in the original paper, the idea of implicit exploration from~\citep{neu2015explore} can lead to high-probability small-loss bounds for MAB (see~\citep[Section 12.3, Note 4]{lattimore2018bandit}).
\citet{lykouris2018small} adopt this idea together with a clipping trick to derive small-loss bounds for more general bandit problems with graph feedback. 
We are not aware of other works with high-probability small-loss bounds in the bandit literature.
Note that in \citep[Section 6]{audibert2009minimax}, some high-probability ``small-reward'' bounds are derived, and they are very different in nature from small-loss bounds (specifically, the former is equivalent to $\otil(\sqrt{T-\Lstar})$ in our notation).
We are also not aware of high-probability version of other data-dependent regret bounds such as those from~\citep{hazan2011better, rakhlin2013online, wei2018more, bubeck2019improved}.

The idea of increasing learning rate was first used in the seminal work of~\citet{bubeck2017kernel} for convex bandits.
Inspired by this work, \citet{agarwal2017corralling} first combined this idea with the log-barrier regularizer for the problem of ``corralling bandits''.
Since then, this particular combination has proven fruitful for many other problems~\citep{wei2018more, luo2018efficient, LLZ20}.
We also use it for MAB and MDP, but our algorithm for linear bandits greatly generalizes this idea to any self-concordant barrier.

\paragraph{Structure and notation.}
In \pref{sec:MAB}, we start with a warm-up example on MAB, which is the cleanest illustration on the idea of using increasing learning rates to control the variance of unbiased estimators.
Then in \pref{sec:LB} and \pref{sec:MDP}, we greatly generalize the idea to linear bandits and MDPs respectively.
We focus on showing small-loss bounds as the main example, and only briefly discuss how to obtain other data-dependent regret bounds, since the ideas are very similar.

We introduce the notation for each setting in the corresponding section, but will use the following general notation throughout the paper: 
for a positive integer $n$, $[n]$ represents the set $\{1, \ldots, n\}$ and $\Delta_n$ represents the $(n-1)$-dimensional simplex;
$e_i$ stands for the $i$-th standard basis vector and $\one$ stands for the all-one vector (both in an appropriate dimension depending on the context);
for a convex function $\psi$, the associated Bregman divergence is $D_\psi(u, w) = \psi(u) - \psi(w) - \nabla\psi(w)^\top(u - w)$;
for a positive definite matrix $M \in \fR^{d\times d}$ and a vector $u\in \fR^d$, $\norm{u}_{M} \triangleq \sqrt{u^\top M u}$ is the quadratic norm of $u$ with respect to $M$;
$\lambdamax(M)$ denotes the largest eigenvalue of $M$;
$\E_t[\cdot]$ is a shorthand for the conditional expectation given the history before round $t$;
$\otil(\cdot)$ hides logarithmic terms.


\section{Multi-armed bandits: an illustrating example}
\label{sec:MAB}

We start with the most basic bandit problem, namely adversarial MAB~\citep{auer2002nonstochastic}, to demonstrate the core idea of using increasing learning rate to reduce the variance of standard algorithms.
The MAB problem proceeds in rounds between a learner and an adversary.
For each round $t = 1, \ldots, T$,
the learner selects one of the $d$ available actions $i_t \in [d]$,
while simultaneously the adversary decides a loss vector $\ell_t \in [0,1]^d$ with $\ell_{t,i}$ being the loss for arm $i$. An adaptive adversary can choose $\ell_t$ based on the learner's previous actions $i_1, \ldots, i_{t-1}$ in an arbitrary way, 
while an oblivious adversary cannot and essentially decides all $\ell_t$'s ahead of time (knowing the learner's algorithm).
At the end of round $t$, the learner observes the loss of the chosen arm $\ell_{t, i_t}$ and nothing else.
The standard measure of the learner's performance is the regret, defined as $\Reg = \sum_{t=1}^T \ell_{t, i_t} - \min_{i \in [d]} \sum_{t=1}^T \ell_{t,i}$, that is, the difference between the total loss of the learner and that of the best fixed arm in hindsight.

A standard framework to solve this problem is Online Mirror Descent (OMD),
which at time $t$ samples $i_t$ from a distribution $w_t$, updated in the following recursive form:
$w_{t+1} = \argmin_{w \in \Delta_d} \big\langle w, \ellhat_t \big\rangle + D_{\psi_t}(w, w_t)$,
where $\psi_t$ is the regularizer and $\ellhat_t$ is an estimator for $\ell_t$.
The standard estimator is the importance-weighted estimator: $\ellhat_{t,i} = \ell_{t, i}\ind\{i_{t}=i\}/w_{t,i}$, which is clearly unbiased.
Together with many possible choices of the regularizer (e.g., the entropy regularizer recovering \expthree~\citep{auer2002nonstochastic}), 
this ensures (nearly) optimal expected regret bound $\E[\Reg] = \otil(\sqrt{dT})$ against an oblivious adversary.

To obtain high-probability regret bounds (and also as a means to deal with adaptive adversary),
various more sophisticated loss estimators have been proposed.
Indeed, the key challenge in obtaining high-probability bounds lies in the potentially large variance of the unbiased estimators: $\E_t\big[\ellhat_{t,i}^2\big] = \ell_{t, i}^2/ w_{t,i}$ is huge if $w_{t,i}$ is small.
The idea of all existing approaches to addressing this issue is to introduce a slight bias to the estimator, making it an optimistic underestimator of $\ell_t$ with lower variance (see e.g.,~\citep{auer2002nonstochastic, audibert2009minimax, neu2015explore}).
Carefully balancing the bias and variance, these algorithms achieve $\Reg = \otil(\sqrt{dT\ln(d/\delta)})$ with probability at least $1 - \delta$ against an adaptive adversary.

\paragraph{Our algorithm.}
In contrast to all these existing approaches, we next show that, perhaps surprisingly, using the standard unbiased estimator can also lead to the same (in fact, an even better) high-probability regret bound.
We start by choosing a particular regularizer called log-barrier with time-varying and individual learning rate $\eta_{t,i}$: $\psi_t(w) = \sum_{i=1}^d \frac{1}{\eta_{t,i}} \ln\frac{1}{w_i}$, which is a self-concordant barrier for the positive orthant~\citep{nesterov1994interior} and has been used for MAB in several recent works~\citep{FosterLiLySrTa16, agarwal2017corralling, bubeck2018sparsity, wei2018more, bubeck2019improved}.
As mentioned in \pref{sec:intro}, the combination of log-barrier and a particular increasing learning rate schedule has been proven powerful for many different problems since the work of~\citep{agarwal2017corralling}, which we also apply here.
Specifically, the learning rates start with a fixed value $\eta_{1, i} = \eta$ for all arm $i\in [d]$, and every time the probability of selecting an arm $i$ is too small, in the sense that $1/w_{t+1,i} > \rho_{t,i}$ for some threshold $\rho_{t,i}$ (starting with $2d$), we set the new threshold to be $2/w_{t+1,i}$ and increase the corresponding learning rate $\eta_{t,i}$ by a small factor $\kappa$.

The complete pseudocode is shown in \pref{alg:mab}.
The only slight difference compared to the algorithm of~\citep{agarwal2017corralling} is that instead of enforcing a $1/T$ amount of uniform exploration explicitly (which makes sure that each learning rate is increased by a most $\order(\ln T)$ times), we directly perform OMD over a truncated simplex $\Omega = \{w \in \Delta_d: w_{i} \geq 1/T, \forall i \in[d]\}$, making the analysis cleaner.


\begin{algorithm}[t]
\caption{OMD with log-barrier and increasing learning rates for Multi-armed Bandits}\label{alg:mab}
\textbf{Input:} initial learning rate $\eta$.

\textbf{Define:} increase factor $\kappa=e^{\frac{1}{\ln T}}$, truncated simplex $\Omega=\left\{w \in \Delta_d: w_{i} \geq \frac{1}{T}, \forall i \in[d]\right\}$.

\textbf{Initialize:} for all $i\in [d]$, $w_{1, i}=1/d, \rho_{1, i}=2d, \eta_{1,i}=\eta$.

\For{$t=1,2,\dots, T$}{
    Sample $i_{t} \sim {w}_{t}$, observe $\ell_{t, i_{t}}$, and construct estimator $\ellhat_{t, i}=\frac{\ell_{t, i} \ind\left\{i_{t}=i\right\}}{{w}_{t, i}}$ for all $i\in[d]$.
    
    Compute $w_{t+1}=\argmin_{w \in \Omega}\; \big\langle w, \ellhat_{t}\big\rangle+D_{\psi_{t}}\left(w, w_{t}\right)$ where $\psi_{t}(w)=\sum_{i=1}^{d} \frac{1}{\eta_{t, i}} \ln \frac{1}{w_{i}}$.
    
    \For{$i \in [d]$}{
        \lIf {$\frac{1}{w_{t+1,i}}>\rho_{{t},i}$}{set $\rho_{t+1,i}=\frac{2}{w_{t+1,i}},\eta_{t+1,i}=\eta_{t,i}\kappa$;} \lElse {set $\rho_{t+1,i}=\rho_{{t},i},\eta_{t+1,i}=\eta_{t,i}$.}
    }
}
\end{algorithm}

As explained in~\citep{agarwal2017corralling}, increasing the learning rate in this way allows the algorithm to quickly realize that some arms start to catch up even though they were underperforming in earlier rounds, which is also the hardest case in our context of obtaining high-probability bounds because these arms have low-quality estimators at some point.
At a technical level, this effect is neatly presented through a {\it negative} term in the regret bound, which we summarize below.

\begin{lemma}\label{lem:MAB_lemma}
\pref{alg:mab} ensures 
$
\sum_{t=1}^T \ell_{t, i_t} - \sum_{t=1}^T \big\langle u, \ellhat_{t} \big\rangle \leq \order\left(\frac{d\ln T}{\eta} + \eta \sum_{t=1}^T \ell_{t, i_t} \right) - \frac{\inner{\rho_T, u}}{10\eta\ln T}
$ for any $u \in \Omega$.
\end{lemma}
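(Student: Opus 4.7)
The plan is to start from the standard OMD regret decomposition with time-varying regularizers, and then carefully extract the promised negative term from the change-of-regularizer contribution. Using $\inner{w_t, \ellhat_t} = \ell_{t, i_t}$ (since $\ellhat_t$ is supported only on $i_t$), I would first write
\begin{align*}
\sum_{t=1}^T \ell_{t, i_t} - \sum_{t=1}^T \inner{u, \ellhat_t}
&\leq D_{\psi_1}(u, w_1)
+ \sum_{t=1}^T \rbr{\inner{w_t - w_{t+1}, \ellhat_t} - D_{\psi_t}(w_{t+1}, w_t)} \\
&\quad + \sum_{t=1}^T \rbr{D_{\psi_{t+1}}(u, w_{t+1}) - D_{\psi_t}(u, w_{t+1})},
\end{align*}
and refer to the three terms on the right-hand side as $(\mathrm{A})$, $(\mathrm{B})$, $(\mathrm{C})$.

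For $(\mathrm{A})$, plugging $w_1 = (1/d, \ldots, 1/d)$ and $u_i \geq 1/T$ into the closed form of the log-barrier Bregman gives $(\mathrm{A}) = \order(d \ln T / \eta)$ by direct computation. For $(\mathrm{B})$, I would use the standard local-norm stability bound for log-barrier to obtain $\inner{w_t - w_{t+1}, \ellhat_t} - D_{\psi_t}(w_{t+1}, w_t) \leq \order(1) \cdot \|\ellhat_t\|_{(\nabla^2 \psi_t(w_t))^{-1}}^2 = \order(\eta_{t, i_t}) \cdot \ell_{t, i_t}^2$. A short book-keeping step shows that each $\eta_{t, i}$ is raised at most $\order(\log T)$ times---every raise at least doubles $\rho_{t, i}$ and the truncation $w_{t, i} \geq 1/T$ caps $\rho_{t, i}$ at $2T$---while each raise multiplies by $\kappa = e^{1/\ln T}$; hence $\eta_{t, i} \leq \order(\eta)$ uniformly. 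Combined with $\ell_{t, i_t}^2 \leq \ell_{t, i_t}$, this gives $(\mathrm{B}) \leq \order(\eta \sum_t \ell_{t, i_t})$.

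The main work is in $(\mathrm{C})$. Expanding the log-barrier Bregman yields
\begin{equation*}
(\mathrm{C}) = \sum_{t=1}^T \sum_{i=1}^d \rbr{\frac{1}{\eta_{t+1, i}} - \frac{1}{\eta_{t, i}}} \phi\rbr{\frac{u_i}{w_{t+1, i}}}, \quad \text{where } \phi(x) := x - 1 - \ln x \geq 0.
\end{equation*}
The coefficient vanishes outside ``increase events'' for arm $i$; at such an event it equals $-(\kappa - 1)/(\kappa \eta_{t, i}) = -\Theta(1/(\eta \ln T))$, and the update rule enforces $\rho_{t+1, i} = 2/w_{t+1, i}$, so the argument of $\phi$ is exactly $u_i \rho_{t+1, i}/2$. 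For each arm $i$, the values of $\rho_{t+1, i}$ along its successive increase events at least double, starting from $2d$ and ending at $\rho_{T, i}$. Since $\phi(x) \geq x/4$ once $x$ exceeds some absolute constant (and $\phi \geq 0$ always), the geometric sum contributes at most $-u_i \rho_{T, i}/(c \eta \ln T)$ per arm for some constant $c$; arms with $u_i \rho_{T, i} = \order(1)$ collectively contribute at most $\order(d/(\eta \ln T))$, which can be absorbed into $(\mathrm{A})$. Summing over arms then produces $-\inner{\rho_T, u}/(10 \eta \ln T)$ for suitably chosen constants.

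I expect the main obstacle to be tracking numerical constants inside $(\mathrm{C})$: the interplay between $\kappa - 1 = \Theta(1/\ln T)$, the universal bound $\eta_{t, i} = \order(\eta)$, and the geometric-sum lower bound on $\phi$ must combine to yield a final prefactor at least $1/10$. The remaining pieces (the decomposition, $(\mathrm{A})$, and $(\mathrm{B})$) are routine OMD bookkeeping specialized to the log-barrier.
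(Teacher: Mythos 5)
Your proposal is correct and follows essentially the same route as the paper's proof: the same OMD decomposition, the same $\order(d\ln T/\eta)$ bound on the initial Bregman term, the same stability bound using $\eta_{t,i}=\order(\eta)$ (via the doubling of $\rho_{t,i}$ capped at $2T$), and the same extraction of the negative term from the regularizer-change sum at increase events where the argument of $h$ equals $u_i\rho_{t+1,i}/2$. The only cosmetic difference is that the paper keeps just the last increase event per arm and uses $-h(x)\le 1+\ln T-x$ to handle both regimes at once, whereas you keep all events and split on whether $u_i\rho_{T,i}$ is large; both work.
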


The important part is the last negative term involving the last threshold $\rho_T$ whose magnitude is large whenever an arm has a small sampling probability at some point over the $T$ rounds.
This bound has been proven in previous works such as~\citep{agarwal2017corralling} (see a proof in~\pref{app:lemma2.1}), and next we use it to show that the algorithm in fact enjoys a high-probability regret bound, which is not discovered before.

Indeed, comparing \pref{lem:MAB_lemma} with the definition of regret, one sees that as long as we can relate the estimated loss of the benchmark $\sum_t \big\langle u, \ellhat_t \big\rangle$ with its true loss $\sum_t \big\langle u, \ell_t \big\rangle$, then we immediately obtain a regret bound by setting $u=(1-\frac{d}{T})e_{i^\star}+\frac{1}{T}\one \in \Omega$ where $i^\star = \argmin_i \sum_t \ell_{t,i}$ is the best arm.
A natural approach is to apply standard concentration inequality, in particular Freedman's inequality~\citep{freedman1975tail}, to the martingale difference sequence $\big\langle u, \ellhat_t - \ell_t \big\rangle$.
The deviation from Freedman's inequality is in terms of the variance of $\big\langle u, \ellhat_t\big\rangle$, which in turn depends on $\sum_i u_i/w_{t,i}$. 
As explained earlier, the negative term is exactly related to this and can thus cancel the potentially large variance!

One caveat, however, is that the deviation from Freedman's inequality also depends on a {\it fixed} upper bound of the random variable $\big\langle u, \ellhat_t\big\rangle \leq \sum_i u_i/w_{t,i}$, which could be as large as $T$ (since $w_{t,i}\geq 1/T$) and ruin the bound.
If the dependence on such a fixed upper bound could be replaced with the (random) upper bound $\sum_i u_i/w_{t,i}$, then we could again use the negative term to cancel this dependence.
Fortunately, since $\sum_i u_i/w_{t,i}$ is measurable with respect to the $\sigma$-algebra generated by everything {\it before} round $t$, 
we are indeed able to do so.
Specifically, we develop the following strengthened version of Freedman's inequality, which might be of independent interest.

\begin{theorem}\label{thm:Freedman}
Let $X_1, \ldots, X_T$ be a martingale difference sequence with respect to a filtration $\calF_1 \subseteq \cdots \subseteq \calF_T$ such that $\E[X_t |\calF_t] = 0$.
Suppose $B_t \in [1,b]$ for a fixed constant $b$ is $\calF_t$-measurable and such that $X_t \leq B_t$ holds almost surely.
Then with probability at least $1 -\delta$ we have 
$
    \sum_{t=1}^T  X_t \leq  C\big(\sqrt{8V\ln\left(C/\delta\right)} + 2\Bstar  \ln\left(C/\delta\right)\big),
$
where $V = \max\big\{1, \sum_{t=1}^T \E[X_t^2 | \calF_t]\big\}$, $\Bstar =\max_{t\in[T]} B_t$, and 
$C = \ceil{\log(b)}\ceil{\log(b^2T)}$.
\end{theorem}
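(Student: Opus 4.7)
My plan is to establish the bound via a two-dimensional dyadic peeling argument built on top of the classical exponential-moment version of Freedman's inequality. The starting point is the following parametric statement: if $(Y_t)$ is a martingale difference sequence with $Y_t \leq R$ almost surely and with conditional variance process $V' = \sum_t \E[Y_t^2 \mid \mathcal{F}_t]$, then for any deterministic $\eta \in (0, 1/R]$ we have $\Pr\bigl[\sum_t Y_t > \eta V' + \ln(1/\delta)/\eta\bigr] \leq \delta$. This follows by combining the elementary inequality $e^{\eta y} \leq 1 + \eta y + \eta^2 y^2$ (valid whenever $\eta y \leq 1$) with the tower property to obtain the nonnegative supermartingale $\exp\bigl(\eta \sum_{s\leq t} Y_s - \eta^2 \sum_{s\leq t} \E[Y_s^2 \mid \mathcal{F}_s]\bigr)$, followed by Markov's inequality.

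The crux is that in our setting both the effective upper bound $B^\star = \max_t B_t$ and the variance $V$ are random, whereas the parametric form needs $R$ and $\eta$ chosen non-adaptively. To handle the random upper bound I truncate: for each $j = 1, \ldots, J := \lceil \log b \rceil$, set $\widetilde{X}_t^{(j)} = X_t \cdot \mathbf{1}\bigl\{\max_{s \leq t} B_s \leq 2^j\bigr\}$. Since each $B_s$ is $\mathcal{F}_s$-measurable, the truncation indicator is $\mathcal{F}_t$-measurable, so $(\widetilde{X}_t^{(j)})$ is still a martingale difference sequence and it is deterministically bounded above by $2^j$. Crucially, on the event $\{B^\star \leq 2^j\}$ the truncation is inert, so $\sum_t \widetilde{X}_t^{(j)} = \sum_t X_t$ and the quadratic variation is exactly $V$.

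To handle the random $V$ I peel over a geometric grid of the Freedman parameter $\eta$: for each $j$, take $\eta_{j,k} = 2^{-k}/2^j$ for $k = 0, 1, \ldots, K - 1$ with $K := \lceil \log(b^2 T) \rceil$, so that the smallest grid value is fine enough to match the optimal $\sqrt{\ln(C/\delta)/V}$ across the full range of $V$ covered by trivial bounds. For each $(j,k)$ I apply the parametric statement to $(\widetilde{X}_t^{(j)})$ at confidence $\delta/C$ with $C = JK$, then union bound over all $(j,k)$, yielding simultaneously the estimates $\sum_t \widetilde{X}_t^{(j)} \leq \eta_{j,k} V + \ln(C/\delta)/\eta_{j,k}$ with probability at least $1 - \delta$. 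Finally, I choose $j^\star$ with $B^\star \in (2^{j^\star - 1}, 2^{j^\star}]$ (so $\sum_t \widetilde{X}_t^{(j^\star)} = \sum_t X_t$ and $2^{j^\star} \leq 2 B^\star$) and $k^\star$ so that $\eta_{j^\star, k^\star}$ is within a factor of two of either $\sqrt{\ln(C/\delta)/V}$ or the ceiling $1/2^{j^\star}$ (whichever is smaller). An AM-GM on $\eta V + \ln(C/\delta)/\eta$ then produces the target $\sqrt{8 V \ln(C/\delta)} + 2 B^\star \ln(C/\delta)$ up to constants, which are absorbed into the pre-factor $C$ of the final statement.

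The main obstacle I anticipate is the boundary case in which the realized $V$ falls outside the peeling range, which is possible in principle because the hypothesis controls only the upper tail of $X_t$. In that regime I would argue that the trivial bound $\sum_t X_t \leq b T$ is already dominated by one of the two terms on the right-hand side and the claim is vacuous. The remaining work is careful constant accounting: the union-bound penalty contributes $\ln C$ inside the logarithm, the snapping of $B^\star$ and of the optimal $\eta$ to grid points contributes constant-factor slack, and these combine cleanly into the stated pre-factor $C = \lceil \log b \rceil \, \lceil \log(b^2 T) \rceil$.
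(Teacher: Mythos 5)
Your proposal is correct and reaches the stated bound, but it handles the key difficulty --- the random, merely $\calF_t$-measurable bound $B_t$ --- by a different mechanism than the paper. The paper keeps $X_t$ intact and instead modifies Freedman's argument to allow a \emph{predictable} weight $\lambda_t \le 1/B_t$ in the exponential supermartingale; it then partitions time into groups $\calT_j = \{t : 2^{j-1}\le \max_{s\le t}B_s\le 2^j\}$, applies the weighted inequality with $\lambda_t$ constant on $\calT_j$ and zero elsewhere, and sums the resulting bounds over the $\lceil \log b\rceil$ groups (which is where the multiplicative prefactor $C$ comes from). You instead pre-truncate the martingale, setting $\widetilde X_t^{(j)} = X_t\ind\{\max_{s\le t}B_s\le 2^j\}$, which yields a deterministically bounded MDS to which the textbook fixed-parameter Freedman inequality applies as a black box; since the truncation is inert on $\{B^\star\le 2^{j^\star}\}$, only the single pair $(j^\star,k^\star)$ is needed at the end and no summation over groups occurs. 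Your route is thus somewhat more modular (no re-derivation of Freedman with adapted weights) and in fact yields an $O(1)$ multiplicative prefactor rather than $C$, at the cost of carrying $\lceil\log b\rceil$ auxiliary truncated martingales; the paper's route buys a single clean weighted inequality, \pref{eq:lambda}, that it reuses throughout. Both arguments share the same dyadic grid over the scales of $B^\star$ and $V$ and the same union bound, and both rely on $V\le b^2T$ holding almost surely to place $V$ inside the peeling range: the paper simply asserts this, while you flag it and propose falling back on the trivial bound $\sum_t X_t\le bT$. Be aware that this fallback does not by itself dominate $\sqrt{8V\ln(C/\delta)}$ for every $V$ slightly exceeding $b^2T$, so in a full write-up you would need to either assume (as the paper implicitly does, and as holds in all its applications) that the conditional variances are bounded by $b^2$, or cap the variance process at $b^2T$ before peeling; this is a shared caveat rather than a defect specific to your argument.
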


This strengthened Freedman's inequality essentially recovers the standard one when $B_t$ is a fixed quantity.
In our application, $B_t$ is exactly $\inner{\rho_t, u}$ which is $\calF_t$-measurable.
With the help of this concentration result, we are now ready to show the high-probability guarantee of \pref{alg:mab}.

\begin{theorem}\label{thm:MAB}
\pref{alg:mab} with a suitable choice of $\eta$ ensures that with probability at least $1-\delta$, 
$
\Reg=\widetilde{\mathcal{O}}\big(\sqrt{d\Lstar\ln(\nicefrac{d}{\delta})}+d\ln(\nicefrac{d}{\delta})\big),
$
where $\Lstar = \min_i \sum_{t=1}^T \ell_{t,i}$ is the loss of the best arm. 
\end{theorem}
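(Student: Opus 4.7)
The plan is to combine \pref{lem:MAB_lemma} with the strengthened Freedman inequality of \pref{thm:Freedman} to convert the estimated-regret bound into an actual high-probability regret bound by controlling the gap $\sum_t \inner{u,\ellhat_t - \ell_t}$. I would fix the comparator $u = (1-d/T)e_{i^\star} + (1/T)\one \in \Omega$, for which $|\sum_t \inner{u,\ell_t} - \Lstar|\leq d$. Since $i^\star$ is data-dependent, the concentration step will be carried out for each of the $d$ candidate comparators $(1-d/T)e_i + (1/T)\one$ and unioned at the end; this is what produces the $\ln(d/\delta)$ factor in the statement.

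The heart of the argument is to apply \pref{thm:Freedman} to the martingale difference $X_t = \inner{u,\ellhat_t - \ell_t}$. The key observation is that the thresholds $\rho_t$ are $\calF_t$-measurable by construction and satisfy $1/w_{t,i}\leq \rho_{t,i}$; hence
\[
X_t \leq \inner{u,\ellhat_t} = \frac{u_{i_t}\ell_{t,i_t}}{w_{t,i_t}} \leq \sum_i \frac{u_i}{w_{t,i}} \leq \inner{u,\rho_t},
\]
so I can take $B_t = \inner{u,\rho_t}\in[2,2T]$ and $\Bstar = \inner{u,\rho_T}$ (the sequence $\rho_t$ is coordinatewise non-decreasing), which gives $C = O(\log^2 T)$. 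For the conditional variance, the same pointwise bound yields $\inner{u,\ellhat_t}^2 \leq B_t\inner{u,\ellhat_t}$, whence $\E_t[X_t^2]\leq \E_t[\inner{u,\ellhat_t}^2]\leq B_t\inner{u,\ell_t}$, and summing,
\[
V \;\leq\; \Bstar\sum_{t=1}^T \inner{u,\ell_t} \;\leq\; \Bstar(\Lstar + d).
\]
Plugging these into \pref{thm:Freedman} gives a deviation of order $\otil\bigl(\sqrt{\Bstar(\Lstar+d)\ln(1/\delta)} + \Bstar\ln(1/\delta)\bigr)$, in which the leading (random) scale $\Bstar$ is exactly what the negative term $-\Bstar/(10\eta\ln T)$ from \pref{lem:MAB_lemma} can cancel.

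The crucial absorption step is then to split the square root via AM-GM, $\sqrt{\Bstar\cdot(\Lstar+d)\ln(1/\delta)}\leq \lambda\Bstar + (\Lstar+d)\ln(1/\delta)/(4\lambda)$, with $\lambda$ chosen proportional to $1/(\eta\ln T)$ and with a small enough constant so that $\lambda\Bstar$ is strictly smaller than $\Bstar/(10\eta\ln T)$; and to impose the purely logarithmic condition $\eta = O(1/(C\ln(C/\delta)\ln T))$ so that the additive $\Bstar\ln(1/\delta)$ tail from \pref{thm:Freedman} is absorbed in the same way. After both absorptions all $\Bstar$-dependent contributions vanish, and writing $\eta\sum_t \ell_{t,i_t} = \eta(\Reg + \Lstar)$ moves the $\eta\Reg$ piece to the left side at the cost of a factor $(1-\eta)^{-1}\leq 2$, leaving $\Reg \leq \otil(d/\eta + \eta\Lstar\ln(d/\delta) + d\ln(d/\delta))$. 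Balancing $\eta\asymp \sqrt{d/(\Lstar\ln(d/\delta))}$ (with the standard $\Lstar$-independent fallback via a doubling or anytime trick when $\Lstar$ is unknown) yields the target bound $\otil(\sqrt{d\Lstar\ln(d/\delta)} + d\ln(d/\delta))$. I expect the main obstacle to be the simultaneous cancellation of \emph{both} the $\sqrt{\Bstar\cdot}$ Freedman deviation and the direct $\Bstar\ln(1/\delta)$ tail against the single negative budget $-\Bstar/(10\eta\ln T)$ in \pref{lem:MAB_lemma}; this is precisely where the data-dependent $B_t$ of \pref{thm:Freedman} is indispensable, since a naive Freedman would instead force the prohibitive deterministic bound $b = O(T)$ into the leading term.
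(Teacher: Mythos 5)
Your proposal is correct and follows essentially the same route as the paper's proof: the same comparator $u=(1-\nicefrac{d}{T})e_{i^\star}+\nicefrac{1}{T}\one$, the same application of \pref{thm:Freedman} with $B_t=\inner{\rho_t,u}$ and variance bound $\E_t[X_t^2]\le \inner{\rho_t,u}\inner{u,\ell_t}$, the same AM--GM absorption of both deviation terms into the negative budget $-\inner{\rho_T,u}/(10\eta\ln T)$ under a logarithmic constraint on $\eta$, and the same union bound over the $d$ candidate best arms yielding the $\ln(d/\delta)$ factor.
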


The proof is a direct combination of \pref{lem:MAB_lemma} and \pref{thm:Freedman} and can be found in \pref{app:theorem2.3}.
Our high-probability guarantee is of the same order $\otil(\sqrt{dT\ln(d/\delta)})$ as in previous works~\citep{auer2002nonstochastic, audibert2009minimax} since $\Lstar =\order(T)$.
However, as long as $\Lstar =o(T)$ (that is, the best arm is of high quality), our bound becomes much better.
This kind of high-probability small-loss bounds appears before  (e.g.,~\citep{lykouris2018small}).
Nevertheless, in \pref{sec:MDP} we argue that only our approach can directly generalize to learning MDPs.

Finally, we remark that 
the same algorithm can also obtain other data-dependent regret bounds by changing the estimator to $\ellhat_{t,i} = (\ell_{t, i}-m_{t,i})\ind\{i_{t}=i\}/w_{t,i} + m_{t,i}$ for some optimistic prediction $m_t$.
We refer the reader to~\citep{wei2018more} for details on how to set $m_t$ in terms of observed data and what kind of bounds this leads to,
but the idea of getting the high-probability version is completely the same as what we have illustrated here.
This resolves an open problem mentioned in~\citep[Section~5]{neu2015explore}.


\section{Generalization to adversarial linear bandits}
\label{sec:LB}

Next, we significantly generalize our approach to adversarial linear bandits, which is the main algorithmic contribution of this work.
Linear bandits generalize MAB from the simplex decision set $\Delta_d$ to an arbitrary convex body $\Omega \subseteq \fR^d$.
For each round $t = 1, \ldots, T$, the learner selects an action $\wtilde_t \in \Omega$ while simultaneously the adversary decides a loss vector $\ell_t \in \fR^d$, assumed to be normalized such that $\max_{w\in\Omega} |\inner{w, \ell_t}| \leq 1$.
Again, an adaptive adversary can choose $\ell_t$ based on the learner's previous actions, while an oblivious adversary cannot.
At the end of round $t$, the learner suffers and only observes loss $\inner{\wtilde_t, \ell_t}$.
The regret of the learner is defined as $\Reg = \max_{u\in\Omega}\sum_{t=1}^T \inner{\wtilde_t - u, \ell_t}$, which is the difference between the total loss of the learner and that of the best fixed action within $\Omega$.
Linear bandits subsume many other well-studied problems such as online shortest path for network routing, online matching, and other combinatorial bandit problems (see e.g.,~\citep{audibert2011minimax, cesa2012combinatorial}). 

The seminal work of~\citet{abernethy2008competing} develops the first general and efficient linear bandit algorithm (called \scrible in its journal version~\citep{abernethy2012interior}) with expected regret $\otil(d\sqrt{\nu T})$ (against an oblivious adversary), which uses a $\nu$-self-concordant barrier as the regularizer for OMD. 
It is known that any convex body in $\fR^d$ admits a $\nu$-self-concordant barrier with $\nu=\order(d)$~\citep{nesterov1994interior}.
The minimax regret of this problem is known to be of order $\otil(d\sqrt{T})$~\citep{dani2008price, bubeck2012towards}, but efficiently achieving this bound (in expectation) requires a log-concave sampler and a volumetric spanner of $\Omega$~\citep{hazan2016volumetric}.

High-probability bounds for linear bandits are very scarce, especially for a general decision set $\Omega$.
In~\citep{bartlett2008high}, an algorithm with high-probability regret $\otil(\sqrt{d^3T}\ln(1/\delta))$ was developed, but it cannot be implement efficiently.
In~\citep{abernethy2009beating}, a general recipe was provided, but seven conditions need to be satisfied to arrive at a high-probability guarantee, and only two concrete examples were shown (when $\Omega$ is the simplex or the Euclidean ball).
We propose a new algorithm based on \scrible, which is the first general and efficient linear bandit algorithm with a high-probability regret guarantee, resolving the problem left open since the work of~\citep{bartlett2008high, abernethy2009beating}.

\paragraph{Issues of \scrible.}
To introduce our algorithm, we first review \scrible.
As mentioned, it is also based on OMD and maintains a sequence $w_1, \ldots, w_T\in\Omega$ updated as $w_{t+1}= \argmin_{w\in\Omega}\big\langle w, \ellhat_t \big\rangle + \tfrac{1}{\eta}D_\psi(w, w_t)$ where $\ellhat_t$ is an estimator for $\ell_t$, $\eta$ is some learning rate, and importantly, $\psi$ is a $\nu$-self-concordant barrier for $\Omega$ which, again, always exists.
Due to space limit, we defer the definition and properties of self-concordant barriers to \pref{app:LB-pre}.
To incorporate exploration, the actual point played by the algorithm at time $t$ is $\wtilde_t = w_t + H_t^{-1/2} s_t$ where $H_t = \nabla^2\psi(w_t)$ and $s_t$ is uniformly randomly sampled from the $d$-dimensional unit sphere $\fS^d$.\footnote{%
In fact, $s_t$ can be sampled from any orthonormal basis of $\fR^d$ together with their negation.
For example, in the original \scrible, the eigenbasis of $H_t$ is used as this orthonormal basis.
The version of sampling from a unit sphere first appears in~\citep{saha2011improved}, which works more generally for convex bandits.
}
The point $\wtilde_t$ is on the boundary of the {\it Dikin ellipsoid} centered at $w_t$ (defined as $\{w: \norm{w-w_t}_{H_t}\leq1\}$) and is known to be always within $\Omega$.
Finally, the estimator $\ellhat_t$ is constructed as $d\inner{\wtilde_t, \ell_t}H_t^{1/2}s_t$, which can be computed using only the feedback $\inner{\wtilde_t, \ell_t}$ and is unbiased as one can verify.

The analysis of~\citep{abernethy2008competing} shows the following bound related to the loss estimators: $\sum_{t=1}^T \big\langle w_t - u, \ellhat_t\big\rangle \leq \otil(\frac{\nu}{\eta}+\eta d^2T)$ for any $u \in \Omega$ (that is not too close to the boundary).
Since $\E_t[\big\langle w_t - u, \ellhat_t\big\rangle] = \E_t[\big\langle \wtilde_t - u, \ell_t \big\rangle]$,
this immediately yields an expected regret bound (for an oblivious adversary).
However, to obtain a high-probability bound, one needs to consider the deviation of $\sum_{t=1}^T \big\langle w_t - u, \ellhat_t\big\rangle$ from $\sum_{t=1}^T \big\langle \wtilde_t - u, \ell_t \big\rangle$.
Applying our strengthened Freedman's inequality (\pref{thm:Freedman}) with $X_t = \big\langle \wtilde_t - u, \ell_t \big\rangle - \big\langle w_t - u, \ellhat_t\big\rangle$, with some direct calculations one can see that both the variance term $V$ and the range term $B^\star$ from the theorem are related to $\max_{t} \norm{w_t}_{H_t}$ and $\max_{t} \norm{u}_{H_t}$, both of which can be prohibitively large.
We next discuss how to control each of these two terms, leading to the two new ideas of our algorithm (see \pref{alg:linear bandit d+1}).

\setcounter{AlgoLine}{0}
\begin{algorithm}[t]
	\caption{\scrible with lifting and increasing learning rates}\label{alg:linear bandit d+1}
	\textbf{Input:} decision set $\Omega\subseteq \mathbb{R}^d$, a $\nu$-self-concordant barrier $\psi$ for $\Omega$, initial learning rate $\eta$.
    
	\textbf{Define:} increase factor $\kappa=e^{\frac{1}{100d\ln (\nu T)}}$,  normal barrier $\Psi(\dpw) = \Psi(w,b)=400\left(\psi\left(\frac{w}{b}\right)-2\nu \ln b\right)$.
	
	\textbf{Initialize:} $w_1 = \argmin_{w\in \Omega}\psi(w)$, $\dpw_1 = (w_1,1)$, $\dpH_1=\nabla^2\Psi(\dplus{w_1})$, $\eta_1=\eta$, $\calS=\{1\}$.

 	\textbf{Define:} shrunk lifted decision set $\dpOmega' = \{\dpw = (w,1): w\in \Omega, \pi_{w_1}(w)\le 1-\frac{1}{T}\}$. 
	
	\nl \For{$t=1,2,\dots, T$}{
		
		\nl Uniformly at random sample $\dps_t$ from $\big(\dpH_t^{-\frac{1}{2}}\dpe_{d+1}\big)^\perp\cap \mathbb{S}^{d+1}$. \label{line:sample}
		
		\nl Compute $\dpwtilde_t=\dpw_t+\dpH_t^{-\frac{1}{2}}\dps_t \triangleq (\wtilde_t, 1)$.  \label{line:play}
		
		\nl Play $\wtilde_t$, observe loss $\langle \wtilde_t,\ell_t\rangle$, and construct loss estimator $\dpellhat_t=d \langle \wtilde_t,\ell_t\rangle\dpH_t^{\frac{1}{2}}\dps_t$. \label{line:estimator}
				
		\nl Compute $\dpw_{t+1}=\argmin_{\dpw \in \dpOmega'} \big\langle \dpw, \dpellhat_t \big\rangle+D_{\Psi_t}(\dpw, \dpw_t)$, where $\Psi_t=\frac{1}{\eta_t}\Psi$. \label{line:OMD}
		
		\nl Compute $\dpH_{t+1}=\nabla^2\Psi(\dpw_{t+1})$.
		
		\nl \lIf {$\lambdamax(\dpH_{t+1}-\sum_{\tau\in \calS}\dpH_{\tau}) > 0$}{
			$\calS\leftarrow \calS\cup\{t+1\}$ and set $\eta_{t+1}=\eta_{t}\kappa$;
		} \label{line:check}
		\nl \lElse{
		set $\eta_{t+1}=\eta_{t}$.} 
		 
	}
\end{algorithm}

\begin{figure}
\centering
\includegraphics[trim=30 0 0 0,clip,width=12cm]{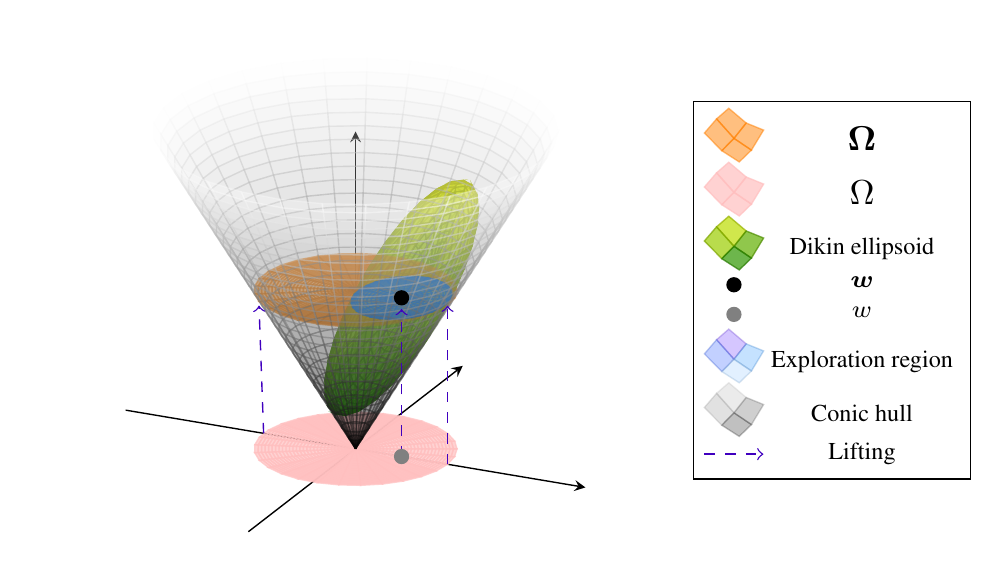}\caption{
An illustration of the concept of lifting, the conic hull, and the Dikin ellipsoid.
In this example $d$ is $2$, and the pink disk at the bottom is the original decision set $\Omega$.
The gray dot $w$ is a point in $\Omega$.
In \pref{alg:linear bandit d+1}, we lift the problem from $\fR^2$ to $\fR^3$, and obtain the lifted, orange, decision set $\dpOmega$.
For example, $w$ is lifted to the black dot $\dpw=(w,1)$.
Then we construct the conic hull of the lifted decision set, that is, the gray cone, and construct a normal barrier for this conic hull.
By \pref{lem:Dikin}, the Dikin ellipsoid centered at $\dpw$ of this normal barrier (the green ellipsoid), is alway within the cone.
In \pref{alg:linear bandit d+1}, if $\dpw$ is the OMD iterate, we explore and play an action within the intersection of $\dpOmega$ and the Dikin ellipsoid centered at $\dpw$, that is, the (boundary of) the blue ellipse.
}
\label{fig:fig1}
\end{figure}

\paragraph{Controlling $\norm{w_t}_{H_t}$.}
Readers who are familiar with self-concordant functions would quickly realize that $\norm{w_t}_{H_t} = \sqrt{w_t^\top\nabla^2\psi(w_t)w_t}$ is simply $\sqrt{\nu}$ provided that $\psi$ is also {\it logarithmically homogeneous}.
A logarithmically homogeneous self-concordant barrier is also called a {\it normal barrier} (see \pref{app:LB-pre} for formal definitions and related properties).
However, normal barriers are only defined for cones instead of convex bodies.

Inspired by this fact, we propose to {\it lift the problem to $\fR^{d+1}$}.
To make the notation clear, we use bold letters for vectors in $\fR^{d+1}$ and matrices in $\fR^{(d+1)\times(d+1)}$.
The lifting is done by operating over a lifted decision set $\dplus{\Omega}=\{\dplus{w} = (w,1) \in \fR^{d+1}: w\in \Omega\}$, that is, we append a dummy coordinate with value $1$ to all actions.
The conic hull of this set is $\calK = \{(w, b): w\in \fR^d, b \geq 0, \frac{1}{b}w \in \Omega\}$.
We then perform OMD over the lifted decision set but with a normal barrier defined over the cone $\calK$
as the regularizer to produce the sequence $\dplus{w}_1, \ldots, \dplus{w}_T$ (\pref{line:OMD}).
In particular, using the original regularizer $\psi$  we construct the normal barrier as: $\Psi(w,b)=400\left(\psi\left(\frac{w}{b}\right)-2\nu \ln b\right)$.\footnote{%
Our algorithm works with any normal barrier, not just this particular one.
We use this particular form to showcase that we only require a self-concordant barrier of the original set $\Omega$, exactly the same as \scrible. \label{fn:normal_barrier}
}
Indeed, Proposition~5.1.4 of~\citep{nesterov1994interior} asserts that this is a normal barrier for $\calK$ with self-concordant parameter $\order(\nu)$.

So far nothing really changes since $\Psi(w, 1) = 400\psi(w)$.
However, the key difference is in the way we sample the point $\dplus{\wtilde}_t$.
If we still follow \scrible to sample from the Dikin ellipsoid centered at $\dplus{w}_t$, it is possible that the sampled point leaves $\dplus{\Omega}$.
To avoid this, it is natural to sample only the intersection of the Dikin ellipsoid and $\dplus{\Omega}$ (again an ellipsoid).
Algebraically, this means setting $\dplus{\wtilde}_t = \dplus{w}_t + \dplus{H}_t^{-1/2}\dplus{s}_t$ where $\dplus{H}_t = \nabla^2\Psi(\dplus{w}_t)$ and $\dplus{s}_t$ is sampled uniformly at random from 
$(\dplus{H}_t^{-1/2}\dpe_{d+1})^\perp\cap \mathbb{S}^{d+1}$ ($v^\perp$ is the space orthogonal to $v$). 
Indeed, since $\dplus{s}_t$ is orthogonal to $\dplus{H}_t^{-1/2}\dpe_{d+1}$, the last coordinate of $\dplus{H}_t^{-1/2}\dplus{s}_t$ is zero, making $\dplus{\wtilde}_t=(\wtilde_t, 1)$ stay in $\dplus{\Omega}$.
See \pref{line:sample} and \pref{line:play}. 
To sample $\dplus{s}_t$ efficiently, one can either sample a vector uniformly randomly from $\mathbb{S}^{d+1}$, project it onto the subspace perpendicular to $\dplus{H}_{t}^{-1/2}\dpe_{d+1}$, and then normalize; or sample a vector $s_t$ uniformly randomly from $\mathbb{S}^{d}$, then normalize $\bm{H}_t^{\frac{1}{2}}(s_t^\top,0)^\top$ to obtain $\bm{s}_t$.

Finally, after playing $\wtilde_t$  and observing $\inner{\wtilde_t, \ell_t}$, we construct the loss estimator the same way as \scrible: $\dplus{\ellhat}_t = d\inner{\wtilde_t, \ell_t}\dplus{H}_t^{1/2}\dplus{s}_t$ (\pref{line:estimator}).
\pref{lem:LB-unbiased} shows that the first $d$ coordinates of $\dplus{\ellhat}_t$ is indeed an unbiased estimator of $\ell_t$.
This makes the entire analysis of \scrible hold in $\fR^{d+1}$,
but now the key term $\norm{\dplus{w}_t}_{\dplus{H}_t}$ we want to control is {\it exactly} $20\sqrt{2\nu}$ (see \pref{lem:prop5-1-4} and \pref{lem:NBprpty})!

We provide an illustration of the lifting idea in \pref{fig:fig1}.
One might ask whether this lifting is necessary;
indeed, one can also spell out the algorithm in $\fR^d$ (see \pref{app:alg-d}).
Importantly, compared to \scrible, the key difference is still that the sampling scheme has changed: 
the sampled point is not necessarily on the Dikin ellipsoid with respect to $\psi$.
In other words, another view of our algorithm is that it is \scrible with a new sampling scheme.
We emphasize that, however, it is important (or at least much cleaner) to perform the analysis in $\fR^{d+1}$.
In fact, even in \pref{alg:mab} for MAB, similar lifting implicitly happens already since $\Delta_d$ is a convex body in dimension $d-1$ instead of $d$,
and log-barrier is indeed a canonical normal barrier for the positive orthant.

\paragraph{Controlling $\norm{u}_{H_t}$.}
Next, we discuss how to control the term $\norm{u}_{H_t}$, or rather $\norm{\dpu}_{\dpH_t}$ after the lifting.
This term is the analogue of $\sum_i \frac{u_i}{w_{t,i}}$ for the case of MAB, and our goal is again to cancel it with the negative term introduced by increasing the learning rate.
Indeed, a closer look at the OMD analysis reveals that increasing the learning rate at the end of time $t$ brings a negative term involving $-D_\Psi(\dpu, \dpw_{t+1})$ in the regret bound.
In \pref{lem:LB-Bregman}, we show that this negative term is upper bounded by $-\norm{\dpu}_{\dpH_{t+1}}+800\nu\ln(800\nu T+1)$, making the canceling effect possible.

It just remains to figure out when to increase the learning rate and how to make sure we only increase it logarithmic (in $T$) times as in the case for MAB.
Borrowing ideas from~\pref{alg:mab}, intuitively one should increase the learning rate only when $\dpH_t$ is ``large'' enough,
but the challenge is how to measure this quantitatively.
Only looking at the eigenvalues of $\dpH_t$, a natural idea, does not work as it does not account for the fact that the directions of eigenvectors are changing over time.

Instead, we propose the following condition: at the end of time $t$, increase the learning rate by a factor of $\kappa$ if $\lambdamax(\dpH_{t+1}-\sum_{\tau\in \calS}\dpH_{\tau}) > 0$, with $\calS$ containing all the previous time steps prior to time $t$ where the learning rate was increased (\pref{line:check}).
First, note that this condition makes sure that we always have enough negative terms to cancel $\max_t\norm{\dpu}_{\dpH_t}$.
Indeed, suppose $t$ is the time with the largest $\norm{\dpu}_{\dpH_{t+1}}$.
If we have increased the learning rate at time $t$, then the introduced negative term exactly matches $\norm{\dpu}_{\dpH_{t+1}}$ as mentioned above;
otherwise, the condition did not hold and by definition we have $\norm{\dpu}_{\dpH_{t+1}} \leq \sqrt{\sum_{\tau\in\calS}\norm{\dpu}_{\dpH_s}^2} \leq \sum_{\tau\in\calS}\norm{\dpu}_{\dpH_\tau}$, meaning that the negative terms introduced in previous steps are already enough to cancel $\norm{\dpu}_{\dpH_{t+1}}$.

Second, in \pref{lem:LB-increase} we show that this schedule indeed makes sure that the learning rate is increased by only $\otil(d)$ times.
The key idea is to prove that $\det(\sum_{\tau\in \calS}\dpH_{\tau})$ is at least doubled each time we add one more time step to $\calS$.
Thus, if the eigenvalues of $\dpH_t$ are bounded, $|\calS|$ cannot be too large.
Ensuring the last fact requires a small tweak to the OMD update (\pref{line:OMD}), where we constrain the optimization over a slightly shrunk version of $\dpOmega$ defined as $\dpOmega' = \{\dpw\in\dpOmega: \pi_{\dpw_1}(\dpw)\le 1-\frac{1}{T}\}$. 
Here, $\pi$ is the Minkowsky function and we defer its formal definition to~\pref{app:LB-pre}, but intuitively $\dpOmega'$ is simply obtained by shrinking the lifted decision set by a small amount of $1/T$ with respect to the center $\dpw_1$ (which is the analogue of the truncated simplex for MAB).
This makes sure that $\dpw_t$ is never too close to the boundary, and in turn makes sure that the eigenvalues of $\dpH_t$ are bounded.

This concludes the two main new ideas of our algorithm; see \pref{alg:linear bandit d+1} for the complete pseudocode.
Clearly, our algorithm can be implemented as efficiently as the original \scrible.
The regret guarantee is summarized below.

\begin{theorem}\label{thm:LB}
	\pref{alg:linear bandit d+1} with a suitable choice of $\eta$ ensures that with probability at least $1-\delta$:
\[
\Reg = \begin{cases}
\otil\left(d^2\nu\sqrt{ T\ln\frac{1}{\delta}}+d^2\nu\ln\frac{1}{\delta}\right), &\text{against an oblivious adversary;} \\
\otil\left(d^2\nu\sqrt{dT\ln\frac{1}{\delta}}+d^3\nu\ln\frac{1}{\delta}\right), &\text{against an adaptive adversary.}
\end{cases}
\]
Moreover, if $\inner{w, \ell_t} \geq 0$ for all $w\in\Omega$ and all $t$, then $T$ in the bounds above can be replaced by $L^\star = \min_{u\in\Omega}\sum_{t=1}^T \inner{u, \ell_t}$, that is, the total loss of the best action.
\end{theorem}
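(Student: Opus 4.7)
The plan is to decompose the regret so that each piece can be bounded either by the OMD analysis or by the strengthened Freedman inequality of \pref{thm:Freedman}. Writing $\dpu = (u,1)$ so that $\dpw_t-\dpu = (w_t-u,0)$, and letting $\ellhat_t$ denote the first $d$ coordinates of $\dpellhat_t$, I would write
\[
\Reg = \underbrace{\sum_{t=1}^T \inner{\wtilde_t - w_t,\ell_t}}_{(\mathrm{I})}
     + \underbrace{\sum_{t=1}^T \inner{\dpw_t - \dpu,\dpellhat_t}}_{(\mathrm{II})}
     + \underbrace{\sum_{t=1}^T \inner{w_t - u,\ell_t - \ellhat_t}}_{(\mathrm{III})},
\]
using the identity $\inner{\dpw_t - \dpu,\dpellhat_t} = \inner{w_t-u,\ellhat_t}$ and the fact that $\ellhat_t$ is unbiased (\pref{lem:LB-unbiased}). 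Term (\textrm{II}) is the ``in-expectation'' regret, while (\textrm{I}) and (\textrm{III}) are two martingale differences that must be handled with concentration.

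For (\textrm{II}), I would apply the standard OMD analysis with time-varying learning rates to the normal barrier $\Psi$ over $\dpOmega'$, which gives an initial divergence term $O(\nu\log T/\eta)$ (using that $\dpu$ is $1/T$-interior to $\dpOmega$), a stability term $\eta\sum_t \nbr{\dpellhat_t}_{\dpH_t^{-1}}^2 = \eta d^2\sum_t \inner{\wtilde_t,\ell_t}^2$ (using $\nbr{\dps_t}=1$), and a sequence of negative terms $-\sum_{\tau\in\calS}\bigl(\tfrac{1}{\eta_\tau}-\tfrac{1}{\eta_{\tau-1}}\bigr)D_\Psi(\dpu,\dpw_\tau)$ produced every time the learning rate is increased. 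By \pref{lem:LB-Bregman} each such negative term dominates $-\nbr{\dpu}_{\dpH_\tau}$ up to $O(\nu\log(\nu T))$. For (\textrm{I}), since $\E_t[\wtilde_t]=w_t$ and $|\inner{\wtilde_t-w_t,\ell_t}|\le 2$, an Azuma-type step gives $O(\sqrt{T\log(1/\delta)})$ against an oblivious adversary; for an adaptive adversary, I would instead bound the per-step variance by $\E_t[(\ell_t,0)^\top\dpH_t^{-1/2}\dps_t\dps_t^\top\dpH_t^{-1/2}(\ell_t,0)]\le\frac{1}{d}\nbr{(\ell_t,0)}_{\dpH_t^{-1}}^2$ and use Freedman, which ultimately introduces the extra $\sqrt{d}$ appearing in the adaptive bound. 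For (\textrm{III}), I would apply \pref{thm:Freedman} with $\calF_t$-measurable range bound $B_t = O\bigl(d(\sqrt{\nu}+\nbr{\dpu}_{\dpH_t})\bigr)$ (using $\nbr{\dpw_t}_{\dpH_t}=O(\sqrt{\nu})$ from the normal-barrier property) and per-step variance $O\bigl(d\inner{\wtilde_t,\ell_t}^2(\nu+\nbr{\dpu}_{\dpH_t}^2)\bigr)$ (using $\E_t[\dps_t\dps_t^\top]\preceq \frac{1}{d}I$).

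The heart of the proof is the cancellation that removes the otherwise-prohibitive $\max_t \nbr{\dpu}_{\dpH_t}$ terms from the Freedman bound. For any $t$, either $t\in\calS$ and the negative OMD term directly accounts for $\nbr{\dpu}_{\dpH_t}$, or the increase condition failed and $\dpH_t\preceq\sum_{\tau\in\calS,\tau<t}\dpH_\tau$, giving $\nbr{\dpu}_{\dpH_t}^2\le\sum_{\tau\in\calS}\nbr{\dpu}_{\dpH_\tau}^2$. AM-GM on the $\sqrt{V\log(1/\delta)}$ term from Freedman then lets me absorb the $\nbr{\dpu}_{\dpH_t}$-dependence into the $-\tfrac{1}{\eta}\sum_{\tau\in\calS}\nbr{\dpu}_{\dpH_\tau}$ provided by OMD, since $|\calS|=\tilde O(d)$ by \pref{lem:LB-increase} (so the per-step ``budget'' for cancellation is essentially $\eta^{-1}/|\calS|$). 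What remains is the standard trade-off $\tfrac{\nu\log T}{\eta}+\eta d^2 T$ plus the Freedman concentration terms, and tuning $\eta$ then produces the claimed $\tilde O(d^2\nu\sqrt{T\log(1/\delta)})$ (oblivious) or $\tilde O(d^2\nu\sqrt{dT\log(1/\delta)})$ (adaptive) bounds.

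For the small-loss strengthening, I would observe that when $\inner{w,\ell_t}\ge 0$ the stability term is $\eta d^2\sum_t\inner{\wtilde_t,\ell_t}^2\le \eta d^2\sum_t\inner{\wtilde_t,\ell_t}\le \eta d^2(\Lstar+\Reg)$, and similarly the Freedman variance contributions scale with $\sum_t\inner{\wtilde_t,\ell_t}$ rather than $T$; solving the resulting self-bounding inequality $\Reg\le A\sqrt{\Lstar+\Reg}+B$ yields $\Reg=\tilde O(A\sqrt{\Lstar}+A^2+B)$. The main technical obstacle, I expect, is the book-keeping of the cancellation step: one must simultaneously cancel the \emph{linear} $\nbr{\dpu}_{\dpH_t}$ coming from Freedman's range with the \emph{quadratic} $\nbr{\dpu}_{\dpH_t}^2$ coming from its variance, while making sure that the constants are compatible with the $\kappa=\exp(1/(100d\ln(\nu T)))$ increase factor so that all $\eta_t$ remain within a constant factor of $\eta$ after $|\calS|=\tilde O(d)$ increases.
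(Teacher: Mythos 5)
Your decomposition is the paper's decomposition in light disguise: your terms $(\mathrm{I})+(\mathrm{III})$ sum exactly to the paper's single martingale \biasone{} $=\sum_t(\inner{\dpwtilde_t,\dpell_t}-\inner{\dpw_t,\dpellhat_t}+\inner{\dpu,\dpellhat_t-\dpell_t})$, and your term $(\mathrm{II})$ is \regterm. The key lemmas you invoke (unbiasedness, $\norm{\dpw_t}_{\dpH_t}=O(\sqrt{\nu})$ from the normal-barrier property, \pref{lem:LB-Bregman} for the negative terms, \pref{lem:LB-increase} for $|\calS|=\otil(d)$, the case analysis $t^\star\in\calS$ versus $\dpH_{t^\star}\preceq\sum_{\tau\in\calS}\dpH_\tau$, and the self-bounding inequality for the small-loss refinement) are exactly the paper's. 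Two remarks on the parts you flagged or got wrong.

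First, the genuine gap: your treatment of the adaptive adversary. The extra $\sqrt{d}$ does \emph{not} come from a sharper variance bound on term $(\mathrm{I})$. That term does not involve the comparator at all; since $\ell_t$ is $\calF_t$-measurable even for an adaptive adversary and $|\inner{\wtilde_t-w_t,\ell_t}|\le 2$, Azuma handles it identically in both cases and it contributes only a lower-order $O(\sqrt{T\ln(1/\delta)})$. The real obstruction is that against an adaptive adversary the maximizing comparator $u^\star$ (hence $\dpu$, the range $B_t=O(d(\sqrt{\nu}+\norm{\dpu}_{\dpH_t}))$, and $\Ustar$) is a random variable depending on the entire loss sequence, so the concentration in term $(\mathrm{III})$ only holds for each \emph{fixed} $u$ and cannot be applied to $u^\star$ directly. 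The paper resolves this by discretizing $\Omega$ into $(\Bomega T)^d$ points such that the regret to the nearest grid point differs by at most $d$, and taking a union bound; replacing $\delta$ by $\delta/(\Bomega T)^d$ turns $\ln(1/\delta)$ into $d\ln(\Bomega T/\delta)$, which is precisely where the extra $\sqrt{d}$ (and the extra $d$ on the lower-order term) comes from. Without some covering argument of this kind your proof only establishes the oblivious-adversary bound.

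Second, a caution on the cancellation you correctly identify as the main obstacle: do not apply AM-GM to the $\sqrt{V\ln(C/\delta)}$ term, since that leaves a contribution proportional to $\Ustar^2$, which the negative budget $-\sum_{\tau\in\calS}\norm{\dpu}_{\dpH_\tau}/(\eta d\ln(\nu T))$ — linear in $\Ustar$ — cannot absorb. The paper instead keeps the square root, writes $\sqrt{(\nu+\Ustar^2)\LTbarep}\le(\sqrt{\nu}+\Ustar)\sqrt{\LTbarep}$, factors out $(\sqrt{\nu}+\Ustar)$ from both the range term and the variance term, and chooses $\eta\lesssim 1/(d^2\ln(\nu T)\sqrt{T\ln(C/\delta)})$ so that the entire bracketed coefficient of $(\sqrt{\nu}+\Ustar)$ is nonpositive. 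With that fix, and the union bound above, your outline matches the paper's proof.
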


Our results are the first general high-probability regret guarantees achieved by an efficient algorithm (for either oblivious or adaptive adversary).
We not only achieve $\sqrt{T}$-type bounds, but also improve it to $\sqrt{\Lstar}$-type small-loss bounds, which does not exist before.
Note that the latter holds only when losses are nonnegative, which is a standard setup for small-loss bounds and is true, for instance, for all combinatorial bandit problems where $\Omega \subseteq [0,1]^d$ lives in the positive orthant.
Similarly to MAB, we can also obtain other data-dependent regret bounds by only changing the estimator to $d \langle \wtilde_t,\ell_t-m_t\rangle H_t^{1/2}s_t + m_t$ for some predictor $m_t$ (see~\citep{rakhlin2013online, bubeck2019improved}).\footnote{%
One caveat is that this requires measuring the learner's loss in terms of $\inner{w_t, \ell_t}$, as opposed to $\big\langle \wtilde_t, \ell_t \big\rangle$, since the deviation between these two is not related to $m_t$.
}

Ignoring lower order terms, our bound for oblivious adversaries is $d\sqrt{\nu}$ times worse than the expected regret of \scrible. 
For adaptive adversary, we pay extra dependence on $d$, which is standard since an extra union bound over $u$ is needed and is discussed in~\citep{abernethy2009beating} as well.
The minimax regret for adaptive adversary is still unknown.
Reducing the dependence on $d$ for both cases is a key future direction.


\section{Generalization to adversarial MDPs}
\label{sec:MDP}

Finally, we briefly discuss how to generalize \pref{alg:mab} for MAB to learning adversarial Markov Decision Processes (MDPs), leading to the first algorithm with a high-probability small-loss regret guarantee for this problem.
We consider an episodic MDP setting with finite horizon, unknown transition kernel, bandit feedback, and adversarial losses, the exact same setting as the recent work~\citep{jin2019learning} (which is the state-of-the-art for adversarial tabular MDPs; see~\citep{jin2019learning} for related work).

Specifically, the problem is parameterized by a state space $X$, an action space $A$, and an unknown transition kernel $P: X\times A \times X \rightarrow [0,1]$ with $P(x'|x,a)$ being the probability of reaching state $x'$ after taking action $a$ at state $x$.
Without loss of generality (see discussions in~\citep{jin2019learning}), the state space is assumed to be partitioned into $\Lyr+1$ layers $X_0, \ldots, X_{\Lyr}$ where $X_0 = \{x_0\}$ and $X_{\Lyr} = \{x_{\Lyr}\}$ contain only the start and end state respectively, and transitions are only possible between consecutive layers.

The learning proceeds in $T$ rounds/episodes. 
In each episode $t$,
the learner starts from state $x_0$ and decides a stochastic policy $\pi_t: X\times A \rightarrow [0,1]$, where $\pi_t(a|x)$ is the probability of selecting action $a$ at state $x$.
Simultaneously, the adversary decides a loss function $\ell_t: X\times A \rightarrow[0,1]$, with $\ell_t(x,a)$ being the loss of selecting action $a$ at state $x$.
Once again, an adaptive adversary chooses $\ell_t$ based on all learner's actions in previous episodes, while an oblivious adversary chooses $\ell_t$ only knowing the learner's algorithm.
Afterwards, the learner executes the policy in the MDP for $\Lyr$ steps and generates/observes a state-action-loss sequence $(x_0, a_0, \ell_t(x_0, a_0)), \ldots, (x_{\Lyr-1}, a_{\Lyr-1}, \ell_t(x_{\Lyr-1}, a_{\Lyr-1}))$ before reaching the final state $x_{\Lyr}$.
With a slight abuse of notation, we use $\ell_t(\pi) = \E\sbr{\sum_{k=1}^{\Lyr-1}\ell_t(x_k, a_k) \;\vert\; P, \pi}$ to denote the expected loss of executing policy $\pi$ in episode $t$.
The regret of the learner is then defined as $\Reg = \sum_{t=1}^T \ell_t(\pi_t) - \min_{\pi}\sum_{t=1}^T \ell_t(\pi)$, where the min is over all possible policies.

Based on several prior works~\citep{zimin2013, rosenberg2019online}, \citet{jin2019learning} showed the deep connection between this problem and adversarial MAB.
In fact, with the help of the ``occupancy measure'' concept, this problem can be reformulated in a way that becomes very much akin to adversarial MAB and can be essentially solved using OMD with some importance-weighted estimators. 
We refer the reader to~\citep{jin2019learning} and \pref{app:MDP-pre} for details.
The algorithm of~\citep{jin2019learning} achieves $\Reg = \otil(\Lyr|X|\sqrt{|A|T})$ with high probability.

Since the problem has great similarity with MAB, the natural idea to improve the bound to a small-loss bound is to borrow techniques from MAB.
Prior to our work, obtaining high-probability small-loss bounds for MAB can only be achieved by either the implicit exploration idea from~\citep{neu2015explore} or the clipping idea from~\citep{AllenbergAuGyOt06, lykouris2018small}.
Unfortunately, in \pref{app:MDP_discussions}, we argue that neither of them works for MDPs, at least not in a direct way we can see, from perspectives of both the algorithm and the analysis.

On the other hand, our approach from \pref{alg:mab} immediately generalizes to MDPs without much effort.
Compared to the algorithm of~\citep{jin2019learning}, the only essential differences are to replace their regularizer with log-barrier and to apply a similar increasing learning rate schedule.
Due to space limit, we defer the algorithm to \pref{app:MDP-alg} and show the main theorem below.

\begin{theorem}\label{thm:MDP}
	\pref{alg:MDP} with a suitable choice of $\eta$ ensures that with probability at least $1-\delta$, $\Reg = 
\otil\left(|X|\sqrt{\Lyr|A| L^{\star}\ln \frac{1}{\delta}}+|X|^5 |A|^2\ln^2 \frac{1}{\delta}\right)$, where $\Lstar = \min_{\pi}\sum_{t=1}^T \ell_t(\pi) \leq \Lyr T$ is the total loss of the best policy.
\end{theorem}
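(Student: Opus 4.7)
The plan is to mirror the MAB argument of \pref{thm:MAB}, but carried out in the occupancy-measure reformulation of~\citep{jin2019learning} for adversarial MDPs. First I would reinterpret the regret as linear optimization over occupancy measures $q^{P,\pi}$, perform OMD over the confidence set of measures consistent with the running transition estimate, and replace the dilated-entropy regularizer of~\citep{jin2019learning} with a coordinate-wise log-barrier $\psi_t(q) = \sum_{x,a,x'} \eta_{t,(x,a,x')}^{-1} \ln (1/q(x,a,x'))$ with individual, increasing learning rates. The increase rule is the direct analogue of \pref{alg:mab}: double a threshold $\rho_{t,(x,a,x')}$ whenever $1/q_{t+1}(x,a,x') > \rho_{t,(x,a,x')}$ and scale the corresponding $\eta_{t,(x,a,x')}$ by $\kappa = e^{1/\ln T}$, while performing OMD over a slightly truncated version of the confidence set so each coordinate is bounded below by $1/T$.

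The OMD analysis then yields, for any $u$ in the truncated constraint set,
\[
\sum_{t=1}^T \inner{q_t - u, \ellhat_t} \le \order\!\paren{\frac{|X|^2|A|\ln T}{\eta}} + \eta \sum_{t=1}^T \inner{q_t, \ellhat_t^{\,2}} - \frac{\inner{\rho_T, u}}{\order(\eta \ln T)},
\]
the key point being that the increasing learning rates produce the last negative term exactly as in \pref{lem:MAB_lemma}. I would then invoke the strengthened Freedman's inequality \pref{thm:Freedman} on the martingale differences $\inner{u, \ellhat_t - \ell_t}$ with $B_t = \inner{\rho_t, u}$, which is $\calF_t$-measurable, so that the resulting $\sqrt{V \ln(1/\delta)}$ and $B^\star \ln(1/\delta)$ deviation terms are absorbed by the negative $\inner{\rho_T, u}/\order(\eta \ln T)$ from the OMD bound. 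A parallel Freedman application converts the stability term $\eta \sum_t \inner{q_t, \ellhat_t^{\,2}}$ into $\otil(\eta L^\star)$ modulo additional low-order absorption, using that $\E_t[\inner{q_t, \ellhat_t^{\,2}}]$ is controlled by $\ell_t(\pi_t)$ under the true dynamics.

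The main obstacle is that the occupancy measure $q_t$ driving OMD is computed against the \emph{estimated} transition, so $\inner{q_t, \ell_t}$ differs from the learner's actual expected loss $\ell_t(\pi_t) = \inner{q^{P,\pi_t}, \ell_t}$. Following the decomposition of~\citep{jin2019learning}, I would bound this transition-estimation bias by the confidence width $|\bar P(x'|x,a) - P(x'|x,a)|$ summed along trajectories; to obtain a small-loss rather than $\sqrt T$ bound, each per-step bias contribution must be paired with a $\sqrt{\ell_t(\pi_t)}$ factor via a Bernstein-type analysis of the visitation counts, so that summing across episodes yields $\otil(|X|\sqrt{\Lyr|A| L^\star \ln(1/\delta)})$ rather than $\otil(|X|\Lyr\sqrt{|A|T})$. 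This produces a self-referential inequality of the form $\Reg \lesssim \sqrt{c(L^\star+\Reg)} + \text{lower order}$, solved by AM--GM to give $\Reg \lesssim \sqrt{c L^\star} + c$; the residual $c$ collects the $|X|^5|A|^2\ln^2(1/\delta)$ polynomial that arises from the confidence radii and visitation counts. Finally, since $L^\star$ is unknown, a standard doubling trick over $\eta$ together with a union bound over $\otil(\log T)$ epochs delivers the stated theorem.
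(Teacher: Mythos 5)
Your outline follows the paper's route: occupancy-measure reformulation, log-barrier OMD over the confidence set with individual increasing learning rates producing a negative $\inner{\rho_T,u}$ term, the strengthened Freedman inequality with $\calF_t$-measurable range to absorb the estimator variance into that negative term, and a Bernstein-type refinement of the transition-error analysis of \citet{jin2019learning} so that the confidence widths pair with $\sqrt{\ell_t(\pi_t)}$ and sum to $\otil(|X|\sqrt{\Lyr|A|L_T})$; the final quadratic in $\sqrt{L_T}$ is also as in the paper. Two steps, however, would fail as literally written. First, $\inner{u,\ellhat_t-\ell_t}$ is \emph{not} a martingale difference sequence in the MDP setting: the true visitation probability $w_t(x,a)=w^{P,\pi_t}(x,a)$ is unknown, so the importance weight must be the upper occupancy bound $\upconf_t(x,a)=\max_{\hat P\in\calP_i}w^{\hat P,\pi_t}(x,a)$, giving $\E_t[\ellhat_t(x,a)]=\frac{w_t(x,a)}{\upconf_t(x,a)}\ell_t(x,a)\neq\ell_t(x,a)$. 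You must split this term into $\inner{u,\ellhat_t-\E_t[\ellhat_t]}$ (a genuine martingale, handled by \pref{thm:Freedman}) and $\inner{u,\E_t[\ellhat_t]-\ell_t}$, which is nonpositive precisely because $\upconf_t\ge w_t$ on the high-probability event $P\in\calP_i$. If you instead normalized by the OMD iterate's marginal $\Qhat_t(x,a)$, the residual bias would involve $|w_t-\Qhat_t|/\Qhat_t$ and could not be controlled. Relatedly, the paper keys the learning-rate increase and the thresholds $\rho_t$ to $1/\upconf_t(x,a)$ (per state--action pair), not to the OMD coordinates $1/\Qhat_t(x,a,x')$, so that the negative term exactly matches the Freedman range $B_t$; your per-triple trigger can be made to work since $\Qhat_t(x,a,x')\le\upconf_t(x,a)$, but this needs to be said.

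Second, the comparator feasibility is not the trivial MAB perturbation. The constraint set at episode $t$ is $\Delta(\calP_{i_t})\cap\Omega$, and a coordinate-wise floor of $1/T$ is unachievable: an occupancy-measure entry $\Qstar(x,a,x')$ is a product of up to $\Lyr$ transition probabilities, some of which may be forced near zero under the true $P$. The paper handles this by constructing a transition $P_0\in\cap_i\calP_i$ with $P_0(x'|x,a)\ge\frac{1}{T|X|}$ (\pref{lem:p0xxa}), taking $\Qstar=(1-\frac1T)\Qpistar+\frac{1}{T|A|}\sum_a \QQ^{P_0,\pi_a}$, and using the weaker floor $1/(T^3|X|^2|A|)$ in $\Omega$; one must then verify $\Qstar\in\Delta(\calP_i)\cap\Omega$ for every epoch. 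Without this construction the OMD comparator argument, and hence the whole regret decomposition, does not go through. The remaining ingredients of your sketch (the $\sqrt{L_T}$ version of the transition-error lemma, the stability-term concentration, the doubling trick for unknown $\Lstar$) match the paper's proof.
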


We remark that our bound holds for  both oblivious and adaptive adversaries, and is the first high-probability small-loss bounds for adversarial MDPs.\footnote{%
Obtaining other data-dependent regret bounds as in MAB and linear bandits is challenging in this case,
since there are several terms in the regret bound that are naturally only related to $\Lstar$.
}
This matches the bound of~\citep{jin2019learning} in the worst case (including the lower-order term $\otil(|X|^5 |A|^2)$ hidden in their proof), but could be much smaller as long as a good policy exists with $\Lstar = o(T)$. It is still open whether this bound is optimal or not. 


\section{Conclusions}
\label{sec:conclusions}

In this work, based on the idea of increasing learning rates we develop a new technique  for obtaining high-probability regret bounds against an adaptive adversary under bandit feedback, showing that sophisticated biased estimators used in previous approaches are not necessary.
We provide three examples (MAB, linear bandits, and MDPs) to show the versatility of our general approach, leading to several new algorithms and results.
Although not included in this work, we point out that our approach can also be straightforwardly applied to other problems such as semi-bandits and convex bandits, based on the algorithms from~\citep{wei2018more} and~\citep{saha2011improved} respectively,
since they are also based on log-barrier OMD or \scrible.

%

\begin{ack}
HL thanks Ashok Cutkosky and Dirk van der Hoeven for many helpful discussions on normal barriers and the lifting idea.
We are grateful for the support of NSF Awards IIS-1755781 and IIS-1943607, and a Google Faculty Research Award.
\end{ack}

\bibliography{ref.bib}
\bibliographystyle{plainnat}

\newpage
\appendix

\section{Omitted details for \pref{sec:MAB}}
\subsection{\pfref{thm:Freedman}}

First we generalize the proof of the standard Freedman's inequality 
in the following way.
For any $\lambda_t$ that is $\calF_t$-measurable and such that $\lambda_t \leq 1/B_t$, we have with $\E_t[\cdot] \triangleq \E[\cdot |\calF_t]$:
\begin{equation}\label{eq:Eexp}
\mathbb{E}_t\left[e^{\lambda_tX_t}\right]\le \mathbb{E}_t\left[1+{\lambda_tX_t}+{\lambda_t^2X_t^2}\right]=1+\lambda_t^2\mathbb{E}_t\left[X_t^2\right]\le \exp\left({\lambda_t^2\mathbb{E}_t\left[X_t^2\right]}\right).
\end{equation}
Now for any $t$ define random variable $Z_t$ such that $Z_0=1$ and 
\[
Z_t\triangleq Z_{t-1}\cdot \exp({\lambda_tX_t}-\lambda_t^2\mathbb{E}_t\left[X_t^2\right])=\exp\left(\sum_{s=1}^t \lambda_s X_s -\sum_{s=1}^t \lambda_s^2 \mathbb{E}_s\left[X_s^2\right] \right).
\]
From \pref{eq:Eexp}, we have
\[
\mathbb{E}_t\left[Z_t\right]=Z_{t-1}\cdot\exp(-\lambda_t^2\mathbb{E}_t\left[X_t^2\right])\mathbb{E}_t\left[e^{\lambda_tX_t}\right]\le Z_{t-1}\cdot\exp(-\lambda_t^2\mathbb{E}_t\left[X_t^2\right])\exp(\lambda_t^2\mathbb{E}_t\left[X_t^2\right])\le Z_{t-1}.
\]
Therefore, taking the overall expectation we have
\[
\mathbb{E}\left[Z_T\right]\le \mathbb{E}\left[Z_{T-1}\right]\le \cdots\le \mathbb{E}\left[Z_0\right] = 1.
\]
Using Markov's inequality, we have $\Pr\left[Z_T\ge \frac{1}{\delta'}\right]\le \delta'$. In other words, we have with probability at least $1-\delta'$,
    \begin{equation}\label{eq:lambda}
    \sum_{t=1}^T \lambda_t X_t \leq \ln(1/\delta')+\sum_{t=1}^T \lambda_t^2 \mathbb{E}_t\left[X_t^2\right] .
    \end{equation}
    The proof of the standard Freedman's inequality takes all $\lambda_t$ to be the same fixed value,
    while in our case it is important to apply \pref{eq:lambda} several times with different sets of values of $\lambda_t$. 
    Specifically, for each $i\in [\lceil\log_2(b^2T)\rceil]$ and $j\in [\lceil\log_2 b\rceil]$, set
    \[
    \lambda_t =  \lambda\triangleq\min \left\{2^{-j}, \sqrt{ \ln(1/\delta') / 2^i}\right\}, 
    \]
    for $t\in \mathcal{T}_{j}$, where 
    \[\mathcal{T}_{j} \triangleq \left\{t:2^{j-1} \leq \max_{s\leq t} B_s \leq 2^j\right\},\] and
$\lambda_t = 0$ otherwise. 
Clearly $\lambda_t$ is $\calF_t$-measurable (since $B_1, \ldots, B_t$ are $\calF_t$-measurable).
Applying \pref{eq:lambda} gives 
\begin{align*}
    \sum_{t\in \mathcal{T}_{j} } X_t&\le \frac{\ln(1/\delta')}{\lambda}+\sum_{t\in\mathcal{T}_{j}}\lambda\mathbb{E}_t\left[X_t^2\right]\\
    &\le 2^j{\ln(1/\delta')}+\sqrt{2^i\ln(1/\delta')}+ \lambda \sum_{t=1}^{T}\mathbb{E}_t\left[X_t^2\right]\tag{$\frac{1}{\lambda}\le\max\{2^j,\sqrt{2^i/\ln(1/\delta')}\}$}\\
    &\leq  2\left(\max_{s\in \calT_j} B_s\right)  \ln(1/\delta') + \sqrt{2^i\ln(1/\delta')} + V\sqrt{\frac{\ln(1/\delta')}{2^i}}  \tag{$2^{j-1} \leq \max_{s\in \calT_j} B_s $}\\
    &\leq  2\Bstar  \ln(1/\delta')+ \sqrt{2^i\ln(1/\delta')} + V\sqrt{\frac{\ln(1/\delta')}{2^i}}.
\end{align*}
By a union bound, the above holds with probability at least $1-C\delta'$ for all $i\in [\lceil\log_2(b^2T)\rceil]$ and $j\in [\lceil\log_2 b\rceil]$.
In particular, since $1\le V\le b^2 T$ (almost surely), there exists an $i^\star \in[\lceil\log_2(b^2 T)\rceil]$ such that $2^{i^\star-1}\le V\le 2^{i^\star}$, and thus
\begin{align*}
    \sum_{t=1}^T X_t  =  \sum_{j\in [\lceil\log_2 b\rceil]}\sum_{t\in \mathcal{T}_{j}} X_t &\le C\cdot \left(2\Bstar  \ln(1/\delta')+\sqrt{2^{i^\star}\ln(1/\delta')} + V\sqrt{\frac{\ln(1/\delta')}{2^{i^\star}}}\right) \\
    &\le C\cdot \left(2\Bstar  \ln(1/\delta')+\sqrt{2V\ln(1/\delta')} + \sqrt{V\ln(1/\delta')} \right)  \\
    &\leq C\cdot \left(2\Bstar  \ln(1/\delta')+\sqrt{8V\ln(1/\delta')}\right).
\end{align*}
Finally replacing $\delta'$ with $\delta/C$ finishes the proof.

\subsection{\pfref{lem:MAB_lemma}}\label{app:lemma2.1}
First note that $\ell_{t,i_t}=\inner{w_t,\ellhat_{t}}$.
	Using standard OMD analysis (e.g.,~\citep[Lemma~12]{agarwal2017corralling}),  we have 
\begin{align}
\ell_{t, i_t}-\inner{u,\ellhat_{t}}&\le D_{\psi_t}(u,w_t)-D_{\psi_t}(u,w_{t+1})+\sum^d_{i=1}\eta_{t,i}w_{t,i}^2\ellhat_{t,i}^2.\label{eq:DDell}
\end{align}
Summing the first two terms on the right hand side over $t$ shows (here $h(y) = y - 1 - \ln y$):
\begin{align}
	&\sum_{t=1}^T \left(D_{\psi_t}(u, w_t)-D_{\psi_t}(u,w_{t+1})\right)\nonumber\\
    &\le D_{\psi_1}(u, w_1)+\sum_{t=1}^{T-1}\left(D_{\psi_{t+1}}(u, w_{t+1})-D_{\psi_t}(u, w_{t+1})\right)\nonumber\tag{$D_{\psi_T}(u,w_{T+1})\ge 0$}\\
	&= \frac{1}{\eta}\sum^d_{j=1}h\left(\frac{u_j}{w_{1,j}}\right)+\sum_{j=1}^{d}\sum_{t=1}^{T-1}\left(\frac{1}{\eta_{t+1,j}}-\frac{1}{\eta_{t,j}}\right)h\left(\frac{u_j}{w_{t+1},j}\right) \label{eq:etah}.
\end{align}
For the first term, since $u_j\ge \frac{1}{T}$ and $w_{1,j}=\frac{1}{d}$ for each $j$, we have \[\frac{1}{\eta}\sum^d_{j=1}h\left(\frac{u_j}{w_{1,j}}\right)=\frac{1}{\eta}\sum_{j=1}^d -\ln(du_j)\le \frac{d\ln T}{\eta}.
\]
Now we analyze the second term for each $j$. 
Note that $\eta_{T,j}=\kappa^{n_j}\eta_{1,j}$ where $n_j$ is the number of times \pref{alg:mab} increases the learning rate for arm $j$.
Let $t_j$ be the time step such that $\eta_{T,j}=\eta_{t_j+1,j}=\kappa\eta_{t_j,j}$, that is, the last time step where the learning rate for arm $j$ is increased. Then we have
\begin{align*} \left(\frac{1}{\eta_{t_j+1,j}}-\frac{1}{\eta_{t_j,j}}\right)h\left(\frac{u_j}{w_{t_j+1},j}\right)&=\frac{1-\kappa}{\kappa^{n_j}\eta} h\left(\frac{u_j}{w_{t_{j}+1,j}}\right) \le \frac{-h\left(\frac{u_j}{w_{t_{j}+1,j}}\right)}{5\eta \ln T}=\frac{-h\left(\frac{u_j\rho_{T,j}}{2}\right)}{5\eta \ln T},
\end{align*}
	where we use the facts $1 - \kappa \leq -\frac{1}{\ln T}$ and $\kappa^{n_j}\leq 5$. The term  $-h\left(\frac{u_j\rho_{T,j}}{2}\right)$ is bounded by 
    $$
    -h\left(\frac{u_j\rho_{T,j}}{2}\right)= \ln \left(\frac{u_j\rho_{T,j}}{2}\right)-\frac{u_j\rho_{T,j}}{2}+1\le 1+\ln T -\frac{u_j\rho_{T,j}}{2},
    $$
    where the inequality is because $\frac{u_j\rho_{T,j}}{2}\le\frac{1}{w_{t_j+1,j}}\le T$. Plugging this result for every $j$ back to \pref{eq:etah}, we get
    \begin{align*}
    \sum_{t=1}^T D_{\psi_t}(u, w_t)-D_{\psi_t}(u,w_{t+1}) &\le \frac{d\ln T}{\eta}+\sum_{j=1}^{d}\frac{2+2\ln T -{u_j\rho_{T,j}}}{10\eta\ln T}=\order\left(\frac{d\ln T}{\eta}\right)-\frac{\inner{\rho_{T},u}}{10\eta\ln T}.
    \end{align*}
    Finally, since $\eta_{t,i}w_{t,i}^2\ellhat_{t,i}^2\le \eta_{t,i_t}\ell_{t, i_t}\le  \eta_{T,i_t}\ell_{t, i_t}\le 5 \eta\ell_{t, i_t}$, summing \pref{eq:DDell} over $t$ gives:
    \begin{align*}
    \sum^T_{t=1}\left(\ell_{t, i_t}-\inner{u,\ellhat_{t}}\right)&\le\sum^T_{t=1} \left(D_{\psi_t}(u,w_t)-D_{\psi_t}(u,w_{t+1})\right)+\sum^T_{t=1}\sum^d_{i=1}\eta_{t,i}w_{t,i}^2{\ellhat_{t,i}}^2\\
    &\le \order\left(\frac{d\ln T}{\eta}\right)-\frac{\inner{\rho_{T},u}}{10\eta\ln T}+5\eta\sum^T_{t=1}\ell_{t, i_t}\\
    &=\order\left(\frac{d\ln T}{\eta} + \eta \sum_{t=1}^T \ell_{t, i_t} \right) - \frac{\inner{\rho_T, u}}{10\eta\ln T}.
    \end{align*}
\subsection{\pfref{thm:MAB}}\label{app:theorem2.3}
    Fix any $\istar \in [d]$ and let $u=(1-\frac{d}{T})e^\star+\frac{1}{T}\bf 1$, where $e^\star$ is the one-hot vector for $\istar$. First note that
\begin{align*}
\sum^T_{t=1}\left(\ell_{t,i_t}-\ell_{t,\istar}\right)&=\sum^T_{t=1}\left(\ell_{t,i_t}-\inner{u,\hat{\ell}_{t}}\right)+\sum_{t=1}^T\inner{u,\ellhat_t-\ell_t}+\sum_{t=1}^T\inner{u-e^\star,\ell_t}\nonumber \\
&\le \sum^T_{t=1}\left(\ell_{t,i_t}-\inner{u,\hat{\ell}_{t}}\right)+\sum_{t=1}^T\inner{u,\ellhat_t-\ell_t}+d. 
\end{align*}
For the first term, using \pref{lem:MAB_lemma}, we have
\begin{align}\label{eq:mabreg}
\sum^T_{t=1}\left(\ell_{t,i_t}-\inner{u,\hat{\ell}_{t}}\right)&\le \order\left(\frac{d\ln T}{\eta}+\eta\LT\right)  -\frac{\inner{\rho_{T},u}}{10\eta\ln T},
\end{align}
where $\LT = \sum_{t=1}^T\ell_{t,i_t}$.

For the second term above, we use \pref{thm:Freedman} with $X_t=\inner{u,\ellhat_t-\ell_t}$, $B_t=\inner{\rho_{t},u}\in[1,T]$, $b=T$, and the fact
\begin{align*}
 \E_t[X_t^2] &\leq  \E_t\sbr{\inner{u,\ellhat_t}^2}
=\E_t\sbr{\frac{u_{i_t}^2\ell_{t,i_t}^2}{p_{t,i_t}^2}}  
\leq \sum_{i=1}^d u_i^2 \ell_{t,i} \rho_{T,i} \le\inner{\rho_{T},u} \inner{u, \ell_t},
\end{align*}
showing that with probability at least $1-\delta'$,
\begin{equation}\label{eq:General}
\sum^T_{t=1}\inner{u,\ellhat_t-\ell_t}\leq  C\left(\sqrt{8 L_u \inner{\rho_T,u}\ln\left(C/\delta'\right)} + 2\inner{\rho_{T},u} \ln\left(C/\delta'\right)\right),
\end{equation}
where $L_u = \inner{u, \sum^T_{t=1}\ell_{t}}$ 
and $C=\ceil{\log(b)}\ceil{\log(b^2T)}= \ceil{\log(T)}\ceil{3\log(T)}$.
With $\eta\le\frac{1}{40C\ln T\ln (C/\delta')}$, we then have with probability at least $1-\delta'$,
\begin{align*}
&\sum^T_{t=1}\ell_{t,i_t}-\ell_{t,\istar}\\
&\le\otil\left(\frac{d}{\eta}+\eta \LT\right) -\frac{\inner{\rho_{T},u}}{10\eta\ln T}+C\left(\sqrt{8L_u\inner{\rho_T,u}\ln\left(C/\delta'\right)} + 2\inner{\rho_{T},u} \ln\left(C/\delta'\right)\right)\tag{\pref{eq:mabreg} and \pref{eq:General}}\\
&\le\otil\left(\frac{d}{\eta}+\eta \LT \right) +\eta 40C^2 L_u\ln(C/\delta')\ln T -\frac{\inner{\rho_{T},u}}{20\eta\ln T}+2C\inner{\rho_{T},u}\ln (C/\delta')\tag{AM-GM inequality}\\
&\le\otil\left(\frac{d}{\eta}+\eta \LT\ln(1/\delta') + \eta L_u \ln(1/\delta')\right)\tag{$\eta<\frac{1}{40C\ln T\ln (C/\delta)}$}.
\end{align*}
Therefore, rearranging the terms, using the fact $L_u \leq  L^\star + d$, and choosing $\eta=\min\left\{\sqrt{\frac{d}{\Lstar}\ln(1/\delta')},\frac{1}{40C\ln T\ln(C/\delta')},\frac{1}{2}\right\}$, we have with probability $1-\delta'$, 
\[
\sum^T_{t=1}\ell_{t,i_t}-\ell_{t,\istar} = \otil\left(\sqrt{d\Lstar\ln(1/\delta')}+d\ln(1/\delta')\right),
\]
where $\Lstar = \sum_{t=1}^T\ell_{t,\istar}$. This finishes the proof when the adversary is oblivious. For  adaptive adversaries, taking a union bound over all possible best arms $\istar\in[d]$ and setting $\delta'=\delta/d$, we have with probability $1-\delta$,  $\Reg = \widetilde{\order}\left(\sqrt{d\Lstar\ln(d/\delta)}+d\ln(d/\delta)\right)$, finishing the proof.

\begin{remark}\emph{
Although the proof above requires tuning the initial learning rate $\eta$ in terms of the unknown quantity $\Lstar$, standard doubling trick can remove this restriction (even in the bandit setting).
We refer the reader to a recent work by~\citet{LLZ20} for detailed exposition on how to achieve so. 
}\label{rem:doubling_trick}
\end{remark}


\section{Omitted details for Section~\ref{sec:LB}}


\subsection{More explanation on \pref{alg:linear bandit d+1}}\label{app:alg-d}



Here, we provide a $d$-dimensional version of \pref{alg:linear bandit d+1} by removing the explicit lifting and performing OMD in $\fR^d$; see \pref{alg:LB_ddim}.
It is clear that this version is exactly the same as \pref{alg:linear bandit d+1}.
Compared to the original \scrible, one can see that besides the increasing learning rate schedule, the only difference is how the point $\wtilde_t$ is computed.
In particular, one can verify that $\wtilde_t$ does not necessarily satisfy $\|\wtilde_t - w_t\|_{\nabla^2\psi(w_t)}=1$, meaning that $\wtilde_t$ is not necessarily on the boundary of the Dikin ellipsoid centered at $w_t$ with respect to $\psi$.
In other words, our algorithm provides a new sampling scheme for \scrible.

\setcounter{AlgoLine}{0}
\begin{algorithm}[t]
	\caption{$d$-dimensional version of \pref{alg:linear bandit d+1}}\label{alg:linear bandit}
	\label{alg:LB_ddim}
	\textbf{Input:} decision set $\Omega\subseteq \mathbb{R}^d$, a $\nu$-self-concordant barrier $\psi(w)$ for $\Omega$, initial learning rate $\eta$.
    
	\textbf{Define:} increase factor $\kappa=e^{\frac{1}{100d\ln(\nu T)}}$, $\Psi(w,b)=400\left(\psi\left(\frac{w}{b}\right)-2\nu \ln b\right)$.
    
    \textbf{Initialize:} $w_1 = \argmin_{w\in \Omega}\psi(w)$, $\dpH_1=\nabla^2\Psi(w_1,1)$,  $\eta_1=\eta$,  $\calS=\{1\}$.
    
    \textbf{Define:} shrunk decision set $\Omega' = \{w\in \Omega$: $\pi_{w_1}(w)\le 1-\frac{1}{T}\}$, $\Jd=[I_d,\mathbf{0}_d]\in \mathbb{R}^{d\times (d+1)}$.
	
	\For{$t=1,2,\dots, T$}{
		
		\nl Uniformly at random sample $\dps_t$ from $\left(\dpH_t^{-\frac{1}{2}}e_{d+1}\right)^\perp\cap \mathbb{S}^{d+1}$.
		
		\nl Compute $\wtilde_t=w_t+\Jd \dpH_t^{-\frac{1}{2}}\dps_t$.
		
		\nl Play $\wtilde_t$, observe loss $\langle \wtilde_t,\ell_t\rangle$, and construct loss estimator $\ellhat_t=d \langle \wtilde_t,\ell_t\rangle \Jd \dpH_t^{1/2}\dps_t$.
		
		\nl Compute $w_{t+1}=\argmin_{w \in \Omega'}\left\{\left\langle w, \ellhat_{t}\right\rangle+D_{\psi_t}\left(w, w_{t}\right)\right\}$, where $\psi_t = \frac{1}{\eta_t}\psi$.
		
		\nl Compute $\dpH_{t+1}=\nabla^2\Psi(w_{t+1}, 1)$.
		
		\nl \lIf {$\lambdamax(\dpH_{t+1}-\sum_{\tau\in \calS}\dpH_{\tau}) > 0$}{
			 $\calS\leftarrow \calS\cup\{t+1\}$ and set $\eta_{t+1}=\eta_{t}\kappa$;
		}
		\nl \lElse{
		set $\eta_{t+1}=\eta_{t}$.	
		}
	}
\end{algorithm}

\subsection{Preliminary for analysis}\label{app:LB-pre}

In this section, we introduce the preliminary of self-concordant barriers and normal-barriers, including the definitions and some useful properties that will be used frequently in later analysis.

\paragraph{Self-concordant barriers.}
Let $\psi:\intO\to \fR$ be a $C^3$ smooth convex function. $\psi$ is called a self-concordant barrier on $\Omega$ if it satisfies: 
\begin{itemize}
\item $\psi(x_i)\to \infty$ as $i\to \infty$ for any sequence $x_1,x_2,\dots\in\intO\subset \fR^d$ converging to the boundary of $\Omega$;
\item for all $w\in\intO$ and $h\in\fR^d$, the following inequality always holds:
$$
\sum^d_{i=1}\sum^d_{j=1}\sum^d_{k=1}\frac{\partial^3\psi(w)}{\partial w_i\partial w_j\partial w_k}h_ih_jh_k\le 2\|h\|_{\nabla^2\psi(w)}^3.
$$

\end{itemize}
We further call $\psi$ is a $\nu$-self-concordant barrier if it satisfies the conditions above and also
$$\inner{\nabla\psi(w),h}\le \sqrt{\nu}\|h\|_{\nabla^2\psi(w)}$$
for all $w\in\intO$ and $h\in\fR^d$.
\begin{lemma}[Theorem 2.1.1 in \citep{nesterov1994interior}]\label{lem:Dikin}
If $\psi$ is a self-concordant barrier on $\Omega$, then the {Dikin ellipsoid} centered at $w\in \intO$, defined as $\{v: \norm{v-w}_{\nabla^2\psi(w)}\leq 1\}$, is always within $\Omega$. Moreover, 
\begin{equation*}
\|h\|_{\nabla^2\psi(v)} \geq\|h\|_{_{\nabla^2\psi(w)}}\left(1-\left\|v-w\right\|_{_{\nabla^2\psi(w)}}\right)
\end{equation*} holds for any $h \in \mathbb{R}^{d}$ and any $v$ with $\norm{v-w}_{\nabla^2\psi(w)}\leq 1$.
\end{lemma}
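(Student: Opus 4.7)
The plan is to reduce everything to a one-dimensional self-concordance argument along the segment from $w$ to $v$, then integrate the resulting differential inequality twice. Fix $w\in\intO$, a direction $h\in\fR^d\setminus\{0\}$, and let $s=v-w$. For $t$ in a neighborhood of $[0,1]$ on which $w+ts\in\intO$, define $\phi(t)=\|h\|_{\nabla^2\psi(w+ts)}$. Differentiating $\phi(t)^2=h^\top\nabla^2\psi(w+ts)h$ yields $2\phi(t)\phi'(t)=D^3\psi(w+ts)[s,h,h]$. The symmetric trilinear form of self-concordance, obtained from the pure cubic bound in the definition by polarization, gives
\begin{equation*}
|D^3\psi(w+ts)[s,h,h]|\leq 2\,\|s\|_{\nabla^2\psi(w+ts)}\,\|h\|_{\nabla^2\psi(w+ts)}^2.
\end{equation*}
Therefore $|\phi'(t)|\leq\|s\|_{\nabla^2\psi(w+ts)}\,\phi(t)$, which is the master one-dimensional inequality I will integrate twice.

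Applying this inequality with $h$ replaced by $s$ and writing $\tau(t)=\|s\|_{\nabla^2\psi(w+ts)}$ gives $|\tau'(t)|\leq \tau(t)^2$, equivalently $|(1/\tau)'(t)|\leq 1$. Hence $1/\tau(t)\geq 1/\tau(0)-t$, and with $r:=\|s\|_{\nabla^2\psi(w)}<1$ this rearranges to $\tau(t)\leq r/(1-tr)$ on $[0,1]$. A barrier-limit argument then closes off the containment statement: since $\psi(x)\to\infty$ as $x$ approaches $\partial\Omega$, the Hessian (and hence $\tau(t)$) must blow up as $w+ts$ approaches $\partial\Omega$; but $\tau$ is finite on $[0,1]$, so the segment stays in $\intO$. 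This yields the Dikin-ellipsoid containment for $r<1$, and the case $r=1$ (closure of the ellipsoid) follows by continuity.

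For the norm-comparison inequality, substitute $\tau(t)\leq r/(1-tr)$ into the bound $|\phi'(t)/\phi(t)|\leq\tau(t)$ and integrate over $t\in[0,1]$:
\begin{equation*}
\left|\ln\frac{\phi(1)}{\phi(0)}\right|\leq \int_0^1\frac{r}{1-tr}\,dt=-\ln(1-r).
\end{equation*}
Exponentiating the lower branch gives $\phi(1)\geq(1-r)\phi(0)$, i.e.\
\begin{equation*}
\|h\|_{\nabla^2\psi(v)}\geq\bigl(1-\|v-w\|_{\nabla^2\psi(w)}\bigr)\,\|h\|_{\nabla^2\psi(w)},
\end{equation*}
which is exactly the second claim.

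The main obstacle in this plan is the polarization step: obtaining the mixed trilinear bound on $D^3\psi[s,h,h]$ from the diagonal bound $|D^3\psi[h,h,h]|\le 2\|h\|_w^3$ in the definition requires a short but non-trivial convexity/polynomial argument (one standard route is to observe that $D^3\psi$ is a symmetric trilinear form and apply Cauchy--Schwarz after diagonalizing $\nabla^2\psi(w)$). Once that identity is in hand, every subsequent step is a routine scalar ODE calculation along the segment, and the barrier property of $\psi$ does the rest of the work for the containment claim.
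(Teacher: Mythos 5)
The paper does not actually prove \pref{lem:Dikin}: it is imported verbatim as Theorem~2.1.1 of \citet{nesterov1994interior}, so there is no in-paper argument to compare against. Your proposal reconstructs the standard textbook proof (integrate the one-dimensional self-concordance inequality along the segment twice), and it is essentially correct: the differentiation $2\phi\phi'=D^3\psi[s,h,h]$, the bound $|\tau'|\le\tau^2$ giving $\tau(t)\le r/(1-tr)$, and the integration $|\ln(\phi(1)/\phi(0))|\le-\ln(1-r)$ are all right and deliver exactly the stated Hessian comparison. Two points deserve tightening. First, the polarization step you flag as the main obstacle is indeed the only nontrivial ingredient, and your parenthetical route (``Cauchy--Schwarz after diagonalizing'') does not by itself yield the constant $2$; what is needed is the classical fact that a symmetric trilinear form on a Euclidean space attains its norm on the diagonal, i.e.\ $|B[h,h,h]|\le M\|h\|^3$ for all $h$ implies $|B[h_1,h_2,h_3]|\le M\|h_1\|\|h_2\|\|h_3\|$ (Nesterov--Nemirovskii, Appendix~1), which requires a short variational argument beyond Cauchy--Schwarz. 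Second, in the containment step the phrase ``the Hessian must blow up near $\partial\Omega$'' is not quite the right mechanism: the correct argument works on the maximal interval $[0,t^*)$ on which the segment stays in $\intO$ and uses the second-order integral representation $\psi(w+ts)=\psi(w)+t\inner{\nabla\psi(w),s}+\int_0^t(t-u)\tau(u)^2\,du$ together with the uniform bound $\tau(u)\le r/(1-ur)$ to show $\psi$ stays bounded, contradicting the barrier property if $t^*\le 1$. Both are standard repairs, so I would regard the proof as correct in substance.
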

\begin{lemma}[Theorem 2.5.1 in \citep{nesterov1994interior}]\label{lem:exist}
For any closed convex body $\Omega\subset \fR^d$, there exists an $\order(d)$-self-concordant barrier on $\Omega$.
\end{lemma}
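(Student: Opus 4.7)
The plan is to construct an explicit $\order(d)$-self-concordant barrier, namely the universal barrier of Nesterov and Nemirovski. Given a closed convex body $\Omega \subset \fR^d$, for each $x \in \intO$ let $(\Omega - x)^\circ = \{y \in \fR^d : \inner{y, z-x} \le 1 \text{ for all } z \in \Omega\}$ be the polar of $\Omega$ recentered at $x$, and define
\[
F(x) \;=\; c \cdot \ln \operatorname{vol}\!\left((\Omega - x)^\circ\right)
\]
for a universal constant $c>0$ to be fixed at the end. Intuitively, as $x \to \partial\Omega$ the polar elongates unboundedly in the outward normal direction, so $\operatorname{vol}((\Omega-x)^\circ) \to \infty$; this is what will give the barrier property.

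The structural properties are routine: $F$ is $C^3$ on $\intO$, blows up on $\partial\Omega$ by the elongation argument above, and is convex via a Brunn-Minkowski-type estimate applied to the polar body. The core of the proof is the self-concordance inequality
\[
|D^3 F(x)[h,h,h]| \;\le\; 2\,\|h\|_{\nabla^2 F(x)}^3 .
\]
To establish it, I would express the derivatives of $F$ as cumulants of the uniform probability measure $\mu_x$ on $(\Omega-x)^\circ$. Differentiating through $\operatorname{vol}((\Omega-x)^\circ) = \int_{(\Omega-x)^\circ} dy$ and translating, one checks that $\nabla^2 F(x)$ and $D^3 F(x)$ are, up to explicit factors involving $c$ and $d$, the covariance tensor and the third central-moment tensor of $\mu_x$. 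The inequality thus reduces to bounding the third cumulant of a log-concave distribution by its variance to the $3/2$ power; this follows from a Brascamp-Lieb-type estimate (or a direct integration-by-parts argument exploiting log-concavity), and $c$ is then chosen so that the constant on the right is exactly $2$.

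For the $\order(d)$ barrier parameter, I would use that $\inner{\nabla F(x), h}$ equals $c\cdot d\cdot \E_{Y\sim\mu_x}\inner{Y,h}$ (the first moment of $\mu_x$ against $h$, rescaled), so Cauchy-Schwarz in the $\mu_x$-inner-product gives $|\inner{\nabla F(x),h}| \le c\sqrt{d}\,\|h\|_{\nabla^2 F(x)}$ up to constants, yielding $\nu = \order(d)$. The main obstacle I expect is pinning down the numerical constant in the third-cumulant bound; this is the technically delicate step of the Nesterov-Nemirovski analysis and requires combining log-concavity of $\mu_x$ with careful bookkeeping through a volume integral. A sharper $\nu = d$ can later be obtained via the Bubeck-Eldan entropic barrier, but the universal barrier already suffices for the $\order(d)$ statement claimed here.
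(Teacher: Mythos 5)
The paper does not prove this lemma; it is imported verbatim as Theorem~2.5.1 of \citet{nesterov1994interior} (existence of the universal barrier), so there is no in-paper argument to compare against, and your task is really to reconstruct that external proof. Your overall plan is the standard one and is known to work, but two of your steps are not correct as written. First, the Hessian of $x\mapsto \ln\operatorname{vol}\left((\Omega-x)^\circ\right)$ is \emph{not}, up to a scalar, the covariance of the uniform measure on the polar body. The clean identification uses $\operatorname{vol}(K^\circ)=\frac{1}{d!}\int_{\fR^d}e^{-h_K(y)}\,dy$ with $h_K$ the support function, so that $F(x)=c\ln\int e^{\inner{y,x}-h_\Omega(y)}dy+\mathrm{const}$ is a log-partition function whose derivatives are exactly the cumulants of the log-concave measure $\mu_x\propto e^{\inner{y,x}-h_\Omega(y)}dy$; this $\mu_x$ is the uniform measure on $(\Omega-x)^\circ$ dilated by an independent $\mathrm{Gamma}(d+1)$ radial factor, so its covariance mixes the first and second moments of the uniform measure with different $d$-dependent coefficients rather than being a single rescaling of its covariance. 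With that correction, self-concordance does reduce to the scalar third-cumulant bound $|\E(Z-\E Z)^3|\le 2(\mathrm{Var}\,Z)^{3/2}$ for log-concave $Z$, which is a genuine known inequality (tight for the exponential law).

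Second, the barrier-parameter step is a real gap as stated: Cauchy--Schwarz only yields $|\E_{\mu_x}\inner{Y,h}|\le (\E_{\mu_x}\inner{Y,h}^2)^{1/2}$, whereas $\nu=\order(d)$ requires $(\E_{\mu_x}\inner{Y,h})^2\le \order(d)\cdot\mathrm{Var}_{\mu_x}(\inner{Y,h})$, which is false for general log-concave measures and must exploit the specific structure of $\mu_x$. The radial decomposition supplies it: writing $Y=G\cdot U$ with $G\sim\mathrm{Gamma}(d+1)$ independent of $U$ uniform on the polar, one has $\mathrm{Var}(\inner{Y,h})\ge \mathrm{Var}(G)\,(\E\inner{U,h})^2=(d+1)(\E\inner{U,h})^2$ while $(\E\inner{Y,h})^2=(d+1)^2(\E\inner{U,h})^2$, giving the ratio $d+1$ and hence $\nu=\order(d)$. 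So your conclusion is right and the architecture matches the known proof, but the second-derivative identification and the $\nu=\order(d)$ step both need these repairs before the argument closes.
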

\begin{lemma}[Corollary 2.3.1 in \citep{nesterov1994interior}]\label{lem:2-3-1}
Let $\psi$ be a self-concordant barrier for $\Omega\subset \fR^d$. Then for any $w\in\intO$ and any $h\in\Omega$ such that $w+bh\in \Omega$ for all $b\ge 0$, we have
\[
\|h\|_{\nabla^2\psi(w)}\le-\inner{\nabla\psi(w),h}.
\]
\end{lemma}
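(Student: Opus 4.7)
The plan is to reduce the claim to a one-dimensional problem by considering $\phi(b) := \psi(w+bh)$ on $b \in [0,\infty)$, and to establish $\phi'(0)+\sqrt{\phi''(0)}\le 0$, which is exactly the statement after rewriting both sides as $\langle\nabla\psi(w),h\rangle$ and $\|h\|_{\nabla^2\psi(w)}$. First I would verify that $w+bh \in \mathrm{int}(\Omega)$ for every $b\ge 0$: since $w\in\mathrm{int}(\Omega)$ and $w+2bh\in\Omega$ by hypothesis, the convex combination $w+bh=\tfrac12(w+2bh)+\tfrac12 w$ puts strictly positive weight on the interior point $w$, so it is interior. Hence $\phi\in C^3([0,\infty))$ with $\phi'(b)=\langle\nabla\psi(w+bh),h\rangle$ and $\phi''(b)=\|h\|^2_{\nabla^2\psi(w+bh)}$; WLOG $h\ne 0$, so $\phi''>0$. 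The cubic self-concordance inequality specializes to $|\phi'''|\le 2\phi''^{3/2}$, and the $\nu$-self-concordant barrier inequality specializes to $\phi'(b)\le \sqrt{\nu\phi''(b)}$.

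I would then establish two separate one-sided bounds. The first is $\phi'(b)\le 0$ for all $b\ge 0$, by contradiction: if $\phi'(b_0)>0$, then $\phi'(b)\ge \phi'(b_0)>0$ for $b\ge b_0$ by convexity, and the $\nu$-barrier bound rearranges to $\frac{d}{db}\bigl(-1/\phi'(b)\bigr)=\phi''(b)/\phi'(b)^2\ge 1/\nu$. Integrating from $b_0$ yields $1/\phi'(b)\le 1/\phi'(b_0)-(b-b_0)/\nu$, which is nonpositive at $b=b_0+\nu/\phi'(b_0)$; but $1/\phi'(b)$ is strictly positive on $[b_0,\infty)$, a contradiction.

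The second bound is $\int_0^\infty \phi''(s)\,ds\ge \sqrt{\phi''(0)}$. The standard one-dimensional consequence of $|\phi'''|\le 2\phi''^{3/2}$ is that $u(b):=1/\sqrt{\phi''(b)}$ is $1$-Lipschitz, so $u(b)\le u(0)+b$, i.e., $\phi''(b)\ge \phi''(0)/(1+b\sqrt{\phi''(0)})^2$, and an elementary substitution integrates this over $[0,\infty)$ to exactly $\sqrt{\phi''(0)}$. Combining with the first bound gives $0\ge \lim_{b\to\infty}\phi'(b)=\phi'(0)+\int_0^\infty\phi''(s)\,ds\ge \phi'(0)+\sqrt{\phi''(0)}$, which is the desired inequality.

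I expect the hardest step to be showing $\phi'(b)\le 0$ along the entire recession ray. Pure self-concordance alone is insufficient here (for instance $\phi(b)=b^2/2$ is self-concordant on $[0,\infty)$ but violates the conclusion), so the argument genuinely uses the extra $\nu$-self-concordant barrier inequality $\langle\nabla\psi,h\rangle \le \sqrt{\nu}\,\|h\|_{\nabla^2\psi}$. I would read the hypothesis ``self-concordant barrier'' in the statement as carrying this inequality for some finite $\nu$, in the sense of Nesterov--Nemirovski; the remainder is routine one-dimensional calculus.
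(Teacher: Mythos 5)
Your proof is correct, and it is essentially the standard Nesterov--Nemirovski argument for their Corollary 2.3.1; the paper itself imports this lemma without proof, so there is nothing to compare against beyond the source. The two one-dimensional ingredients are both handled properly: the blow-up-in-finite-time contradiction showing $\phi'(b)\le 0$ along the recession ray, and the Lipschitz bound on $1/\sqrt{\phi''}$ giving $\int_0^\infty\phi''\ge\sqrt{\phi''(0)}$, which combine via $0\ge\lim_{b\to\infty}\phi'(b)=\phi'(0)+\int_0^\infty\phi''$ to the stated inequality. Your closing caveat is also well taken and worth keeping: as literally written, the lemma's hypothesis ``self-concordant barrier'' (which in this paper's Appendix B.2 means only the boundary blow-up plus the third-derivative inequality) is insufficient --- e.g.\ $\psi(b)=-\ln b+b^2/2$ on $(0,\infty)$ satisfies both yet violates the conclusion --- so the finiteness of the barrier parameter $\nu$ must be assumed, as it is everywhere the lemma is invoked in the paper. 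The only cosmetic gap is the assertion ``$h\ne0$ so $\phi''>0$'': strictly, self-concordance only gives the dichotomy that $\phi''$ is either identically zero or everywhere positive along the ray, but the identically-zero case is immediate from the barrier inequality $\phi'\le\sqrt{\nu\phi''}=0$, so nothing is lost.
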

\paragraph{Normal Barriers.}
Let $K\subseteq \fR^d$ be a closed and proper convex cone and let $\theta\ge 1$. A function $\psi:\intK\to \fR$ is called a $\theta$-logarithmically homogeneous self-concordant barrier (or simply a $\theta$-normal barrier) on $K$ if it is self-concordant on $\intK$ and is logarithmically homogeneous with parameter $\theta$, which means
\[
\psi(tw)=\psi(w)-\theta \ln t,~\forall w\in\intK,~t>0.
\]

The following two lemmas show the relationship between $\theta$-normal barriers and $\theta$-self-concordant barriers.

\begin{lemma}[Corollary 2.3.2 in \citep{nesterov1994interior}]\label{lem:NBisSCB}
A $\theta$-normal barrier on $K$ is a $\theta$-self-concordant barrier on $K$.
\end{lemma}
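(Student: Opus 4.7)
The plan is to show that, beyond self-concordance (which is already part of the definition of a normal barrier), the barrier parameter condition $\inner{\nabla\psi(w),h}\le \sqrt{\theta}\|h\|_{\nabla^2\psi(w)}$ follows for free from logarithmic homogeneity. The whole argument should reduce to a one-line Cauchy--Schwarz after extracting two algebraic identities by differentiating the homogeneity relation $\psi(tw)=\psi(w)-\theta\ln t$ with respect to $t$ and $w$.

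First I would differentiate $\psi(tw)=\psi(w)-\theta\ln t$ in $w$ to get $t\,\nabla\psi(tw)=\nabla\psi(w)$, equivalently $\nabla\psi(tw)=\tfrac{1}{t}\nabla\psi(w)$. Differentiating once more in $w$ yields $\nabla^2\psi(tw)=\tfrac{1}{t^2}\nabla^2\psi(w)$. Now differentiate $\nabla\psi(tw)=\tfrac{1}{t}\nabla\psi(w)$ in $t$ and set $t=1$; this gives the key identity $\nabla^2\psi(w)\,w=-\nabla\psi(w)$. In parallel, differentiating the original scalar identity in $t$ at $t=1$ gives $\inner{\nabla\psi(w),w}=-\theta$.

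Combining the two, I compute
\[
\|w\|_{\nabla^2\psi(w)}^{2}=w^{\top}\nabla^2\psi(w)\,w=-\inner{\nabla\psi(w),w}=\theta,
\]
so $\|w\|_{\nabla^2\psi(w)}=\sqrt{\theta}$. This is exactly the quantity the excerpt later uses to say that $\|\dpw_t\|_{\dpH_t}$ equals $\sqrt{\theta}$ up to constants under the normal-barrier lifting. For the barrier parameter bound, pick any $h\in\fR^d$ and write
\[
\inner{\nabla\psi(w),h}=-\inner{\nabla^2\psi(w)\,w,\,h}=-w^{\top}\nabla^2\psi(w)\,h.
\]
Applying Cauchy--Schwarz in the inner product induced by the positive definite matrix $\nabla^2\psi(w)$ gives
\[
\bigl|w^{\top}\nabla^2\psi(w)\,h\bigr|\le \|w\|_{\nabla^2\psi(w)}\,\|h\|_{\nabla^2\psi(w)}=\sqrt{\theta}\,\|h\|_{\nabla^2\psi(w)},
\]
which establishes the barrier parameter inequality and concludes the proof.

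There is essentially no hard part: self-concordance is assumed in the definition of a normal barrier, and the only extra work is deriving the two consequences of logarithmic homogeneity. The one mild subtlety is justifying the differentiations (the definition allows them since $\psi$ is $C^3$ on $\intK$ and the cone condition $tw\in\intK$ for $t>0,\,w\in\intK$ ensures the identity is valid in a neighborhood of $t=1$), but this is a routine check rather than a real obstacle.
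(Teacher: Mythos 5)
Your proof is correct and is exactly the standard argument behind the cited Corollary 2.3.2 of Nesterov--Nemirovskii: the two identities you extract from logarithmic homogeneity ($\nabla^2\psi(w)w=-\nabla\psi(w)$ and $\|w\|_{\nabla^2\psi(w)}^2=\theta$) are precisely items 1--2 of \pref{lem:NBprpty}, and the remaining step is the Cauchy--Schwarz inequality in the Hessian inner product, as you do. The paper imports this result without proof, so there is nothing further to compare.
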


\begin{lemma}[Proposition 5.1.4 in \citep{nesterov1994interior}]\label{lem:prop5-1-4}
	Suppose $f$ is a $\theta$-self-concordant barrier on $K \subseteq \fR^d$. Then the function
	\[F(w,b)=400\left(f\left(\frac{w}{b}\right)-2\theta\ln b\right), \]
	is a $800\theta$-normal barrier for $con(K) \subseteq \fR^{d+1}$, where $con(K)=\{\mathbf{0}\}\cup\{(w,b):\frac{w}{b}\in K, w \in \fR^d, b> 0\}$ is the conic hull of $K$ lifted to $\fR^{d+1}$ (by appending $1$ to the last coordinate).
\end{lemma}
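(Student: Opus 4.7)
The plan is to verify the four defining properties of an $800\theta$-normal barrier for $F(w,b)=400(f(w/b)-2\theta\ln b)$ on the interior of $\text{con}(K)$: $C^3$-smoothness and strict convexity, divergence to $+\infty$ at the boundary, the third-order self-concordance inequality with constant $2$, and logarithmic homogeneity with parameter $800\theta$. Recall that the fourth condition together with self-concordance automatically yields the barrier parameter bound $\inner{\nabla F(\dpw),\dph}\le\sqrt{800\theta}\|\dph\|_{\nabla^2F(\dpw)}$, since any $\nu$-logarithmically homogeneous self-concordant function satisfies $\nabla^2F(\dpw)\dpw=-\nabla F(\dpw)$ and hence $\|\dpw\|_{\nabla^2F(\dpw)}^2=\nu$, after which Cauchy--Schwarz closes the loop. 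So it suffices to handle the first three items.

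The easy pieces I would dispose of first. Logarithmic homogeneity is a one-line computation: for $t>0$, $F(tw,tb)=400(f(w/b)-2\theta\ln(tb))=F(w,b)-800\theta\ln t$. Smoothness and convexity on the interior follow because $w/b$ is a smooth map onto $\text{int}(K)$ and $f$ is a self-concordant barrier, while $-2\theta\ln b$ is strictly convex in $b$; I would check that the Hessian $\nabla^2F$ is positive definite by writing it block-wise in $(w,b)$ and using the PD-ness of $\nabla^2f$ together with a Schur-complement argument. Boundary blowup also follows cleanly: any sequence $(w_n,b_n)$ in $\text{int}(\text{con}(K))$ approaching the boundary either has $b_n\to 0$ (so $-\ln b_n\to+\infty$) or the ratio $w_n/b_n$ approaches $\partial K$ (so $f(w_n/b_n)\to+\infty$); in either case $F\to+\infty$.

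The technical core is the self-concordance inequality $|D^3F(w,b)[\dph,\dph,\dph]|\le 2\|\dph\|_{\nabla^2F(w,b)}^3$ for all $\dph=(h,\beta)\in\fR^{d+1}$. The approach is a careful chain-rule calculation. Writing $u(w,b)=w/b$, I would parametrize the direction $\dph$ by the induced direction $v=\tfrac{h}{b}-\tfrac{w\beta}{b^2}$ in the original $K$. Writing out $D^kf(u)[v,\ldots,v]$ for $k=2,3$ picks up extra factors of $\beta/b$ from differentiating $v$ itself; meanwhile, the additive $-800\theta\ln b$ piece contributes pure powers of $\beta/b$ to each derivative. After expanding, one finds that both $\|\dph\|_{\nabla^2F}^2$ and $D^3F[\dph,\dph,\dph]$ decompose into a ``tangential'' part controlled by the self-concordance of $f$ at $u$ and a ``radial'' part controlled by the derivatives of $\ln b$. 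The self-concordance of $f$ gives $|D^3f(u)[v,v,v]|\le 2\|v\|_{\nabla^2f(u)}^3$, while the $\ln b$ derivatives trivially satisfy their own self-concordance. The nontrivial step is handling the cross terms mixing $v$ and $\beta/b$; this is where the factor $400$ is needed, essentially to ensure that after applying AM--GM to the cross terms the coefficient $2$ on the right-hand side is still respected. The main obstacle is precisely this bookkeeping: verifying that with the $400$ and $2\theta$ constants chosen, the cross terms absorb into the sum of cubes of the tangential and radial norms without blowing the constant.

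Once self-concordance is established, the fourth property (logarithmic homogeneity with parameter $800\theta$) is already verified, and as noted above the barrier parameter bound follows automatically, completing the proof. Alternatively, since the statement is exactly Proposition~5.1.4 of~\citep{nesterov1994interior}, the cleanest presentation is to cite that reference and summarize the three verifications above without grinding through the full third-derivative calculation.
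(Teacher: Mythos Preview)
The paper does not prove this lemma at all; it is stated as a direct citation of Proposition~5.1.4 in \citep{nesterov1994interior} with no accompanying argument. Your closing remark is therefore exactly right: the paper's ``proof'' is the citation, and your sketch goes well beyond what the paper provides.

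That said, your outline is a faithful summary of how Nesterov's original argument proceeds: logarithmic homogeneity is immediate, boundary blowup follows from the two cases $b\to 0$ or $w/b\to\partial K$, and the self-concordance inequality is the genuine technical core, where the factor $400$ is chosen so that the cross terms between the ``tangential'' direction $v=h/b-(w/b)\beta/b$ and the ``radial'' direction $\beta/b$ can be absorbed via AM--GM without violating the constant $2$. Your identification of this bookkeeping as the main obstacle is accurate. One minor caution: your Schur-complement remark for strict convexity is plausible but would need care, since the Hessian block $\partial^2_{bb}F$ mixes gradient and Hessian terms of $f$ nontrivially; in Nesterov's treatment this positive-definiteness is established jointly with the third-order inequality rather than separately.
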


Note that our regularizer $\Psi$ defined in \pref{alg:linear bandit d+1} is exactly based on this formula.
We point out that, however, our entire analysis works for any $\order(\nu)$-normal barrier $\Psi$, as we will only use the following general properties of normal barriers, instead of the concrete form of $\Psi$.
As mentioned in \pref{fn:normal_barrier}, we use this concrete formula only to emphasize that, just as \scrible, our algorithm requires only a self-concordant barrier of the original set $\Omega$.

\begin{lemma}[Proposition 2.3.4 in \citep{nesterov1994interior}]\label{lem:NBprpty}
If $\psi$ is a $\theta$-normal barrier on $K$, then we have for all $w,u\in\intK$,
\begin{enumerate}
\item $\|w\|_{\nabla^2\psi(w)}^2=w^\top\nabla^2\psi(w)w=\theta$,
\item $\nabla^2\psi(w)w=-\nabla\psi(w)$,
\item $\psi(u)\ge\psi(w)-\theta\ln\frac{-\inner{\nabla\psi(w),u}}{\theta}$.
\end{enumerate}
\end{lemma}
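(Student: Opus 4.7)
The plan is to derive all three items from the defining logarithmic-homogeneity identity $\psi(tw)=\psi(w)-\theta\ln t$, valid for every $w\in\intK$ and every $t>0$, by differentiating it in $w$ and in $t$.

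\textbf{Parts 1 and 2.} First I would differentiate $\psi(tw)=\psi(w)-\theta\ln t$ with respect to $w$: the chain rule on the left gives $t\,\nabla\psi(tw)=\nabla\psi(w)$. Differentiating this new identity in $t$ yields $\nabla\psi(tw)+t\,\nabla^2\psi(tw)\,w=0$, and setting $t=1$ produces $\nabla^2\psi(w)\,w=-\nabla\psi(w)$, which is part 2. For part 1, differentiate $\psi(tw)=\psi(w)-\theta\ln t$ directly in $t$ to obtain $\inner{\nabla\psi(tw),w}=-\theta/t$; at $t=1$ this gives $\inner{\nabla\psi(w),w}=-\theta$, and pairing with part 2 yields $w^\top\nabla^2\psi(w)\,w=-\inner{\nabla\psi(w),w}=\theta$.

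\textbf{Part 3.} This is the only step of substance. Since $K$ is a convex cone and $u,w\in\intK$, the ray $\{w+bu:b\ge 0\}$ lies in $K$. By \pref{lem:NBisSCB}, $\psi$ is a $\theta$-self-concordant barrier on $K$, so \pref{lem:2-3-1} applied with the direction $u$ gives $\|u\|_{\nabla^2\psi(w)}\le -\inner{\nabla\psi(w),u}$; in particular $-\inner{\nabla\psi(w),u}>0$, so the logarithm in the target bound is well defined. The key trick is to apply first-order convexity of $\psi$ not at $u$ itself but at $\lambda u\in\intK$ for a free parameter $\lambda>0$, yielding $\psi(\lambda u)\ge \psi(w)+\inner{\nabla\psi(w),\lambda u-w}=\psi(w)+\lambda\inner{\nabla\psi(w),u}+\theta$, where I used $\inner{\nabla\psi(w),w}=-\theta$ established above. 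Substituting logarithmic homogeneity $\psi(\lambda u)=\psi(u)-\theta\ln\lambda$ and rearranging gives the family of lower bounds $\psi(u)\ge \psi(w)+\lambda\inner{\nabla\psi(w),u}+\theta+\theta\ln\lambda$ indexed by $\lambda>0$. Optimizing, the first-order condition is $\lambda^\star=\theta/(-\inner{\nabla\psi(w),u})$, and substituting back collapses the affine and logarithmic pieces into exactly $\psi(u)\ge \psi(w)-\theta\ln\frac{-\inner{\nabla\psi(w),u}}{\theta}$.

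The main obstacle, and the only place where more than elementary calculus on the homogeneity identity is needed, is the choice of free parameter in part 3: applying convexity directly at $u$ only yields the weaker affine bound $\psi(u)\ge \psi(w)+\inner{\nabla\psi(w),u}+\theta$. Introducing the auxiliary scaling $\lambda u$, exploiting logarithmic homogeneity to re-express $\psi(\lambda u)$ in terms of $\psi(u)$, and then optimizing over $\lambda$ is what manufactures the logarithmic term in the stated bound.
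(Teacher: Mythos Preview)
Your proof is correct. The paper itself does not prove this lemma; it simply cites Proposition~2.3.4 of \cite{nesterov1994interior}, so there is no in-paper argument to compare against. Your derivation of parts~1 and~2 by differentiating the homogeneity identity is standard, and your argument for part~3---applying first-order convexity at the scaled point $\lambda u$, using homogeneity to rewrite $\psi(\lambda u)$, and then optimizing over $\lambda$---is exactly the textbook route and is fully correct. The use of \pref{lem:2-3-1} to certify $-\inner{\nabla\psi(w),u}>0$ is fine (and mirrors how the paper itself invokes that lemma elsewhere, e.g., in the proofs of \pref{lem:LB-uNorm} and \pref{lem:LB-Bregman}); to make that step airtight you are implicitly using that $\nabla^2\psi(w)$ is positive definite so that $\|u\|_{\nabla^2\psi(w)}>0$, which holds since $\psi$ is a self-concordant barrier on a domain with nonempty interior.
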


Next, we show the definition of Minkowsky functions, which is used to define the shrunk decision domain similar to the clipped simplex in multi-armed bandit setting.

\paragraph{Minkowsky functions.}
The Minkowsky function of a convex body $\Omega$ with the pole at $w\in \intO$ is a function $\pi_w:\Omega\to \fR$ defined as
\[
\pi_w(u)=\inf\left\{t>0\left|w+\frac{u-w}{t}\in \Omega\right.\right\}.
\]

The last lemma shows several useful properties using the Minkowsky function.
\begin{lemma}[Proposition 2.3.2 in \cite{nesterov1994interior}]\label{lem:MFprpty}
Let $\psi$ be a $\nu$-self-concordant barrier on $\Omega\subseteq \fR^d$ and $u,w\in\intO$. Then for any $h\in \fR^d$, we have
\begin{align*}
\|h\|_{\nabla^2\psi(u)}&\le \left(\frac{1+3{\nu}}{1-\pi_w(u)}\right)\|h\|_{\nabla^2\psi(w)},\\
|\inner{\nabla\psi(u),h}|&\le \left(\frac{\nu}{1-\pi_w(u)}\right)\|h\|_{\nabla^2\psi(w)},\\
\psi(u)-\psi(w)&\le \nu\ln\rbr{\frac{1}{1-\pi_w(u)}}.
\end{align*}
\end{lemma}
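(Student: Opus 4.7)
The plan is to lift the MAB argument in \pref{thm:MAB} to the episodic MDP setting via the occupancy-measure reformulation of~\citet{zimin2013, rosenberg2019online, jin2019learning}. In that formulation the learner maintains an occupancy measure $q_t$ over state-action-next-state triples, samples a policy consistent with $q_t$, and forms an importance-weighted estimator $\ellhat_t$ of the instantaneous loss. \pref{alg:MDP} replaces the entropic regularizer of~\citep{jin2019learning} by a coordinate-wise log-barrier with per-coordinate learning rates $\eta_{t,x,a}$, each multiplied by $\kappa=e^{1/\ln T}$ whenever the occupancy at $(x,a)$ drops below a running threshold $\rho_{t,x,a}$, exactly mirroring \pref{alg:mab}. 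As before, the goal is to exploit the induced negative term in the OMD regret bound to absorb both the variance and the range of $\ellhat_t$ when we apply \pref{thm:Freedman}.

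The first step is to prove the MDP analogue of \pref{lem:MAB_lemma}: for any comparator occupancy $q^\star$ in a suitably truncated polytope,
\begin{equation*}
\sum_t \inner{q_t - q^\star,\ellhat_t} \le \order\!\rbr{\tfrac{|X||A|\ln T}{\eta}} + \order\!\rbr{\eta\textstyle\sum_t \inner{q_t,\ellhat_t}} - \tfrac{\inner{\rho_T,q^\star}}{10\eta \ln T}.
\end{equation*}
The derivation is the same log-barrier/increasing-rate computation carried out in \pref{app:lemma2.1}, with $\Delta_d$ replaced by the (truncated) occupancy polytope and the uniform initializer replaced by the occupancy of the uniform policy; the only non-trivial bookkeeping is verifying that the threshold-doubling argument still bounds $\eta_{T,x,a}\le e^2\eta$. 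With this lemma in hand, applying \pref{thm:Freedman} to $X_t=\inner{q^\star,\ellhat_t-\ell_t}$ with $\calF_t$-measurable range $B_t=\inner{\rho_t,q^\star}$ and conditional variance bounded by $\inner{\rho_T,q^\star}\cdot\inner{q^\star,\ell_t}$, and then cancelling $\inner{\rho_T,q^\star}$ against the negative term via AM-GM, produces a deviation of order $\otil(\sqrt{L^\star\ln(1/\delta)})$, exactly as in the MAB argument.

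The main obstacle, and the only piece not obtainable by a direct lift, is the \emph{unknown} transition. Following~\UOB, I would use epoch-based empirical-Bernstein confidence sets for $P$, the upper-occupancy-bound estimator, and the standard regret decomposition into \regterm (handled above), \biasone, \bias, \biastwo, \biasthree (coming from the confidence-set slack), and \errorterm (transition error along the executed trajectory). The crucial new observation for a \emph{small-loss} bound is that every appearance of a transition-difference factor $|P-\bar P|(x'|x,a)$ in those decompositions is multiplied by a loss $\ell_t(x,a)\in[0,1]$; Cauchy--Schwarz against the Bernstein confidence width then converts $\sum_t|P-\bar P|\,\ell_t$ into something of the form $\otil(\sqrt{(\text{pigeonhole on visits})\cdot L^\star})$, producing the leading $\otil(|X|\sqrt{\Lyr|A|L^\star\ln(1/\delta)})$ term and relegating all purely combinatorial $T$-independent quantities to the lower-order $\otil(|X|^5|A|^2\ln^2(1/\delta))$ term. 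High-probability versions of the \biasone and \errorterm sums use \pref{thm:Freedman} again, but now with a fixed range since those quantities are not tied to the OMD thresholds.

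Finally, choosing $\eta=\min\{\sqrt{|X||A|/(\Lyr L^\star)\ln(1/\delta)},\ \order(1/(\ln T\ln(1/\delta)))\}$ (with a doubling trick per \pref{rem:doubling_trick} since $L^\star$ is unknown) balances the $\tfrac{|X||A|\ln T}{\eta}$ and $\eta L^\star$ terms and yields the claimed bound. Note that no union bound over the comparator is needed, because the comparator is a single occupancy measure, which is why the adaptive-adversary guarantee is automatic and the statement of the theorem does not distinguish the two cases.
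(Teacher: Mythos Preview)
Your proposal is addressed to the wrong statement. The lemma in question (\pref{lem:MFprpty}) is a purely analytic fact about $\nu$-self-concordant barriers and the Minkowsky function $\pi_w$: it bounds $\|h\|_{\nabla^2\psi(u)}$, $|\inner{\nabla\psi(u),h}|$, and $\psi(u)-\psi(w)$ in terms of the corresponding quantities at $w$ and the factor $1/(1-\pi_w(u))$. In the paper this lemma is not proved at all; it is simply cited as Proposition~2.3.2 of \citet{nesterov1994interior} and used as a black box (for instance in \pref{lem:LB-uNorm}, \pref{lem:LB-increase}, and \pref{lem:LB-Bregman}).

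What you have written instead is a proof sketch for \pref{thm:MDP}, the high-probability small-loss bound for adversarial MDPs: occupancy measures, log-barrier OMD with increasing learning rates, the strengthened Freedman inequality, confidence sets for the unknown transition, and the \errorterm/\bias/\biastwo/\biasthree/\regterm decomposition. None of this has any bearing on \pref{lem:MFprpty}. There is no mention of self-concordance, of the Minkowsky function, of Hessian norms, or of any of the three displayed inequalities. As a proof of the stated lemma, the proposal is therefore vacuous: it neither establishes the claimed bounds nor even refers to the objects appearing in them.

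If your intent was to prove \pref{thm:MDP}, you should say so explicitly and resubmit under that heading; if your intent was indeed to prove \pref{lem:MFprpty}, then the correct route is the standard self-concordance argument from \citet{nesterov1994interior} (integrate along the segment from $w$ to $u$, use the Dikin-ellipsoid comparison of Hessians, and the $\nu$-self-concordance inequality $\inner{\nabla\psi,h}\le\sqrt{\nu}\|h\|_{\nabla^2\psi}$), which the paper simply cites rather than reproduces.
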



\subsection{\pfref{thm:LB}}

To prove the theorem, we decompose the regret against any fixed $u^\star \in \Omega$ (with $\dpu^\star = (u^\star, 1) \in \dpOmega$) into the following three terms:
\begin{align}
	&\sum_{t=1}^T\inner{\wtilde_t-u^\star,\ell_t}\nonumber\\&=\sum_{t=1}^T\inner{\dpwtilde_t-\dpu^\star,\dpell_t}\tag{define $\dpell_t = (\ell_t, 0)$}\nonumber\\
	&= \underbrace{\sum_{t=1}^T\left(\inner{\dpwtilde_t,\dpell_t}-\inner{\dpw_t,\dpellhat_t} + \inner{\dpu, \dpellhat_t-\dpell_t}\right)}_{\biasone} + \underbrace{\sum_{t=1}^T\inner{\dpw_t-\dpu, \dpellhat_t}}_{\regterm}+ \sum_{t=1}^T\inner{\dpu-\dpu^\star,\dpell_t}\label{eq:decomp},
\end{align}
where $\dpu=\left(1-\frac{1}{T}\right)\cdot \dpu^\star+\frac{1}{T}\cdot \dpw_1 \in \dpOmega'$. 
Note that the last term is trivially bounded by $2$ as $\sum_{t=1}^T\inner{\dpu-\dpu^\star, \dpell_t} = \sum_{t=1}^T\inner{u-u^\star, \ell_t} = \frac{1}{T}\sum_{t=1}^T\inner{u^\star-w_1,\ell_t}\le 2$, where the last inequality is because $|\inner{w,\ell_t}|\le 1$ for all $w\in \Omega$.
In the following sections, we show how to bound other terms.
Specifically, we bound \biasone in  \pref{sec:LB_BIAS1} and \regterm in \pref{sec:LB_REG}.  Finally we prove \pref{thm:LB} in \pref{sec:thm3.1}. 

We will use the following notations in the remaining of this section (the first two are mentioned above already):
\begin{equation}\label{eq:LB_def1}
\dpell_t \triangleq (\ell_t, 0), \quad \dpu\triangleq(u,1) \triangleq \left(1-\frac{1}{T}\right)\cdot \dpu^\star+\frac{1}{T}\cdot \dpw_1 \in \dpOmega',
\quad \Ustar\triangleq\max_{t\in [T]}\|\dpu\|_{\dpH_t}, 
\end{equation}
\begin{equation}\label{eq:LB_def2}
\LT\triangleq\sum_{t=1}^T\inner{\dpwtilde_t, \dpell_t},\quad 
 \LTbar\triangleq\sum_{t=1}^T\left|\inner{\dpwtilde_t, \dpell_t}\right|, 
\end{equation}
\begin{equation}\label{eq:LB_def3}
\LTbarep\triangleq\sum_{t=1}^T\mathbb{E}_t\left[\left|\inner{\dpwtilde_t, \dpell_t}\right|\right], \quad
\Lubar \triangleq \sum_{t=1}^T\left|\inner{u,\ell_t}\right|.
\end{equation}

Before proceeding, we provide one useful lemma.

%

\begin{lemma}\label{lem:LB-uNorm}
	We have $\|\dpu\|_{\dpH_1}\le 800\nu.$
\end{lemma}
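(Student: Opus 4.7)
The plan is to apply \pref{lem:2-3-1} directly to the $800\nu$-normal barrier $\Psi$ on the cone $\calK$, leveraging the observation that the conic structure of $\calK$ makes the ``recession direction'' hypothesis of that lemma hold automatically.

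First I would explicitly compute $\nabla\Psi(\dpw_1)$ from the formula $\Psi(w,b)=400\psi(w/b)-800\nu\ln b$. Differentiating in $w$ and evaluating at $\dpw_1=(w_1,1)$ gives $400\nabla\psi(w_1)=0$, where the vanishing uses the fact that $w_1=\argmin_{w\in\Omega}\psi(w)$ lies in the interior of $\Omega$. Differentiating in $b$ at $(w_1,1)$ produces $-400\inner{\nabla\psi(w_1),w_1}-800\nu=-800\nu$. Therefore $\nabla\Psi(\dpw_1)=-800\nu\,\dpe_{d+1}$, and since the last coordinate of $\dpu$ equals $1$, I obtain the clean identity $-\inner{\nabla\Psi(\dpw_1),\dpu}=800\nu$.

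Next I would verify the ray condition $\dpw_1+b\dpu\in\calK$ for all $b\ge 0$. This is immediate: $\dpw_1\in\intK$ (since $w_1\in\intO$) and $\dpu\in\dpOmega'\subseteq\dpOmega\subseteq\calK$ (since $u\in\Omega$), and the convex cone $\calK$ is closed under nonnegative combinations of its elements. Combined with \pref{lem:NBisSCB}, which tells us that the $800\nu$-normal barrier $\Psi$ is in particular self-concordant on $\calK$, \pref{lem:2-3-1} applied with $w\leftarrow\dpw_1$ and $h\leftarrow\dpu$ yields
\begin{equation*}
\|\dpu\|_{\dpH_1}\le-\inner{\nabla\Psi(\dpw_1),\dpu}=800\nu.
\end{equation*}

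No real obstacle arises in this proof. The point worth highlighting is that the recession-direction hypothesis of \pref{lem:2-3-1} would be vacuous on a bounded convex body like $\Omega$ (forcing $h=0$), but becomes automatic after lifting to the unbounded cone $\calK$; this is one of the clean dividends of the lifting trick that \pref{alg:linear bandit d+1} is built around. The only calculation to perform carefully is $\nabla\Psi(\dpw_1)$, and even this simplifies drastically because $\nabla\psi(w_1)=0$ at the analytic center.
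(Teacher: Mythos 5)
Your proof is correct. The first half — invoking \pref{lem:2-3-1} for the barrier $\Psi$ on the cone $\calK$, with the ray condition $\dpw_1+b\dpu\in\calK$ holding automatically by conicity — is exactly the paper's argument. The second half takes a slightly different route: you evaluate $-\inner{\nabla\Psi(\dpw_1),\dpu}$ by explicitly differentiating the formula $\Psi(w,b)=400(\psi(w/b)-2\nu\ln b)$ and using $\nabla\psi(w_1)=0$ at the analytic center, obtaining the exact value $800\nu$; the paper instead combines the first-order optimality of $\dpw_1$ over $\dpOmega$ (i.e., $\inner{\nabla\Psi(\dpw_1),\dpu-\dpw_1}\ge 0$) with the normal-barrier identities of \pref{lem:NBprpty} ($\nabla^2\Psi(\dpw)\dpw=-\nabla\Psi(\dpw)$ and $\|\dpw\|_{\nabla^2\Psi(\dpw)}^2=800\nu$), which yields only the inequality $-\inner{\nabla\Psi(\dpw_1),\dpu}\le 800\nu$ but never touches the concrete form of $\Psi$. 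Your computation buys a cleaner equality and makes the role of the lifting vivid; the paper's version buys generality — it is valid for an arbitrary $800\nu$-normal barrier, consistent with its stated intent (\pref{fn:normal_barrier} and the remark in \pref{app:LB-pre}) that the analysis should depend only on general normal-barrier properties. Both arguments are sound.
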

\begin{proof}
   Clearly, for any $b>0$, we have $\dpw_1 + b\dpu$ still in the conic hull of $\dpOmega$.
	According to \pref{lem:2-3-1}, we thus have $\|\dpu\|_{\dpH_1} \le  \inner{-\nabla\Psi(\dpw_1), \dpu}$. Note that $\Psi$ is a $800\nu$-normal barrier by \pref{lem:prop5-1-4}. By the first order optimality condition of $\dpw_1$ and \pref{lem:NBprpty}, we then have
	\[0\le\inner{\nabla\Psi(\dpw_1),\dpu-\dpw_1}=\inner{\nabla\Psi(\dpw_1),\dpu}+800\nu.\]
	Combining the above gives $\|\dpu\|_{\dpH_1}\le\inner{-\nabla\Psi(\dpw_1), \dpu}\le 800\nu.$
\end{proof}


\subsubsection{Bounding \biasone}\label{sec:LB_BIAS1}
We first show that $\dpellhat_t$ is an unbiased estimator of $\dpell_t$ for the first $d$ coordinates.
\begin{lemma}\label{lem:LB-unbiased}
We have $\E_t\sbr{\dpellhat_{t,i}} = \dpell_{t,i}$ for $i \in [d]$.
\end{lemma}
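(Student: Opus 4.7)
The plan is to expand $\dpellhat_t$ by using the definition of $\dpwtilde_t$ and take expectation over $\dps_t$ (conditioned on $\calF_t$), where all the randomness comes from. Since $\dpell_t = (\ell_t, 0)$, we can write $\langle \wtilde_t, \ell_t\rangle = \langle \dpwtilde_t, \dpell_t\rangle = \langle \dpw_t, \dpell_t\rangle + \dps_t^\top \dpH_t^{-1/2}\dpell_t$, which gives
\[
\dpellhat_t = d\langle \dpw_t, \dpell_t\rangle \dpH_t^{1/2}\dps_t + d\, \dpH_t^{1/2}\dps_t\dps_t^\top \dpH_t^{-1/2}\dpell_t.
\]
So the task reduces to computing $\E_t[\dps_t]$ and $\E_t[\dps_t\dps_t^\top]$.

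For the first, I would use symmetry: the sampling set $(\dpH_t^{-1/2}\dpe_{d+1})^\perp \cap \mathbb{S}^{d+1}$ is closed under negation, so $\E_t[\dps_t]=0$ and the first summand vanishes. For the second, the set is the unit sphere of the $d$-dimensional subspace $V \triangleq (\dpH_t^{-1/2}\dpe_{d+1})^\perp$; by the standard computation for a uniform distribution on the unit sphere of a $d$-dimensional subspace, $\E_t[\dps_t\dps_t^\top] = \tfrac{1}{d}P$, where $P = I - \frac{vv^\top}{v^\top v}$ with $v \triangleq \dpH_t^{-1/2}\dpe_{d+1}$ is the orthogonal projection onto $V$.

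Substituting in, $\E_t[\dpellhat_t] = \dpH_t^{1/2}P\dpH_t^{-1/2}\dpell_t$. Using $\dpH_t^{1/2}v = \dpe_{d+1}$, a short calculation gives
\[
\E_t[\dpellhat_t] = \dpell_t - \frac{\dpe_{d+1}^\top \dpH_t^{-1}\dpell_t}{\dpe_{d+1}^\top \dpH_t^{-1}\dpe_{d+1}}\,\dpe_{d+1}.
\]
The correction term is supported only on the last coordinate, so for every $i \in [d]$ we obtain $\E_t[\dpellhat_{t,i}] = \dpell_{t,i}$, as desired.

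The main subtlety (not really a major obstacle, but the step that needs to be done carefully) is verifying that although $\dpellhat_t$ is \emph{not} unbiased for $\dpell_t$ in the $(d+1)$-th coordinate, the bias lies entirely along $\dpe_{d+1}$; this is what makes the lifting-plus-orthogonal-sampling construction work. Everything else is a direct consequence of the standard sphere-sampling identity together with the specific choice of sampling subspace that annihilates the last coordinate of $\dpH_t^{-1/2}\dps_t$ (ensuring $\dpwtilde_t\in\dpOmega$).
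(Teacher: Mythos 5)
Your proposal is correct and follows essentially the same route as the paper's proof: split off the linear term via symmetry ($\E_t[\dps_t]=0$), apply the sphere-sampling identity $\E_t[\dps_t\dps_t^\top]=\tfrac{1}{d}(\dpI - \dpv\dpv^\top)$ for the unit vector $\dpv$ along $\dpH_t^{-1/2}\dpe_{d+1}$, and observe that the resulting bias term is supported only on coordinate $d+1$. The computations match the paper's line for line (up to whether one normalizes $\dpv$ before or after forming the projection), so there is nothing to add.
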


\begin{proof}
	Let $\dpv=\frac{\dpH_t^{-1/2}e_{d+1}}{\left\|\dpH_t^{-1/2}e_{d+1}\right\|_2}$. First note that
	\begin{equation}\label{eq:orthnml}
	\E_t[\dps_t \dps_t^\top]= \frac{1}{d}\rbr{\dpI-\dpv\dpv^\top}
	\end{equation}
	by the definition of $\dps_t$. Then by the definition of $\dpellhat_t$, we have
	\begin{align*}
	\mathbb{E}_t\left[\dpellhat_t\right] &= \mathbb{E}_t\left[d \inner{\dpw_t+\dpH_t^{-\frac{1}{2}}\dps_t, \dpell_t}\cdot\dpH_t^{\frac{1}{2}}\dps_t\right] \\
	&= \mathbb{E}_t\left[d\inner{\dpw_t,\dpell_t}\cdot\dpH_t^{\frac{1}{2}}\dps_t+d\cdot\dpH_t^{\frac{1}{2}}\dps_t\inner{\dpH_t^{-\frac{1}{2}}\dps_t,\dpell_t}\right]\\
	&= d\inner{\dpw_t,\dpell_t}\cdot\dpH_t^{\frac{1}{2}}\mathbb{E}_t\left[\dps_t\right]+\mathbb{E}_t\left[d\cdot\dpH_t^{\frac{1}{2}}\dps_t\dps_t^\top\dpH_t^{-\frac{1}{2}}\dpell_t\right]\\
	&= d\cdot \dpH_t^{\frac{1}{2}}\mathbb{E}_t\left[\dps_t\dps_t^\top\right]\dpH_t^{-\frac{1}{2}}\dpell_t \tag{$\mathbb{E}_t\left[\dps_t\right]=\mathbf{0}$ by symmetry}\\
	&= \dpH_t^{\frac{1}{2}}\left(\dpI-\dpv\dpv^\top\right)\dpH_t^{-\frac{1}{2}}\dpell_t\tag{\pref{eq:orthnml}}\\
	&=\dpell_t-\frac{e_{d+1}e_{d+1}^\top \dpH_t^{-1}\dpell_t}{\left\|\dpH_t^{-1/2}e_{d+1}\right\|_2^2}.\\
	\end{align*}
	Noticing that the first $d$ coordinates of ${e_{d+1}e_{d+1}^\top \dpH_t^{-1}\dpell_t}$ are all zeros  concludes the proof.
\end{proof}
Now we are ready to bound \biasone.
\begin{lemma}\label{lem:LB-BIAS-1}
	With probability at least $1-\delta$, we have
	\begin{align*}
	\biasone&\le 161Cd\sqrt{\left(\nu+\Ustar^2\right)\LTbarep\ln(C/\delta)}+C\sqrt{32\Lubar\ln(C/\delta)}+ 64Cd\left(\sqrt{\nu}+\Ustar\right)\ln(C/\delta),
	\end{align*}
	where $C = \Theta(\ln^2(d\nu T))$.
\end{lemma}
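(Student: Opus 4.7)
The plan is to verify that $\biasone = \sum_t X_t$ is a sum of bounded martingale differences and then invoke the strengthened Freedman's inequality (\pref{thm:Freedman}). Setting $X_t = \inner{\dpwtilde_t,\dpell_t}-\inner{\dpw_t,\dpellhat_t} + \inner{\dpu, \dpellhat_t-\dpell_t}$, I first check $\E_t[X_t] = 0$. From \pref{lem:LB-unbiased} (or rather, the proof of it), $\E_t[\dpellhat_t] = \dpell_t - c_t \dpe_{d+1}$ for some scalar $c_t$. Plugging in, the two ``leakage'' contributions $-\E_t[\inner{\dpw_t,\dpellhat_t}]$ and $\E_t[\inner{\dpu,\dpellhat_t}]$ produce $c_t(\dpw_{t,d+1} - \dpu_{d+1}) = c_t(1-1) = 0$ since both $\dpw_t$ and $\dpu$ are in the lifted slice $\dpOmega$ (last coordinate $=1$). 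So $\{X_t\}$ is a martingale difference sequence.

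Next I would derive a deterministic range bound. Using $|\inner{\wtilde_t,\ell_t}|\le 1$, Cauchy--Schwarz, and the normal-barrier identity $\|\dpw_t\|_{\dpH_t}^2 = 800\nu$ (\pref{lem:NBprpty}), I get $|\inner{\dpw_t,\dpellhat_t}| \le d\|\dpw_t\|_{\dpH_t}\|\dps_t\|_2 \le 20d\sqrt{2\nu}$, and similarly $|\inner{\dpu,\dpellhat_t}|\le d\Ustar$ by definition of $\Ustar$. Combined with $|\inner{\dpwtilde_t,\dpell_t}|,|\inner{\dpu,\dpell_t}|\le 1$, this yields $|X_t|\le B^{\star} = O(d(\sqrt{\nu}+\Ustar))$, a quantity deterministic (hence $\calF_t$-measurable) that matches the third term of the claimed bound.

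The heart of the argument is controlling the conditional variance $V = \sum_t \E_t[X_t^2]$. I would expand $X_t^2$ by AM--GM into three pieces $\inner{\dpwtilde_t,\dpell_t}^2$, $\inner{\dpw_t,\dpellhat_t}^2$, and $(\inner{\dpu,\dpellhat_t-\dpell_t})^2$. For each estimator-based piece, use the identity $\E_t[\dps_t\dps_t^\top] \preceq \tfrac{1}{d}\dpI$ (from the proof of \pref{lem:LB-unbiased}) together with the crucial factor $|\inner{\wtilde_t,\ell_t}|\le 1$ to pull one factor of $|\inner{\wtilde_t,\ell_t}|$ outside. This gives for instance $\E_t[\inner{\dpw_t,\dpellhat_t}^2] \le 800 d\nu\,\E_t[|\inner{\wtilde_t,\ell_t}|]$ and $\E_t[\inner{\dpu,\dpellhat_t}^2]\le d\Ustar^2\,\E_t[|\inner{\wtilde_t,\ell_t}|]$. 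The remaining deterministic $\inner{\dpu,\dpell_t}^2 = \inner{u,\ell_t}^2 \le |\inner{u,\ell_t}|$ (using $|\inner{u,\ell_t}|\le 1$) contributes $\Lubar$ upon summing. Overall,
\[
V \;\lesssim\; d^{2}(\nu+\Ustar^{2})\,\LTbarep \;+\; \Lubar.
\]

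Finally, I would apply \pref{thm:Freedman} with $B_t \equiv B^{\star}$ and $b = O(d(\sqrt{\nu T}+\Ustar))$ (which is polynomial in $d,\nu,T$, so $C = O(\ln^2(d\nu T))$ as claimed). The deviation term $\sqrt{8V\ln(C/\delta)}$ splits by $\sqrt{a+b}\le\sqrt{a}+\sqrt{b}$ into the first two summands of the bound, while the $2 B^{\star}\ln(C/\delta)$ term contributes the third. The main obstacle I anticipate is the variance calculation: one must carefully exploit $|\inner{\wtilde_t,\ell_t}|\le 1$ to convert $\inner{\wtilde_t,\ell_t}^2$ into a single factor of $|\inner{\wtilde_t,\ell_t}|$ (so that the $\sqrt{V\ln}$ eventually yields the desired small-loss-like bound $\sqrt{\LTbarep\cdot\ln}$ instead of $\sqrt{T\cdot\ln}$), and to correctly track that the random fluctuation of $\wtilde_t$ inside $\inner{\wtilde_t,\ell_t}^2$ is coupled with $\dps_t\dps_t^\top$, which must be handled by a uniform-in-$\dps$ upper bound before taking conditional expectation.
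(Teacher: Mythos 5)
Your overall strategy — the same martingale decomposition, the same variance bound $V \lesssim d^2(\nu+\Ustar^2)\LTbarep + \Lubar$, and an appeal to \pref{thm:Freedman} — is exactly the paper's route, and your martingale and variance computations are essentially right (modulo a factor of $d$ in the intermediate claim $\E_t[\inner{\dpw_t,\dpellhat_t}^2]\le 800 d\nu\,\E_t[|\inner{\wtilde_t,\ell_t}|]$, which should be $d^2$ since the coupling with $\inner{\wtilde_t,\ell_t}^2$ forces the uniform-in-$\dps_t$ bound you mention; your final $V$ bound already has the correct $d^2$).

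There is, however, a genuine gap in how you invoke \pref{thm:Freedman}. You set $B_t\equiv B^\star = O(d(\sqrt{\nu}+\Ustar))$ and call this quantity ``deterministic (hence $\calF_t$-measurable).'' It is neither: $\Ustar = \max_{t\in[T]}\|\dpu\|_{\dpH_t}$ depends on the Hessians $\dpH_t=\nabla^2\Psi(\dpw_t)$ along the entire random trajectory of OMD iterates, so it is only measurable at the end of the horizon, not with respect to any $\calF_t$. The hypotheses of \pref{thm:Freedman} require each $B_t$ to be $\calF_t$-measurable, so your application is invalid as stated; and if you retreat to a genuinely deterministic bound you are forced to take $B_t\equiv b = O(d\nu^2 T)$ (the worst case over $\dpOmega'$), which makes the $2\Bstar\ln(C/\delta)$ term linear in $T$ and ruins the lemma. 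The correct move — and the entire reason the strengthened Freedman inequality exists in this paper — is to take the \emph{per-round predictable} envelope $B_t = 32d\sqrt{\nu} + d\|\dpu\|_{\dpH_t}$, which is $\calF_t$-measurable because $\dpw_t$ is determined before round $t$'s randomness is drawn, verify $B_t\le b=2\times 10^6 d\nu^2 T$ to fix the constant $C$, and let the theorem's conclusion produce $\Bstar=\max_t B_t = O(d(\sqrt{\nu}+\Ustar))$ only in the final high-probability bound. With that substitution the rest of your argument goes through.
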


\begin{proof}
	Define $X_t\triangleq \inner{\dpwtilde_t,\dpell_t}-\inner{\dpw_t, \dpellhat_t}+\inner{\dpu, \dpellhat_t-\dpell_t}$ and we have $\biasone = \sum_{t=1}^TX_t$. 
	The goal is to apply our strengthened Freedman's inequality \pref{thm:Freedman}.
	To this end,
 first we show $\E_t[X_t]=0$. Indeed, we have $\E_t[\dpwtilde_t] = \dpw_t$ and 
	\begin{align*}
		\mathbb{E}_t\left[X_t\right] &= \inner{w_t,\ell_t}-\inner{w_t, \ell_t} + \inner{u,\ell_t-\ell_t}-\mathbb{E}_t\left[(\dpw_{t,d+1}-\dpu_{t,d+1})\dpellhat_{t,d+1}\right]\tag{\pref{lem:LB-unbiased}}\\
		&= 0. \tag{$\dpw_{t,d+1}=\dpu_{t,d+1}=1$}
    \end{align*}
    Next, we bound $X_t$ by a $\calF_t$-measurable random variable $B_t\triangleq 32d\sqrt{\nu}+d\|\dpu\|_{\dpH_t}  $. This can be shown using the properties of a normal barrier:
    \begin{align*} 
		X_t&=\inner{\dpwtilde_t,\dpell_t}-\inner{\dpw_t, \dpellhat_t}+\inner{\dpu,\dpellhat_t-\dpell_t}\\
		&= \inner{\dpwtilde_t,\dpell_t}-\inner{\dpw_t, d\cdot \inner{\dpwtilde_t,\dpell_t}\cdot \dpH_t^{\frac{1}{2}}\dps_t} + \inner{\dpu,d\cdot \inner{\dpwtilde_t,\dpell_t}\dpH_t^{\frac{1}{2}}\dps_t-\dpell_t}\\
		&= \inner{\dpwtilde_t,\dpell_t}\left(1-d \dpw_t^\top \dpH_t^{\frac{1}{2}}\dps_t\right)+d\inner{\dpwtilde_t, \dpell_t}\dpu^\top \dpH_t^{\frac{1}{2}}\dps_t-\inner{\dpu,\dpell_t}\\
		&\le 2+d\left|\dpw_t^\top \dpH_t^{\frac{1}{2}}\dps_t\right| + d\left|\dpu^\top \dpH_t^{\frac{1}{2}}\dps_t\right| \tag{$|\inner{w, \ell_t}|\leq 1$ for any $w\in\Omega$}\\
		&\le 2+d\|\dpw_t\|_{\dpH_t}+d\|\dpu\|_{\dpH_t} \tag{by Cauchy-Schwarz inequality and $\dps_t^\top \dps_t = 1$}\\
		&\le 2+20d\sqrt{2\nu}+d\|\dpu\|_{\dpH_t} \tag{\pref{lem:prop5-1-4} and \pref{lem:NBprpty}}\\
        &\le 32d\sqrt{\nu}+d\|\dpu\|_{\dpH_t} \tag{$\nu\ge 1$}.
        \end{align*}
    Then, we show that $B_t$ is bounded by a constant $b\triangleq2\times 10^6d\nu^2T $ for all $t$:
        \begin{align*}
	B_t &\le 32d\sqrt{\nu}+d\|\dpu\|_{\dpH_1}\cdot\left(\frac{1+2400\nu}{1-\pi_{\dpw_1}(\dpw_t)}\right) \tag{\pref{lem:MFprpty}}\\
    &\le 32d\sqrt{\nu}+d\|\dpu\|_{\dpH_1}(1+2400\nu)T  \tag{$\dpw_t\in \dpOmega'$}\\
		&\le 32d\sqrt{\nu}+800d\nu(1+2400\nu)T \tag{\pref{lem:LB-uNorm}}\\
        &\le 2\times  10^6d\nu^2T\tag{$\nu\ge 1$}.
    \end{align*}   
    The last step before applying \pref{thm:Freedman} is to calculate $\E_t[X_t^2]$. We first write
        \begin{align}
		\E_t[X_t^2]&=\mathbb{E}_t\left[\left(\inner{\dpwtilde_t,\dpell_t}-\inner{\dpw_t, \dpellhat_t}+\inner{\dpu, \dpellhat_t-\dpell_t}\right)^2\right]\nonumber\\
        &\le 2\E_t\left[\left(\inner{\dpwtilde_t,\dpell_t}-\inner{\dpw_t, \dpellhat_t}\right)^2\right]+ 2\E_t\left[\inner{\dpu, \dpellhat_t-\dpell_t}^2\right]\label{eq:varX}.
        \end{align}
    The first term is bounded by:
        \begin{align*}
		&\E_t\left[\left(\inner{\dpwtilde_t,\dpell_t}-\inner{\dpw_t, \dpellhat_t}\right)^2\right]\\
		& = \mathbb{E}_t\left[\inner{\dpwtilde_t,\dpell_t}^2\left(1-\inner{\dpw_t, d\cdot \dpH_t^{\frac{1}{2}}\dps_t}\right)^2\right] \\
        &\le \mathbb{E}_t\left[ \inner{\dpwtilde_t,\dpell_t}^2\left(2d^2\left(\dpw_t^\top \dpH_t^{\frac{1}{2}}\dps_t\right)^2+2\right)\right] \\
		&\le \mathbb{E}_t\left[ \left|\inner{\dpwtilde_t,\dpell_t}\right|\left(2d^2\left(\dpw_t^\top \dpH_t^{\frac{1}{2}}\dps_t\right)^2+2\right)\right] \tag{$\inner{\dpwtilde_t,\dpell_t}\le 1$}\\
		&\le \mathbb{E}_t\left[ \left|\inner{\dpwtilde_t,\dpell_t}\right|\left(2d^2\|\dpw_t\|_{\dpH_t}^2\|\dps_t\|_2^2+2\right)\right] \tag{Cauchy-Schwarz inequality}\\
		&\le \mathbb{E}_t\left[ \left|\inner{\dpwtilde_t,\dpell_t}\right|\left(1600d^2\nu+2\right)\right] \tag{$\|\dps_t\|^2_2=1$ and \pref{lem:NBprpty}}\\
		& \le 1602d^2\nu\mathbb{E}_t\left[ \left|\inner{\dpwtilde_t,\dpell_t}\right|\right].
	\end{align*}
    Similarly, the second term is bounded by:
    \begin{align*}
    		 \E_t\left[\inner{\dpu, \dpellhat_t-\dpell_t}^2\right]& \le \mathbb{E}_t\left[ \left(-\inner{\dpu,\dpell_t}+d\inner{\dpwtilde_t, \dpell_t}\dpu^\top \dpH_t^{\frac{1}{2}}\dps_t\right)^2\right] \\
    		&\le  \mathbb{E}_t\left[2 \left|\inner{\dpu, \dpell_t}\right|+2d^2\left|\inner{\dpwtilde_t,\dpell_t}\right|\cdot \left(\dpu^\top \dpH_t^{\frac{1}{2}}\dps_t\right)^2\right] \tag{$\inner{\dpwtilde_t,\dpell_t}\le 1$}\\
    		&\le \mathbb{E}_t\left[2 \left|\inner{\dpu,\dpell_t}\right|+2d^2 \left|\inner{\dpwtilde_t, \dpell_t}\right|\cdot\|\dpu\|_{\dpH_t}^2\right].
    	\end{align*}
    Plugging these bounds to \pref{eq:varX}, we have
    $$
    \E_t[X_t^2]\le3204d^2\nu\mathbb{E}_t\left[ \left|\inner{\dpwtilde_t,\dpell_t}\right|\right]+4 \left|\inner{\dpu,\dpell_t}\right|+4d^2\mathbb{E}_t\left[ \left|\inner{\dpwtilde_t, \dpell_t}\right|\right]\|\dpu\|_{\dpH_t}^2.
    $$
    Summing over $t$ gives
    \begin{align*}
    \sum^T_{t=1}\E_t[X_t^2]&\le 3204d^2\sum^T_{t=1}\left(\nu+\|\dpu\|_{\dpH_t}^2\right)\mathbb{E}_t\left[ \left|\inner{\dpwtilde_t,\dpell_t}\right|\right]+4\sum^T_{t=1} \left|\inner{\dpu,\dpell_t}\right|\\
    &\le 3204d^2\left(\nu+\max_{t\in [T]}\|\dpu\|_{\dpH_t}^2\right)\sum^T_{t=1}\mathbb{E}_t\left[ \left|\inner{\dpwtilde_t,\dpell_t}\right|\right]+4\sum^T_{t=1} \left|\inner{\dpu,\dpell_t}\right|\\
    &= 3204d^2\left(\nu+\Ustar^2\right)\LTbarep+4\Lubar.
    \end{align*}   
	Therefore, choosing $\Bstar=32d(\sqrt{\nu}+\rho)$, $b=2\times 10^6d\nu^2 T$, $C=\ceil{\log_2 b}\ceil{\log_2 b^2T}=\Theta(\ln^2(d\nu T))$ and using \pref{thm:Freedman}, we obtain with probability $1-\delta$,
	\begin{align*}
		\sum^T_{t=1}X_t &=\sum_{t=1}^T\left(\inner{\dpwtilde_t, \dpell_t}-\inner{\dpw_t, \dpellhat_t}+\inner{\dpu,\dpellhat_t-\dpell_t}\right)\\
		&\le C\sqrt{25632d^2\left(\nu+\Ustar^2\right)\LTbarep\ln(C/\delta)+32\Lubar\ln(C/\delta)}+ 64Cd\left(\sqrt{\nu}+\Ustar\right)\ln(C/\delta).
	\end{align*}
    Finally, using $\sqrt{a+b}\le\sqrt{a}+\sqrt{b}$, the first term above is bounded by
    	\begin{align*}
    		161C\sqrt{d^2\left(\nu+\Ustar^2\right)\LTbarep\ln(C/\delta)}+C\sqrt{32\Lubar\ln(C/\delta)},
    	\end{align*}
    which finishes the proof.
\end{proof}


\subsubsection{Bounding \regterm}\label{sec:LB_REG}

The goal of this section is to prove the following bound on \regterm.

\begin{lemma}\label{lem:LB-REG} 
Let $\calS$ be its final value after running \pref{alg:linear bandit d+1} for $T$ rounds and $\calS' = \calS \setminus\{1,T+1\}$.
Then as long as  $\eta\le \frac{1}{80d}$, we have
	\begin{align*}
		\text{\regterm}\le \mathcal{\tilde{O}}\left(\frac{\nu}{\eta}\right)-\frac{\sum_{s\in \calS'}\|\dpu\|_{\dpH_s}}{5\eta\Aconst d\ln(\nu T)}+40\eta d^2\LTbar.
	\end{align*}
	for $\Aconst = 100$.
\end{lemma}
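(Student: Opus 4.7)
The plan is to follow the standard time-varying OMD recipe but carefully track the negative contributions produced by each learning rate increase. First I would apply the standard one-step OMD inequality (e.g., the textbook bound for Bregman projections) to obtain, for each round $t$,
\[
\inner{\dpw_t-\dpu,\dpellhat_t}\;\le\;D_{\Psi_t}(\dpu,\dpw_t)-D_{\Psi_t}(\dpu,\dpw_{t+1})+\eta_t\nbr{\dpellhat_t}_{\dpH_t^{-1}}^{2},
\]
which is valid as long as $\eta_t\|\dpellhat_t\|_{\dpH_t^{-1}}$ is small enough for the local-norm stability bound to kick in; this is where the hypothesis $\eta\le 1/(80d)$ enters, since $\eta_t\le \kappa^{|\calS|}\eta$ stays within a constant factor of $\eta$ once we show $|\calS|=\wt\order(d)$ via \pref{lem:LB-increase}. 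For the stability term, the explicit form $\dpellhat_t=d\inner{\wtilde_t,\ell_t}\dpH_t^{1/2}\dps_t$ gives $\|\dpellhat_t\|_{\dpH_t^{-1}}^{2}=d^2\inner{\wtilde_t,\ell_t}^2\le d^2|\inner{\wtilde_t,\ell_t}|$, so the stability sum is at most $\kappa\eta d^2\sum_t|\inner{\dpwtilde_t,\dpell_t}|\le 40\eta d^2\LTbar$, matching the last term in the claim.

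Next I would sum the Bregman-divergence telescoping part. Since $\Psi_t=\Psi/\eta_t$ and the learning rate is piecewise constant, a familiar rearrangement yields
\[
\sum_{t=1}^{T}\bigl(D_{\Psi_t}(\dpu,\dpw_t)-D_{\Psi_t}(\dpu,\dpw_{t+1})\bigr)\;\le\;D_{\Psi_1}(\dpu,\dpw_1)+\sum_{s\in\calS'}\paren{\tfrac{1}{\eta_s}-\tfrac{1}{\eta_{s-1}}}D_\Psi(\dpu,\dpw_s),
\]
because the only times when the coefficient $\tfrac{1}{\eta_t}-\tfrac{1}{\eta_{t-1}}$ is nonzero are the increase steps indexed by $\calS'$. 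The initial term $D_{\Psi_1}(\dpu,\dpw_1)$ can be bounded by $\wt\order(\nu/\eta)$ using the properties of the normal barrier $\Psi$ (Proposition 5.1.4 / \pref{lem:NBprpty}), together with the shrinkage $\dpu\in\dpOmega'$ and the Minkowsky-function bound in \pref{lem:MFprpty}, exactly as in the estimate of $\|\dpu\|_{\dpH_1}$ in \pref{lem:LB-uNorm}.

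The crucial third step is the negative contribution from each $s\in\calS'$. Here I would invoke \pref{lem:LB-Bregman}, which (as previewed in the main text) provides the lower bound $D_\Psi(\dpu,\dpw_s)\ge\|\dpu\|_{\dpH_s}-800\nu\ln(800\nu T+1)$. Combining this with the identity $\tfrac{1}{\eta_s}-\tfrac{1}{\eta_{s-1}}=\tfrac{1}{\eta_{s-1}}(\tfrac{1}{\kappa}-1)$ and the explicit value $\kappa=\exp(1/(100d\ln(\nu T)))$, a first-order Taylor estimate gives $1-\tfrac{1}{\kappa}\ge\tfrac{1}{200d\ln(\nu T)}$, so that
\[
\paren{\tfrac{1}{\eta_s}-\tfrac{1}{\eta_{s-1}}}D_\Psi(\dpu,\dpw_s)\;\le\;-\,\frac{\|\dpu\|_{\dpH_s}}{5\eta\, \Aconst\, d\ln(\nu T)}+\wt\order\!\paren{\tfrac{\nu}{\eta}\cdot\tfrac{|\calS'|}{d\ln(\nu T)}},
\]
after absorbing the $800\nu\ln(800\nu T+1)$ slack into the $\wt\order(\nu/\eta)$ bucket. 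Summing over $s\in\calS'$ produces the promised negative term, while $|\calS'|\le|\calS|=\wt\order(d)$ keeps the residue of the log-slack inside the $\wt\order(\nu/\eta)$ budget.

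The main obstacles I anticipate are bookkeeping rather than conceptual: (i) verifying that the stability inequality holds uniformly, which requires showing $\eta_t\le 5\eta$ (so that $\kappa^{|\calS|}$ stays bounded), relying on \pref{lem:LB-increase}; (ii) getting the constants right in the Taylor estimate of $1-1/\kappa$ so that the negative coefficient lines up with the $1/(5\eta\Aconst d\ln(\nu T))$ factor demanded by the statement; and (iii) keeping the slack term $800\nu\ln(800\nu T+1)$ per increase step from blowing up—this is controlled precisely because $|\calS'|$ is only $\wt\order(d)$ and $\nu\le\order(d)$, so the total slack is $\wt\order(\nu/\eta)$. Everything else is routine algebra built around these three ingredients.
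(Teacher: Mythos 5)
Your proposal follows essentially the same route as the paper's proof: the one-step OMD inequality with the local-norm stability term (whose validity under $\eta\le\frac{1}{80d}$ is exactly \pref{lem:LB-stab}, combined with $\eta_t\le 5\eta$ from \pref{lem:LB-increase}), the telescoping of the time-varying Bregman terms with $D_{\Psi_1}(\dpu,\dpw_1)=\otil(\nu/\eta)$, and the lower bound of \pref{lem:LB-Bregman} at each increase step to extract $-\|\dpu\|_{\dpH_s}$, with the $\otil(\nu)$ slack per step absorbed using $|\calS'|=\otil(d)$. The only points you leave implicit — checking $\dpu\in\dpOmega'$ so the OMD guarantee applies to this comparator, and the exact constant in the estimate of $1-1/\kappa$ (use $1-1/\kappa=(\kappa-1)/\kappa\ge \tfrac{1}{100d\ln(\nu T)\kappa}$ so that dividing by $\eta_{s-1}\kappa=\eta_s\le 5\eta$ gives precisely $\tfrac{1}{5\eta\Aconst d\ln(\nu T)}$) — are routine and you flagged them yourself.
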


To prove this lemma, we first prove three useful lemmas. The first one shows that the number of times \pref{alg:linear bandit d+1} increases the learning rate is upper bounded by $\mathcal{O}(d\log_2(d\nu T))$.

\begin{lemma}\label{lem:LB-increase}
	Assume that $T\ge 8$. Let $n$ be the number of times \pref{alg:linear bandit d+1} increases the learning rate. Then $n\le \Aconst d\log_2(\nu T)$ for $\Aconst =100$.
    Consequently, we have $\eta_t\le 5\eta$ for all $t\in[T]$.
\end{lemma}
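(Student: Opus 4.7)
The plan is to follow the determinant-growth argument suggested in the paper: a geometric lower bound on $\det\bigl(\sum_{\tau\in\calS}\dpH_\tau\bigr)$ from the increase-condition, matched against a polynomial upper bound that comes from the shrunk domain $\dpOmega'$.

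\textbf{Step 1 (Doubling lower bound).} Suppose the increase test fires at the end of round $t$, so $\lambdamax(\dpH_{t+1} - A) > 0$ where $A := \sum_{\tau\in\calS}\dpH_\tau$ is the accumulation before $t+1$ is added. Since $\dpH_1$ is positive definite (interior point of a strictly convex barrier) and all $\dpH_\tau$ are PSD, $A \succeq \dpH_1 \succ 0$ is invertible. Write
\[
\det(A + \dpH_{t+1}) = \det(A)\,\det\bigl(I + M\bigr), \qquad M := A^{-1/2}\dpH_{t+1}A^{-1/2}.
\]
The PSD eigenvalues $\mu_i$ of $M$ satisfy $\det(I+M) = \prod_i(1+\mu_i) \ge 1 + \lambdamax(M)$. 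The condition $\lambdamax(\dpH_{t+1}-A)>0$ exhibits a vector $v$ with $v^\top\dpH_{t+1}v > v^\top A v$; the substitution $w = A^{1/2}v$ converts this to $w^\top M w > \|w\|^2$, so $\lambdamax(M)>1$ and $\det(A+\dpH_{t+1}) > 2\det(A)$. Inducting on the number of additions gives $\det\bigl(\sum_{\tau\in\calS}\dpH_\tau\bigr) \ge 2^{|\calS|-1}\det(\dpH_1)$.

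\textbf{Step 2 (Polynomial upper bound).} Because $\dpw_t\in \dpOmega'$ enforces $\pi_{\dpw_1}(\dpw_t) \le 1 - 1/T$, applying \pref{lem:MFprpty} to the $800\nu$-normal barrier $\Psi$ gives, for every $h \in \fR^{d+1}$ and every $t$,
\[
\|h\|_{\dpH_t} \le (1+2400\nu)\,T\,\|h\|_{\dpH_1},
\]
i.e., $\dpH_t \preceq (1+2400\nu)^2 T^2\,\dpH_1$. Summing over $\tau\in\calS$ and taking determinants,
\[
\det\!\Bigl(\sum_{\tau\in\calS}\dpH_\tau\Bigr) \le \bigl(|\calS|\,(1+2400\nu)^2 T^2\bigr)^{d+1}\det(\dpH_1).
\]

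\textbf{Step 3 (Extract $n$).} Let $n = |\calS|-1$. Combining the two bounds and taking $\log_2$,
\[
n \le (d+1)\bigl[\log_2(n+1) + 2\log_2(1+2400\nu) + 2\log_2 T\bigr].
\]
For $T \ge 8$ and $\nu \ge 1$ one has $\log_2(\nu T)\ge 3$, and a straightforward manipulation (bounding the self-referential $\log_2(n+1)$ by $\log_2(200 d\log_2(\nu T))$ and absorbing the doubly-logarithmic term) yields the explicit estimate $n \le \Aconst\, d\log_2(\nu T)$ with $\Aconst = 100$.

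\textbf{Step 4 (Bound on $\eta_t$).} Each increase multiplies $\eta_t$ by $\kappa = \exp\!\bigl(1/(100 d\ln(\nu T))\bigr)$, so
\[
\eta_t \le \kappa^n\,\eta \le \exp\!\left(\frac{100\,d\log_2(\nu T)}{100\,d\ln(\nu T)}\right)\eta = e^{1/\ln 2}\,\eta < 5\eta.
\]

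The main obstacle is the elementary but fiddly Step~3: one must unwind the self-referential inequality $n \le (d+1)\log_2(n+1) + O(d\log(\nu T))$ into the clean form $n \le 100\,d\log_2(\nu T)$, and verify that the slack in the constants $(100-2)$ dominates all lower-order additive terms uniformly for $T \ge 8$, $\nu\ge 1$, $d \ge 1$. Steps~1, 2, and 4 are direct consequences of the determinant identity, \pref{lem:MFprpty}, and the definition of $\kappa$, respectively.
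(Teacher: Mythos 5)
Your proposal is correct and follows essentially the same route as the paper's proof: the same determinant-doubling argument from the eigenvalue test (the paper phrases it via the largest/smallest eigenvalues of $\dpA_{i-1}^{-1/2}\dpA_i\dpA_{i-1}^{-1/2}$, you via $\det(I+M)\ge 1+\lambdamax(M)$, which is equivalent), the same polynomial upper bound via \pref{lem:MFprpty} and the shrunk domain, and the same resolution of the self-referential inequality and the $\kappa^n\le e^{1/\ln 2}<5$ computation. The paper is equally terse about the final arithmetic in your Step~3, so no gap relative to its own standard.
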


\begin{proof}
Let $\calS=\{t_1,\dots,t_{n+1}\}$ be its final value after running \pref{alg:linear bandit d+1} for $T$ rounds, which means $n$ is the number of times the algorithm has increased the learning rate, $t_1 = 1$, and for $i = 2, \ldots, n+1$, $\eta_{t_i} = \eta_{t_i - 1}\kappa$ holds.
    Let $\dpA_i=\sum_{j=1}^{i}\dpH_{t_j}$. Then for any $i>1$, according to the update rule, there exists a vector $p\in \mathbb{R}^{d+1}$ such that $p^\top \dpH_{t_i} p\ge  p^\top \dpA_{i-1} p$ and thus $p^\top \dpA_i p\ge 2 p^\top \dpA_{i-1} p$.
	Since a self-concordant function is strictly convex, $\dpA_i$ is positive definite for all $i\in [n]$.
    Therefore, let $q=\dpA_{i-1}^{\frac{1}{2}}p$ and we have $q^\top \dpA_{i-1}^{-\frac{1}{2}}\dpA_i\dpA_{i-1}^{-\frac{1}{2}}q\geq2\|q\|_2^2$.
    This implies that the largest eigenvalue of $\dpA_{i-1}^{-\frac{1}{2}}\dpA_i\dpA_{i-1}^{-\frac{1}{2}}$ is at least $2$.
    Furthermore, the smallest eigenvalue of $\dpA_{i-1}^{-\frac{1}{2}}\dpA_i\dpA_{i-1}^{-\frac{1}{2}}$ is at least $1$ since
	\begin{align*}
	\dpA_{i-1}^{-\frac{1}{2}}\dpA_i\dpA_{i-1}^{-\frac{1}{2}} = \dpA_{i-1}^{-\frac{1}{2}}\left(\dpA_{i-1}+\dpH_{t_i}\right)\dpA_{i-1}^{-\frac{1}{2}} = I+\dpA_{i-1}^{-\frac{1}{2}}\dpH_{t_i}\dpA_{i-1}^{-\frac{1}{2}}\succeq I.
	\end{align*}
	Therefore, we have
	\begin{align*}
	2\le\det(\dpA_{i-1}^{-\frac{1}{2}}\dpA_i\dpA_{i-1}^{-\frac{1}{2}})=\frac{\det(\dpA_i)}{\det(\dpA_{i-1})},
	\end{align*}
	which implies that $\det(\dpA_{n+1})\ge 2^{n}\det(\dpA_1)$. 
	
	Next we show an upper bound for $\frac{\det(\dpA_{n+1})}{\det(\dpA_1)}$. Consider any $(d+1)$-dimensional unit vector $\dpr$. 
	For each $i\in [n+1]$, applying \pref{lem:MFprpty} with $h=\dpH_1^{-\frac{1}{2}}\dpr$, $u=\dpw_{t_i}$ and $w=\dpw_1$, we have ,
	\begin{align*}
		\|h\|_{\dpH_{t_i}}^2 = \dpr^\top \dpH_1^{-\frac{1}{2}}\dpH_{t_i}\dpH_1^{-\frac{1}{2}}\dpr \le \left(\frac{1+2400\nu}{1-\pi_{\dpw_1}(\dpw_{t_i})}\right)^2\|h\|_{\dpH_1}^2\le (1+2400\nu)^2T^2.
	\end{align*} 
	Taking a summation over all $i\in[n+1]$, we obtain
	\begin{align*}
	\dpr^\top \dpA_1^{-\frac{1}{2}}\dpA_{n+1}\dpA_{1}^{-\frac{1}{2}}\dpr\le (n+1)(1+2400\nu)^2T^2,
	\end{align*}
	which means that 
    \begin{align*}
    \lambdamax\left(\dpA_1^{-\frac{1}{2}}\dpA_{n+1}\dpA_{1}^{-\frac{1}{2}}\right)\le (n+1)(1+2400\nu)^2T^2,
    \end{align*}
    and thus
    \[
    \frac{\det(\dpA_{n+1})}{\det(\dpA_1)} =  \det \left(\dpA_1^{-\frac{1}{2}}\dpA_{n+1}\dpA_{1}^{-\frac{1}{2}}\right)\le \left((n+1)(1+2400\nu)^2T^2\right)^{d+1}.
     \]
    Combining with $\frac{\det(\dpA_{n+1})}{\det(\dpA_1)}\ge 2^{n}$, we have
	\begin{align*}
	n\le (d+1)\log_2 (n+1) +2(d+1)\log_2 \left((1+2400\nu)T\right)
	\leq \Aconst d\log_2(\nu T),
	\end{align*}
	for $\Aconst = 100$.
    To show that $\eta_t\le 5\eta$ for $t$, notice that $\exp({\log_2 (\nu T)}/{\ln (\nu T)})\le 5$. Therefore,
    \[
    \eta_t\le \kappa^n \eta =  \exp\left(\frac{n}{\Aconst d\ln (\nu T)}\right)\eta\le 5\eta,
    \]
    finishing the proof.
\end{proof}

The second lemma gives a lower bound of the Bregman divergence between $\dpu$ and $\dpw_{t}$, which contains an important term to cancel \biasone in later analysis.

\begin{lemma}\label{lem:LB-Bregman}		For all $t\in [T]$, $D_{\Psi}(\dpu,\dpw_t)\ge -800\nu\ln\left(800\nu T\right)-800\nu+\|\dpu\|_{\dpH_t}.$
\end{lemma}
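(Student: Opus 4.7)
The plan is to expand the Bregman divergence and apply the three properties of normal barriers from \pref{lem:NBprpty} one by one. Writing
\[
D_\Psi(\dpu,\dpw_t) = \Psi(\dpu) - \Psi(\dpw_t) - \nabla\Psi(\dpw_t)^\top(\dpu - \dpw_t),
\]
I would first use property 2 of \pref{lem:NBprpty} (i.e., $\nabla^2\Psi(\dpw_t)\dpw_t = -\nabla\Psi(\dpw_t)$) together with property 1 to replace $\nabla\Psi(\dpw_t)^\top\dpw_t$ by $-\dpw_t^\top\dpH_t\dpw_t = -800\nu$, since $\Psi$ is an $800\nu$-normal barrier by \pref{lem:prop5-1-4}. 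Second, property 3 of \pref{lem:NBprpty} gives $\Psi(\dpu) - \Psi(\dpw_t) \geq -800\nu\ln\big(A_t/(800\nu)\big)$ where $A_t \triangleq -\inner{\nabla\Psi(\dpw_t),\dpu}$. Combining these yields
\[
D_\Psi(\dpu,\dpw_t)\ \geq\ A_t - 800\nu\ln\!\frac{A_t}{800\nu} - 800\nu.
\]

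The next step is to relate $A_t$ to $\|\dpu\|_{\dpH_t}$. Since both $\dpw_t$ and $\dpu$ lie in $\dpOmega\subseteq\calK$, and $\calK$ is a convex cone, we have $\dpw_t + b\dpu\in\calK$ for every $b\geq 0$. Applying \pref{lem:2-3-1} (valid for $\Psi$ on $\calK$ via \pref{lem:NBisSCB}) therefore gives the lower bound $A_t \geq \|\dpu\|_{\dpH_t} \triangleq \rho$. Now study the scalar function $g(A) = A - 800\nu\ln(A/800\nu)$, whose derivative $1 - 800\nu/A$ shows that it attains its minimum value $800\nu$ at $A = 800\nu$ and is increasing on $[800\nu,\infty)$. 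Split into cases: if $\rho\leq 800\nu$, then $g(A_t)\geq 800\nu\geq \rho$ and the claim is immediate (the negative log term has any sign we need since $800\nu T\geq 1$); if $\rho > 800\nu$, then $A_t \geq \rho > 800\nu$ lies in the increasing region, so $g(A_t) \geq g(\rho) = \rho - 800\nu\ln(\rho/800\nu)$.

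It remains to turn $\ln(\rho/800\nu)$ into the advertised $\ln(800\nu T)$. Here is where the shrunk domain $\dpOmega'$ is used: since $\dpw_t\in\dpOmega'$ satisfies $\pi_{\dpw_1}(\dpw_t)\leq 1 - 1/T$, \pref{lem:MFprpty} applied with pole $\dpw_1$ and direction $h=\dpu$ gives
\[
\|\dpu\|_{\dpH_t}\ \leq\ \frac{1+2400\nu}{1-\pi_{\dpw_1}(\dpw_t)}\|\dpu\|_{\dpH_1}\ \leq\ (1+2400\nu)T\cdot\|\dpu\|_{\dpH_1},
\]
and \pref{lem:LB-uNorm} controls $\|\dpu\|_{\dpH_1}\leq 800\nu$. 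This yields $\rho/(800\nu) = \otil(\nu T)$, which substituted above gives $D_\Psi(\dpu,\dpw_t)\geq \rho - 800\nu - 800\nu\ln(800\nu T)$ up to the constant inside the logarithm (which is absorbed into the stated form).

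The main obstacle is the careful bookkeeping of which property of the normal barrier gives the $-800\nu$ constant versus the logarithmic term, and making sure the two ingredients that cancel (the linear term $A_t$ inside $g$ and the $-\rho$ we want to recover) come out with the right sign; this is exactly why the proof needs $\Psi$ to be a \emph{normal} (logarithmically homogeneous) barrier rather than a general self-concordant one, and why the algorithm must live on the lifted cone. The Minkowsky-function step at the end is routine once the shrunk decision set is in place.
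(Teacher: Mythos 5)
Your proof is correct and follows essentially the same route as the paper: expand the Bregman divergence, apply the three normal-barrier properties of \pref{lem:NBprpty} to get $D_\Psi(\dpu,\dpw_t)\ge A_t - 800\nu\ln(A_t/800\nu)-800\nu$ with $A_t=-\inner{\nabla\Psi(\dpw_t),\dpu}$, lower-bound $A_t$ by $\|\dpu\|_{\dpH_t}$ via \pref{lem:2-3-1}, and control the logarithm using \pref{lem:MFprpty}, \pref{lem:LB-uNorm}, and the shrunk domain. The one divergence is that the paper simply upper-bounds $A_t$ itself by $640000\nu^2T$ using the \emph{gradient} inequality of \pref{lem:MFprpty} (giving $\ln(800\nu T)$ exactly), whereas you lower-bound $A_t$ by $\rho$, run a monotonicity/case analysis on $A\mapsto A-800\nu\ln(A/800\nu)$, and then upper-bound $\rho$ via the \emph{Hessian} inequality of \pref{lem:MFprpty}; this works but lands on $\ln((1+2400\nu)T)$ rather than $\ln(800\nu T)$, a constant discrepancy you correctly flag as immaterial downstream.
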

\begin{proof}
	Note again that $\Psi$ is a $800\nu$-normal barrier of $\dpOmega$ by \pref{lem:prop5-1-4}. By the definition of Bregman divergence, we have
	\begin{align}
		D_{\Psi}(\dpu,\dpw_t) &= \Psi(\dpu)-\Psi(\dpw_t)-\inner{\nabla\Psi(\dpw_t),\dpu-\dpw_t} \notag \\
		&\ge -800\nu\ln\frac{-\dpu^\top\nabla\Psi(\dpw_t)}{800\nu}-\inn{\nabla\Psi(\dpw_t),\dpu}-800\nu.\nonumber  \tag{\pref{lem:NBprpty} and \pref{lem:LB-uNorm}}
	\end{align}
	According to \pref{lem:MFprpty} and \pref{lem:LB-uNorm}, we know that
	\begin{align*}
		\left|\dpu^\top\nabla\Psi(\dpw_t)\right| 
		\leq \left(\frac{800\nu}{1-\pi_{\dpw_1}(\dpw_t)}\right)\|\dpu\|_{\dpH_1}
		\le 800\nu T\|\dpu\|_{\dpH_1}\le 640000\nu^2 T.
	\end{align*}
	
	On the other hand, according to \pref{lem:2-3-1}, we have
	\begin{align*}
		-\nabla\Psi(\dpw_t)^\top\dpu \ge \|\dpu\|_{\dpH_t}.
	\end{align*} Combining everything, we have
	\begin{align*}
		D_{\Psi}(\dpu,\dpw_t)\ge -800\nu\left(\ln(800\nu T)+1\right)+\|\dpu\|_{\dpH_t},
	\end{align*}
	finishing the proof.
\end{proof}

The third lemma gives a bound for the so-called stability term.
\begin{lemma}\label{lem:LB-stab}
	If $\eta\le\frac{1}{80d}$, then \pref{alg:linear bandit d+1} guarantees  $\|\dpw_t-\dpw_{t+1}\|_{\dpH_t}\le 40\eta\|\dpellhat_t\|_{\dpH_t^{-1}}$ for all $t\in [T]$.
\end{lemma}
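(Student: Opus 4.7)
My plan is the standard stability argument for OMD with a self-concordant regularizer: first-order optimality yields a quadratic-type inequality relating the step size to $\|\dpellhat_t\|_{\dpH_t^{-1}}$, self-concordance converts it into a bound on $z\triangleq\|\dpw_t-\dpw_{t+1}\|_{\dpH_t}$, and a short bootstrap checks that the step lies inside the Dikin ellipsoid so self-concordance applies.

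I would start from first-order optimality. Since $\dpw_t\in\dpOmega'$ (an invariant of the OMD iteration), the optimality condition of \pref{line:OMD} applied at the test point $\dpw=\dpw_t$, together with the three-point identity for Bregman divergence, yields
\[
\langle \dpellhat_t,\,\dpw_t-\dpw_{t+1}\rangle \;\ge\; D_{\Psi_t}(\dpw_t,\dpw_{t+1})+D_{\Psi_t}(\dpw_{t+1},\dpw_t)\;\ge\;\tfrac{1}{\eta_t}D_\Psi(\dpw_{t+1},\dpw_t),
\]
where the last step drops the nonnegative term $D_{\Psi_t}(\dpw_t,\dpw_{t+1})$ and uses $\Psi_t=\Psi/\eta_t$. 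Cauchy--Schwarz on the LHS with the dual-norm pair $(\dpH_t^{-1},\dpH_t)$ bounds it by $\|\dpellhat_t\|_{\dpH_t^{-1}}\cdot z$. Provided $z<1$, the standard self-concordance lower bound gives $D_\Psi(\dpw_{t+1},\dpw_t)\ge \omega(z)\triangleq z-\ln(1+z)\ge z^2/4$ on $[0,1]$, so
\[
\frac{z^2}{4\eta_t}\;\le\; \|\dpellhat_t\|_{\dpH_t^{-1}}\cdot z \;\;\Longrightarrow\;\; z\;\le\; 4\eta_t\|\dpellhat_t\|_{\dpH_t^{-1}}\;\le\;20\eta\|\dpellhat_t\|_{\dpH_t^{-1}}\;\le\;40\eta\|\dpellhat_t\|_{\dpH_t^{-1}},
\]
which is the target bound (using $\eta_t\le 5\eta$ from \pref{lem:LB-increase}).

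The remaining and most delicate piece, which I expect to be the main obstacle, is justifying $z<1$ without circular reasoning. From the explicit form of the estimator we get $\|\dpellhat_t\|_{\dpH_t^{-1}}=d|\langle\wtilde_t,\ell_t\rangle|\,\|\dps_t\|_2\le d$ since $\|\dps_t\|_2=1$ and $|\langle\wtilde_t,\ell_t\rangle|\le 1$; combined with $\eta\le 1/(80d)$, the putative estimate $z\le 20\eta d\le 1/4$ is at least self-consistent. To convert this into a proof I would parametrize the segment $\dpw(\lambda)=(1-\lambda)\dpw_t+\lambda\dpw_{t+1}$ for $\lambda\in[0,1]$ and set $\lambda^\star=\sup\{\lambda:\|\dpw(\lambda)-\dpw_t\|_{\dpH_t}\le 1/2\}$: by continuity, if $\lambda^\star<1$ then $\lambda^\star z=1/2$, but re-running the first-order-optimality/self-concordance manipulation along $[0,\lambda^\star]$ (where self-concordance is valid since displacements stay inside the Dikin ellipsoid at $\dpw_t$) forces $\lambda^\star z\le 1/4$, a contradiction, hence $\lambda^\star=1$ and $z\le 1/2<1$. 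An alternative is to work with the unconstrained Newton iterate $\nabla\Psi(\dpw_{t+1}')=\nabla\Psi(\dpw_t)-\eta_t\dpellhat_t$, whose Newton decrement $\|\eta_t\dpellhat_t\|_{\dpH_t^{-1}}\le 5\eta d\le 1/16$ is manifestly small, and then transfer back to $\dpw_{t+1}$ via Bregman-projection contraction onto $\dpOmega'$. The factor-of-$2$ slack between $20$ and the claimed $40$ in the lemma leaves room for any small constant lost in whichever bootstrap is used.
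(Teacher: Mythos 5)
Your proposal is correct and follows essentially the same route as the paper's proof: first-order optimality of the OMD update gives a quadratic-in-$\|\dpw_t-\dpw_{t+1}\|_{\dpH_t}$ lower bound on $\langle\dpellhat_t,\dpw_t-\dpw_{t+1}\rangle$, which is paired with the dual-norm Cauchy--Schwarz bound and a bootstrap using $\|\dpellhat_t\|_{\dpH_t^{-1}}\le d$ and $\eta\le\frac{1}{80d}$ to keep the step inside the Dikin ellipsoid. The only (cosmetic) difference is that you invoke the symmetrized Bregman identity together with the standard bound $D_\Psi(\dpw_{t+1},\dpw_t)\ge\omega(\|\dpw_{t+1}-\dpw_t\|_{\dpH_t})\ge\frac{1}{4}\|\dpw_{t+1}-\dpw_t\|_{\dpH_t}^2$, whereas the paper Taylor-expands $F_t$ at an intermediate point and uses the Hessian-comparison form of \pref{lem:Dikin}; both yield the claimed constant with room to spare.
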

\begin{proof}
	Let $F_t(\dpw)=\inner{\dpw,\dpellhat_t}+\frac{1}{\eta_t}D_\Psi(\dpw,\dpw_{t})$. 
	We have
	\begin{align}\label{eq:LB-stab-lower}
		F_{t}(\dpw_t)-F_{t}(\dpw_{t+1}) &= (\dpw_t-\dpw_{t+1})^\top\dpellhat_t - \frac{1}{\eta_t}D_\Psi(\dpw_{t+1},\dpw_{t}) \nonumber\\
		&\le (\dpw_t-\dpw_{t+1})^\top\dpellhat_t \le \|\dpw_t-\dpw_{t+1}\|_{\dpH_t}\cdot \|\dpellhat_t\|_{\dpH_t^{-1}},
	\end{align}
	where the last line uses the nonnegativity of Bregman divergence and also \Holder's inequality.
	On the other hand, by Taylor's theorem, there exists a point $\dpxi$ on the segment connecting $\dpw_t$ and $\dpw_{t+1}$ such that
	\begin{align}
		&F_{t}(\dpw_t)-F_{t}(\dpw_{t+1}) \nonumber\\
		&= \nabla F_{t}(\dpw_{t+1})^\top(\dpw_t-\dpw_{t+1})+\frac{1}{2}(\dpw_t-\dpw_{t+1})\nabla^2 F_{t}(\dpxi)(\dpw_t-\dpw_{t+1})\nonumber\\
		&\ge \frac{1}{2}(\dpw_t-\dpw_{t+1})\nabla^2 F_{t}(\dpxi)(\dpw_t-\dpw_{t+1}) \tag{by first order optimality of $\dpw_{t+1} = \argmin_{\dpw\in \dpOmega'}F_t(\dpw)$}\nonumber\\
		&=\frac{1}{2\eta_t}\|\dpw_t-\dpw_{t+1}\|_{\nabla^2 \Psi(\dpxi)}^2. \label{eq:LB-stab-w}
	\end{align}
	Next we will prove $\|\dpw_t-\dpw_{t+1}\|_{\nabla^2 \Psi(\dpxi)} \geq \frac{1}{2}\|\dpw_t-\dpw_{t+1}\|_{\dpH_t}$.
	To do so, we first show $\|\dpw_t-\dpw_{t+1}\|_{\dpH_{t}}\le \frac{1}{2}$.
    It is in turn sufficient to show
    \begin{align*}
    F_{t}(\dpw')\ge F_{t}(\dpw_t),~\text{for all $\dpw'$ such that $\|\dpw'-\dpw_t\|_{\dpH_{t}}=\frac{1}{2}$},
    \end{align*}
    	since $\dpw_{t+1}$ is the minimizer of the convex function $F_t$.
    Indeed, using Taylor's theorem again and denoting $\dpw'-\dpw_t$ by $\dph$, we have a point $\dpxi'$ on the segment between $\dpw'$ and $\dpw_t$ such that
    \begin{align*} 
    F_{t}\left(\dpw'\right) &=F_{t}\left(\dpw_{t}\right)+\nabla F_{t}\left(\dpw_{t}\right)^{\top}\dph+\frac{1}{2} \dph^{\top} \nabla^{2} F_{t}(\dpxi') \dph \\ &=F_{t}\left(\dpw_{t}\right)+{\dpellhat}_{t}^{\top} \dph +\frac{1}{2 \eta_{t}}\|\dph\|_{\nabla^2 \Psi(\dpxi')}^{2} \\ 
    & \geq F_{t}\left(\dpw_{t}\right)+{\dpellhat}_{t}^{\top}\dph +\frac{1}{2 \eta_{t}}\|\dph\|_{\dpH_t}^{2}\left(1-\left\|\dpw_{t}-\dpxi'\right\|_{\dpH_{t}}\right)^{2} \tag{\pref{lem:Dikin}}\\
    &\geq F_{t}\left(\dpw_{t}\right)+{\dpellhat}_{t}^{\top}\dph+\frac{1}{160 \eta} \tag{$\|\dph\|_{\dpH_{t}}=\frac{1}{2}$, $\left\|\dpw_{t}-\dpxi'\right\|_{\dpH_{t}}\leq \frac{1}{2}$, and \pref{lem:LB-increase}}\\
    &\geq F_{t}\left(\dpw_{t}\right)-\|{\dpellhat}_{t}\|_{\dpH_{t}^{-1}}\|\dph\|_{\dpH_{t}}+\frac{1}{160 \eta} \tag{\Holder's inequality}\\
    &\geq F_{t}\left(\dpw_{t}\right)-\frac{d}{2}+\frac{1}{160 \eta}. \tag{$\|\dpellhat_t\|_{\dpH^{-1}_t}\le d\left|\inner{\wtilde_t, \ell_t}\right|\le d$}
    \end{align*}
    
    
    Under the condition  $\eta\le \frac{1}{80d}$ we have thus shown $F_{t}(\dpw')\ge F_{t}(\dpw_t)$
     and consequently $\|\dpw_t-\dpw_{t+1}\|_{\dpH_{t}}\le \frac{1}{2}$ and $\|\dpw_t-\dpxi\|_{\dpH_t}  \leq \frac{1}{2}$.
     Now according to \pref{lem:Dikin} again, we have
    \begin{align*}
     \|\dpw_t-\dpw_{t+1}\|_{\nabla^2 \Psi(\dpxi)}\ge \|\dpw_t-\dpw_{t+1}\|_{\dpH_t}(1-\|\dpw_t-\dpxi\|_{\dpH_t})\ge \frac{1}{2}\|\dpw_t-\dpw_{t+1}\|_{\dpH_t}.
    \end{align*}
    Plugging it into \pref{eq:LB-stab-w} and combining \pref{eq:LB-stab-lower} give
    \begin{align*}
    \|\dpellhat_t\|_{\dpH_t^{-1}}\ge\frac{1}{8\eta_t}\|\dpw_t-\dpw_{t+1}\|_{\dpH_t}\ge \frac{1}{40\eta}\|\dpw_t-\dpw_{t+1}\|_{\dpH_t},
    \end{align*} 
    where the last inequality uses \pref{lem:LB-increase}.
    Rearranging finishes the proof.
\end{proof}

Now we are ready to prove the bound for $\regterm$ stated in \pref{lem:LB-REG}.

\begin{proof}[\textbf{Proof of \pref{lem:LB-REG}}]
	We first verify that $\dpu$ is in $\Omega'$. Indeed, according to the definition of $\dpu$, we have
	\begin{align*}
	\dpw_1+\frac{1}{1-\frac{1}{T}}\cdot(\dpu-\dpw_1) = \dpu^\star \in \Omega,
	\end{align*}
	which by the definition of Minkowsky function shows that $\pi_{\dpw_1}(\dpu)\leq 1-1/T$ and thus $\dpu \in \Omega'$.
	According to the standard analysis of Online Mirror Descent, for example, Lemma~6 of \citep{wei2018more}, we then have
	\begin{align}
		\inner{\dpw_t,\dpellhat_t}-\inner{\dpu,\dpellhat_t} &\le D_{\Psi_t}(\dpu, \dpw_t)-D_{\Psi_t}(\dpu, \dpw_{t+1})+\inner{\dpw_t-\dpw_{t+1}, \dpellhat_t}.\label{eq:LB-OMD}
	\end{align}
	We first focus on the term $D_{\Psi_t}(\dpu, \dpw_t)-D_{\Psi_t}(\dpu, \dpw_{t+1})$. Taking a summation over $t=1,2,\dots,T$, we have
	\begin{align}
		\sum_{t=1}^TD_{\Psi_t}(\dpu,\dpw_t)-D_{\Psi_t}(\dpu,\dpw_{t+1}) &\le D_{\Psi_1}(\dpu,\dpw_1)+\sum_{t=1}^{T-1}\left(D_{\Psi_{t+1}}(\dpu,\dpw_{t+1})-D_{\Psi_t}(\dpu,\dpw_{t+1})\right) \nonumber\\
		&\leq D_{\Psi_1}(\dpu,\dpw_1) +\sum_{i=2}^n\left(\frac{1}{\eta_{t_i}}-\frac{1}{\eta_{t_i-1}}\right)D_{\Psi}(\dpu,\dpw_{t_i}),\nonumber
    \end{align}
    where we recall the definition of $t_1, \ldots, t_n$ defined in the beginning of the proof of \pref{lem:LB-increase}.
    The first term can be bounded by
    \begin{align*}
    D_{\Psi_1}(\dpu,\dpw_1)=\frac{1}{\eta}D_{\Psi}(\dpu,\dpw_1)&=\frac{\Psi(\dpu)-\Psi(\dpw_1)}{\eta}-\frac{1}{\eta}\cdot\inner{\nabla\Psi(\dpw_1),\dpu-\dpw_1} \\
    &\le \frac{\Psi(\dpu)-\Psi(\dpw_1)}{\eta} \tag{by first order optimality of $\dpw_1$}\\
    &\le\frac{ 800\nu\ln T}{\eta}.\tag{\pref{lem:MFprpty}}
    \end{align*}
    For the second term, using $1-\kappa\le-\frac{1}{\Aconst d\ln(\nu T)}$ for $\Aconst=100$ and \pref{lem:LB-increase}, we have
    \[
    \frac{1}{\eta_{t_i}}-\frac{1}{\eta_{t_i-1}} \le\frac{1-\kappa}{\eta_{t_i}}\le-\frac{1}{5\eta \Aconst d\ln(\nu T)}.
    \]
    Therefore,
    \begin{align*}
		&\sum_{t=1}^TD_{\Psi_t}(\dpu,\dpw_t)-D_{\Psi_t}(\dpu,\dpw_{t+1}) \\
        &\le \frac{ 800\nu\ln T}{\eta} -\sum_{i=2}^n\frac{1}{5\eta \Aconst d\ln(\nu T)}\cdot D_{\Psi}(\dpu,\dpw_{t_i})\\
        &\le \mathcal{\tilde{O}}\left(\frac{\nu}{\eta}\right)-\frac{1}{5\eta \Aconst d\ln(\nu T)}\cdot \sum_{i=2}^n\left( \|\dpu\|_{\dpH_{t_i}}-800\nu-800\nu\ln(800\nu T) \right) \nonumber\tag{\pref{lem:LB-Bregman}} \\
		&= \mathcal{\tilde{O}}\left(\frac{\nu}{\eta}\right)-\frac{1}{5\eta \Aconst d\ln(\nu T)}\sum_{i=2}^n\|\dpu\|_{\dpH_{t_i}} \label{eq:LB-reg}.
	\end{align*}
	For the second term in \pref{eq:LB-OMD}, that is, $\inner{\dpw_t-\dpw_{t+1}, \dpellhat_t}$, taking summation over $t\in[T]$ we have
	\begin{align*}
		\sum_{t=1}^T\inner{\dpw_t-\dpw_{t+1}, \dpellhat_t} &\le \sum_{t=1}^T\|\dpw_t-\dpw_{t+1}\|_{\dpH_t}\|\dpellhat_t\|_{\dpH_t^{-1}} \tag{H\"{o}lder's inequality}\\
		&\le 40\eta\sum_{t=1}^T \|\dpellhat_t\|_{\dpH_t^{-1}}^2 \tag{\pref{lem:LB-stab}}\\
		&= 40\eta\sum_{t=1}^T d^2\inner{\wtilde_t, \ell_t}^2 \dps_t^\top \dpH_t^{1/2}\dpH_t^{-1}\dpH_t^{1/2}\dps_t \\
		&\le 40\eta\sum_{t=1}^Td^2\left|\inner{\wtilde_t, \ell_t}\right| = 40\eta d^2\LTbar.
	\end{align*}
Combining everything finishes the proof.
\end{proof}

\subsubsection{\pfref{thm:LB}}\label{sec:thm3.1}
To prove \pref{thm:LB}, we first prove the following main lemma.
\begin{lemma}\label{lem:LB-main}
	\pref{alg:linear bandit d+1} with $\eta\le \frac{1}{640\Aconst Cd^2\ln(\nu T)\ln(C/\delta)}$ guarantees that with probability at least $1-\delta$, 
	\begin{align*}
    &\sum_{t=1}^T\inner{\wtilde_t-u^\star, \ell_t}\\
		&\le\mathcal{\tilde{O}}\left(\frac{\nu}{\eta}+\eta d^2\LTbar+\sqrt{\Lubar\ln(1/\delta)}\right)+(\sqrt{\nu}+\Ustar)\left(161Cd\sqrt{\ln(C/\delta)\LTbarep}-\frac{1}{10\eta\Aconst d\ln(\nu T)}\right),
	\end{align*}
	where $\Aconst=100$, $C=\Theta(\ln^2(d\nu T))$ is defined in \pref{lem:LB-unbiased}, and we recall all other notations defined in Equations~\eqref{eq:LB_def1}-\eqref{eq:LB_def3}.
\end{lemma}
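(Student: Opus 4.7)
The plan is to plug the bounds from \pref{lem:LB-BIAS-1} and \pref{lem:LB-REG} into the three-term decomposition \pref{eq:decomp} and then exploit the negative term from \regterm to absorb the lower-order pieces of \biasone.

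After applying $\sqrt{\nu+\Ustar^2}\le\sqrt{\nu}+\Ustar$ to the dominant term of \pref{lem:LB-BIAS-1}, I have $\biasone\le 161Cd(\sqrt{\nu}+\Ustar)\sqrt{\ln(C/\delta)\LTbarep}+C\sqrt{32\Lubar\ln(C/\delta)}+64Cd(\sqrt{\nu}+\Ustar)\ln(C/\delta)$, while \pref{lem:LB-REG} gives $\regterm\le\tilde{\mathcal{O}}(\nu/\eta)+40\eta d^2\LTbar-\frac{\sum_{s\in\calS'}\|\dpu\|_{\dpH_s}}{5\eta\Aconst d\ln(\nu T)}$, and the final piece of \pref{eq:decomp} contributes at most $2$. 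The target therefore is to show that the positive $64Cd(\sqrt{\nu}+\Ustar)\ln(C/\delta)$ summand is dominated by the negative regularization contribution, while enough of the latter survives to appear as the bracketed term in the lemma.

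The key inequality is the one sketched right after \pref{lem:LB-REG} in the main text: for the $t$ attaining $\Ustar=\|\dpu\|_{\dpH_t}$, either $t\in\calS$, or the test on \pref{line:check} failed that round so $\dpH_t\preceq\sum_{\tau\in\calS}\dpH_\tau$, yielding $\Ustar\le\sqrt{\sum_{\tau\in\calS}\|\dpu\|_{\dpH_\tau}^2}\le\sum_{\tau\in\calS}\|\dpu\|_{\dpH_\tau}$. Peeling off the $\tau=1$ summand using $\|\dpu\|_{\dpH_1}\le 800\nu$ from \pref{lem:LB-uNorm}, whose impact once multiplied by $\frac{1}{5\eta\Aconst d\ln(\nu T)}$ is only $\tilde{\mathcal{O}}(\nu/\eta)$, transforms the negative \regterm contribution into $-\frac{\Ustar}{5\eta\Aconst d\ln(\nu T)}+\tilde{\mathcal{O}}(\nu/\eta)$.

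The hypothesis $\eta\le\frac{1}{640\Aconst Cd^2\ln(\nu T)\ln(C/\delta)}$ is calibrated so that $64Cd\ln(C/\delta)\le\frac{1}{10\eta\Aconst d\ln(\nu T)}$. Writing $-\frac{\Ustar}{5\eta\Aconst d\ln(\nu T)}=-\frac{2\Ustar}{10\eta\Aconst d\ln(\nu T)}$ and using one half to cancel $64Cd(\sqrt{\nu}+\Ustar)\ln(C/\delta)\le\frac{\sqrt{\nu}+\Ustar}{10\eta\Aconst d\ln(\nu T)}$, the residual $-\frac{\Ustar-\sqrt{\nu}}{10\eta\Aconst d\ln(\nu T)}$ can be rewritten as $-\frac{\sqrt{\nu}+\Ustar}{10\eta\Aconst d\ln(\nu T)}+\frac{2\sqrt{\nu}}{10\eta\Aconst d\ln(\nu T)}$, and the positive slack $\frac{2\sqrt{\nu}}{10\eta\Aconst d\ln(\nu T)}=\tilde{\mathcal{O}}(\nu/\eta)$ is folded into the $\tilde{\mathcal{O}}(\nu/\eta)$ slot. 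Combining this $-\frac{\sqrt{\nu}+\Ustar}{10\eta\Aconst d\ln(\nu T)}$ with the leftover $161Cd(\sqrt{\nu}+\Ustar)\sqrt{\ln(C/\delta)\LTbarep}$ produces the bracketed $(\sqrt{\nu}+\Ustar)(\cdots)$ factor, while the remaining $C\sqrt{32\Lubar\ln(C/\delta)}$, $40\eta d^2\LTbar$, and $\tilde{\mathcal{O}}(\nu/\eta)$ pieces fit into the $\tilde{\mathcal{O}}$ summand in the statement. The main subtlety is purely the bookkeeping: the constant $640$ in the $\eta$-threshold is exactly what makes the single inequality $64Cd\ln(C/\delta)\le\frac{1}{10\eta\Aconst d\ln(\nu T)}$ hold, which is what lets the lower-order term of \biasone be canceled while the $\frac{1}{10\eta\Aconst d\ln(\nu T)}$ coefficient of $(\sqrt{\nu}+\Ustar)$ survives intact.
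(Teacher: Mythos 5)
Your proposal is correct and follows essentially the same route as the paper: the same decomposition via \pref{lem:LB-BIAS-1} and \pref{lem:LB-REG}, the same case analysis on whether the maximizer of $\|\dpu\|_{\dpH_t}$ lies in $\calS$ (using $\dpH_{t^\star}\preceq\sum_{\tau\in\calS}\dpH_\tau$ otherwise), the same use of \pref{lem:LB-uNorm} to absorb the $\tau=1$ contribution into $\otil(\nu/\eta)$, and the same calibration of the $\eta$-threshold so that $64Cd(\sqrt{\nu}+\Ustar)\ln(C/\delta)$ consumes only half of the negative term. The only differences are cosmetic bookkeeping (applying $\sqrt{\nu+\Ustar^2}\le\sqrt{\nu}+\Ustar$ earlier and splitting the negative term slightly differently).
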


\begin{proof}
	Recall the decomposition of regret shown in~\pref{eq:decomp}.
    Combining the result of \pref{lem:LB-BIAS-1} and \pref{lem:LB-REG}, we have when $\eta\le\frac{1}{80d}$,
	\begin{align}
		\sum_{t=1}^T\inner{\wtilde_t-u^\star, \ell_t}&\le \mathcal{\tilde{O}}\left(\frac{\nu}{\eta}\right) -\frac{\sum_{s\in \calS'}\|\dpu\|_{\dpH_s}}{5\eta\Aconst d\ln(\nu T)}+40\eta d^2\LTbar + 64Cd\left(\sqrt{\nu}+\Ustar\right)\ln(C/\delta) \notag\\
		&\quad +161Cd\sqrt{\left(\nu+\Ustar^2\right)\LTbarep\ln(C/\delta)}+C\sqrt{32\Lubar\ln(C/\delta)}.\notag\\
        &=\otil\left(\frac{\nu}{\eta}+\eta d^2\LTbar+\sqrt{\Lubar\ln(C/\delta)}\right)-\frac{\sum_{s\in \calS'}\|\dpu\|_{\dpH_s}}{5\eta\Aconst d\ln(\nu T)}\notag\\
        &\quad+64Cd\left(\sqrt{\nu}+\Ustar\right)\ln(C/\delta)+161C\sqrt{d^2\left(\nu+\Ustar^2\right)\LTbarep\ln(C/\delta)}. \label{eq:intermediate}
	\end{align}
    Now consider the value of $\rho=\|\dpu\|_{\dpH_{t^\star}}$ where $t^\star \in \argmax_{t\in [T]}\|\dpu\|_{\dpH_{t}}$, compared to the negative term above. 
    Suppose $t^\star \in \calS$, then we have
    \[
    \rho \leq \max\cbr{\|\dpu\|_{\dpH_1}, \sum_{s\in \calS'}\|\dpu\|_{\dpH_s}}
    \leq 800\nu + \sum_{s\in \calS'}\|\dpu\|_{\dpH_s},
    \]
    where we use \pref{lem:LB-uNorm} again to bound $\|\dpu\|_{\dpH_1}$.
    On the other hand, if $t^\star \notin \calS$, then according to the update rule of $\calS$ in \pref{alg:linear bandit d+1}, we have $\dpH_{t^\star}\preceq \dpH_{1} + \sum_{s\in \calS'}\dpH_{s}$, which means 
	\[
	\rho = \sqrt{\|\dpu\|_{\dpH_{t^\star}}^2}\le   \sqrt{\|\dpu\|_{\dpH_1}^2 + \sum_{s\in \calS'}\|\dpu\|_{\dpH_s}^2}\le 800\nu +\sum_{s\in \calS'}\|\dpu\|_{\dpH_s}.
	\] 
	Therefore, we continue to bound the last three terms in \pref{eq:intermediate} as
	\begin{align*}
		 &\frac{800\nu - \rho}{5\eta\Aconst d\ln(\nu T)}+ 64Cd\left(\sqrt{\nu}+\Ustar\right)\ln(C/\delta)+ 161Cd\sqrt{\left(\nu+\Ustar^2\right)\LTbarep\ln(C/\delta)}\\
		&\le \order\left(\frac{\nu}{\eta}\right) -\frac{\sqrt{\nu}+\Ustar}{5\eta\Aconst d\ln(\nu T)}+ 64Cd\left(\sqrt{\nu}+\Ustar\right)\ln(C/\delta)+161Cd\sqrt{\left(\nu+\Ustar^2\right)\LTbarep\ln(C/\delta)}\\
		&\le \order\left(\frac{\nu}{\eta}\right) -\frac{\sqrt{\nu}+\Ustar}{10\eta\Aconst d\ln(\nu T)}+161Cd\sqrt{\left(\nu+\Ustar^2\right)\LTbarep\ln(C/\delta)}\tag{$\eta\le \frac{1}{640\Aconst Cd^2\ln(\nu T)\ln(C/\delta)}$}\\
		&\le \order\left(\frac{\nu}{\eta}\right)+(\sqrt{\nu}+\Ustar)\left(161Cd\sqrt{\ln(C/\delta)\LTbarep}-\frac{1}{10\eta\Aconst d\ln(\nu T)}\right).
	\end{align*}
	Plugging this back into \pref{eq:intermediate} finishes the proof.
\end{proof}

Now we are ready to prove the main theorem. For convenience, we restate the theorem below.
\begin{theorem}
	\pref{alg:linear bandit d+1} with an appropriate choice of $\eta$ ensures that with probability at least $1-\delta$:
	\[
	\Reg = \begin{cases}
	\otil(d^2\nu\sqrt{ T\ln\frac{1}{\delta}}+d^2\nu\ln\frac{1}{\delta}), &\text{against an oblivious adversary;} \\
	\otil(d^2\nu\sqrt{d T\ln\frac{1}{\delta}}+d^3\nu\ln\frac{1}{\delta}), &\text{against an adaptive adversary.}
	\end{cases}
	\]
    Moreover, if $\inner{w, \ell_t} \geq 0$ for all $w\in\Omega$ and all $t$, then $T$ in the bounds above can be replaced by $L^\star = \min_{u\in\Omega}\sum_{t=1}^T \inner{u, \ell_t}$, that is, the total loss of the best action.
\end{theorem}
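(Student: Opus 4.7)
The starting point is \pref{lem:LB-main}, which already does most of the work: it bundles the bias, mirror-descent, and stability contributions into a bound of the form
\[
\sum_{t=1}^T \inner{\wtilde_t - u^\star, \ell_t} \leq \tilde{\mathcal{O}}\!\left(\tfrac{\nu}{\eta} + \eta d^2 \LTbar + \sqrt{\Lubar \ln(1/\delta)}\right) + (\sqrt{\nu}+\Ustar)\!\left(161 C d\sqrt{\ln(C/\delta)\LTbarep} - \tfrac{1}{10\eta \Aconst d \ln(\nu T)}\right),
\]
for a single fixed $u^\star$. The first move I would make is to absorb the $\Ustar$-dependent ``variance vs.\ negative'' tradeoff using AM-GM: since the negative coefficient $\tfrac{1}{10\eta \Aconst d \ln(\nu T)}$ does not depend on $\LTbarep$, writing $161 C d\sqrt{\ln(C/\delta)\LTbarep} \leq \tfrac{1}{20\eta \Aconst d \ln(\nu T)} + \tilde{\mathcal{O}}(\eta d^4 \ln(C/\delta)\LTbarep)$ lets the negative term swallow the first half, leaving a clean bound
\[
\Reg \leq \tilde{\mathcal{O}}\!\left(\tfrac{\nu}{\eta} + \eta d^2 \LTbar + \eta d^4 \LTbarep \ln(\tfrac{1}{\delta}) + \sqrt{\Lubar \ln(\tfrac{1}{\delta})}\right)
\]
with no explicit $\Ustar$ dependence.

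The next step is to translate $\LTbar$, $\LTbarep$, $\Lubar$ into useful quantities. In the worst-case (signed loss) setting, I would simply use $|\inner{w,\ell_t}|\leq 1$ so all three are at most $T$, then choose $\eta \asymp 1/(d^2\sqrt{T \ln(1/\delta)})$ (while respecting the ceiling $\eta \leq 1/(640 \Aconst C d^2 \ln(\nu T)\ln(C/\delta))$ from \pref{lem:LB-main}); this gives the oblivious bound $\Reg = \tilde{\mathcal{O}}(d^2\nu \sqrt{T\ln(1/\delta)} + d^2\nu \ln(1/\delta))$. For the small-loss refinement under nonnegative losses, $\LTbar = \LT$, and a standard Bernstein/Freedman argument (or another application of \pref{thm:Freedman}) shows $|\LTbarep - \LTbar| = \tilde{\mathcal{O}}(\sqrt{\LT \ln(1/\delta)} + \ln(1/\delta))$, while $\Lubar \leq \Lstar + 2$. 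Substituting $\LT = \Reg + \Lstar$ produces a self-bounding inequality
\[
\Reg \leq \tilde{\mathcal{O}}\!\left(\tfrac{\nu}{\eta} + \eta d^4 (\Lstar + \Reg)\ln(\tfrac{1}{\delta}) + \sqrt{\Lstar \ln(\tfrac{1}{\delta})}\right),
\]
which, after choosing $\eta$ so that $\eta d^4 \ln(1/\delta) \leq 1/2$ and then optimizing $\eta \asymp \sqrt{\nu/(d^4\Lstar \ln(1/\delta))}$, solves to $\Reg = \tilde{\mathcal{O}}(d^2\nu\sqrt{\Lstar \ln(1/\delta)} + d^2\nu \ln(1/\delta))$, recovering the oblivious part of the theorem with $T$ replaced by $\Lstar$. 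The tuning of $\eta$ in terms of the unknown $\Lstar$ can be removed by the standard doubling trick of \pref{rem:doubling_trick}.

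To handle an adaptive adversary, the comparator $u^\star = \argmin_{u\in\Omega}\sum_t \inner{u,\ell_t}$ is random and depends on the algorithm's randomness, so the above only gives a pointwise guarantee. I would use a standard covering argument: since $|\inner{w,\ell_t}|\leq 1$ and $\inner{\cdot,\ell_t}$ is $1$-Lipschitz on the unit-diameter set $\Omega$ (WLOG), a $\tfrac{1}{T}$-net of $\Omega$ has size at most $(cT)^d$, and changing $u^\star$ within the net perturbs the cumulative loss by at most $O(1)$. Applying the single-$u^\star$ high-probability bound with failure probability $\delta/|\text{net}|$ and union bounding replaces $\ln(1/\delta)$ by $\ln(1/\delta) + d\ln T$ throughout, which accounts for the extra $\sqrt{d}$ factor in the adaptive bound $\tilde{\mathcal{O}}(d^2\nu \sqrt{dT\ln(1/\delta)} + d^3\nu \ln(1/\delta))$. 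The same covering works verbatim for the small-loss version.

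The main obstacle is the self-bounding step: after AM-GM, the coefficient in front of $\LTbarep$ depends on $\eta$, and $\LTbarep$ itself is bounded via the random learner loss $\LT$; one has to ensure the Freedman step linking $\LTbarep$ to $\LT$ does not reintroduce a $\Ustar$ factor, and that the final inequality $\Reg \leq A + B\sqrt{\Lstar + \Reg} + C\Reg$ has $C < 1$ under the prescribed range of $\eta$, so that it can be rearranged cleanly into the claimed $\sqrt{\Lstar}$-style bound.
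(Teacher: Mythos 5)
Your overall route is the same as the paper's: start from \pref{lem:LB-main}, bound $\LTbar,\LTbarep,\Lubar$ by $T$ (or by $\LT,\Lstar$ in the nonnegative case via Freedman), tune $\eta$, discretize $\Omega$ into a $(\text{poly}(T))^d$-point net for the adaptive case, and solve a self-bounding inequality for the small-loss case. However, there is a genuine gap in your first step. The term you apply AM--GM to is $(\sqrt{\nu}+\Ustar)\bigl(161Cd\sqrt{\ln(C/\delta)\LTbarep}-\tfrac{1}{10\eta\Aconst d\ln(\nu T)}\bigr)$, i.e.\ the \emph{entire bracket} carries the prefactor $\sqrt{\nu}+\Ustar$. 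After your AM--GM the positive remainder is therefore $(\sqrt{\nu}+\Ustar)\cdot\otil(\eta d^4\LTbarep\ln(1/\delta))$, not $\otil(\eta d^4\LTbarep\ln(1/\delta))$; your ``clean bound with no explicit $\Ustar$ dependence'' silently drops this prefactor. This matters because $\Ustar=\max_t\norm{\dpu}_{\dpH_t}$ is only bounded a priori by $\order(\nu^2 T)$ (via \pref{lem:MFprpty} and \pref{lem:LB-uNorm}), so the dropped factor can be polynomial in $T$ and the term cannot be carried forward as a benign contribution to be balanced later. The only way to neutralize it is to make the leftover bracket itself nonpositive, i.e.\ to require $\otil(\eta d^4\LTbarep\ln(1/\delta))\le\tfrac{1}{20\eta\Aconst d\ln(\nu T)}$, which is exactly the constraint $\eta\lesssim 1/(d^2\sqrt{\LTbarep\ln(C/\delta)})$ that the paper imposes directly (choosing $\eta$ so that $161Cd\sqrt{\LTbarep\ln(C/\delta)}\le\tfrac{1}{10\eta\Aconst d\ln(\nu T)}$, whence the whole $(\sqrt{\nu}+\Ustar)$-term vanishes). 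Once that constraint is in force there is no residual $\eta d^4\LTbarep\ln(1/\delta)$ term left, so your self-bounding step is solving a slightly different (and, as stated, unjustified) inequality.

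A second, related point: in the small-loss case the constraint above involves $\LTbarep\approx\LT=\Lstar+\Reg$, not $\Lstar$. Tuning $\eta\asymp\sqrt{\nu/(d^4\Lstar\ln(1/\delta))}$ does not by itself guarantee $\eta\lesssim 1/(d^2\sqrt{\LT\ln(1/\delta)})$ when $\Reg$ could a priori exceed $\Lstar$, so the $\Ustar$-term is not yet controlled at the point where you invoke self-bounding. The paper sidesteps this by tuning $\eta$ to $\LT$ itself and deferring the resulting circularity to a doubling trick (see the remark following the proof); your write-up needs either that device or an explicit argument that the inequality remains valid over the whole range of $\LT$. The covering argument for the adaptive case is fine (the net perturbation is $\order(d)$ rather than $\order(1)$, but that is absorbed into lower-order terms).
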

\begin{proof}
	Using \pref{lem:LB-main} and the fact that $\left|\inner{\wtilde_t,\ell_t}\right|\le 1$ and $ \left|\inner{u,\ell_t}\right|\le 1$ for all $t\in [T]$, we have
	\begin{align*}		&\sum_{t=1}^T\inner{\wtilde_t-u^\star,\ell_t}\le \mathcal{\tilde{O}}\left(\frac{\nu}{\eta}+\eta d^2 T+\sqrt{T\ln\frac{1}{\delta}}\right)+(\sqrt{\nu}+\Ustar)\left(161Cd\sqrt{T\ln(C/\delta)}-\frac{1}{10\eta\Aconst d\ln(\nu T)}\right).
	\end{align*}
	 With 
     \[\eta=\min\left\{\frac{1}{640\Aconst Cd^2\ln(\nu T)\ln (C/\delta)},\frac{1}{1610 \Aconst Cd^2\ln(\nu T)\sqrt{T\ln (C/\delta)}}\right\},\] 
     the last term becomes nonpositive, and we arrive at
	\begin{align}
		\sum_{t=1}^T\inner{\wtilde_t-u^\star,\ell_t}\le \mathcal{\tilde{O}}\left(d^2\nu\sqrt{T\ln\frac{1}{\delta}}+d^2\nu\ln\frac{1}{\delta}\right), \label{eq:LB_oblivious}
	\end{align}
	for any fixed $u^\star \in \Omega$, which completes the proof for the oblivious case.
    To obtain a bound for an adaptive adversary, 
    we discrete the feasible set $\Omega$ and then take a union bound.
    Specifically, define $B_{\Omega}$ as follows:
        \[
            \Bomega \triangleq \lceil\alpha\rceil\lceil\beta\rceil,\quad\alpha\triangleq \max_{w,w'\in \Omega}\norm{w-w'}_\infty,~\beta\triangleq  \max_{\ell\in \Omega^\circ} \|\ell\|_\infty,
        \] 
    where $\Omega^\circ\triangleq\{\ell:|\inner{w,\ell}|\le 1,~\forall w\in \Omega\}$ is the set of feasible loss vectors. 
Then we discretize $\Omega$ into a finite set $\overline{\Omega}$ of  $(B_{\Omega}T)^d$ points, such that for any $u^\star \in \Omega$,  there exists $\overline{u}\in \overline{\Omega}$, such that $\|\overline{u}-u^\star\|_\infty \le\frac{1}{\lceil\beta\rceil T}$.
    This means that 
    \[
    \left
    |\sum_{t=1}^T\inner{\overline{u}-u^\star, \ell_t}\right |\le\sum_{t=1}^T\frac{d}{\lceil\beta\rceil T}\cdot\max_i\ell_{t,i} \le d.
    \]
    Therefore, it suffices to only consider regret against the points in $\overline{\Omega}$.
    Taking a union bound and replacing $\delta$ with $\frac{\delta}{(B_{\Omega}T)^d}$ in  \pref{eq:LB_oblivious} finish the proof for the worst-case bound for adaptive adversaries.

	
	In the remaining of the proof, we show that if $\inner{w,\ell_t}\in [0,1]$ for all $w\in \Omega$ and $t\in [T]$, $T$ can be replaced by $L^\star$ in both bounds.
	As $\inner{w,\ell_t}$ is always positive, we have $\mathbb{E}_t\left[\left|\inner{\wtilde_t,\ell_t}\right|\right]=\mathbb{E}_t\left[\inner{\wtilde_t,\ell_t}\right]=\inner{w_t,\ell_t}$, $\Lubar=\sum_{t=1}^T\inner{u,\ell_t} \leq L^\star + 2$,  and $\LTbar = \LT =\sum_{t=1}^T\inner{\wtilde_t,\ell_t}$. Using standard Freedman's inequality, we have with probability at least $1-\delta$,
	\begin{align*}
	\LTbarep-\LTbar\le \frac{\LTbarep}{2}+3\ln({1}/{\delta}).
	\end{align*}
    Rearranging gives	
    \[
    \LTbarep\le 2\LT+6\ln({1}/{\delta}).
    \]
    Using \pref{lem:LB-main} again, we have
	\begin{align*}
	&\sum_{t=1}^T\inner{\wtilde_t-u^\star,\ell_t} 
    \le \mathcal{\tilde{O}}\left(\frac{\nu}{\eta}+\eta d^2\LT+\sqrt{L^\star \ln\frac{1}{\delta}}\right)\\
    &\quad+(\sqrt{\nu}+\Ustar)\left(161Cd\sqrt{\ln(C/\delta)\left(2\LT+6\ln\frac{1}{\delta}\right)}-\frac{1}{10\eta\Aconst d\ln(\nu T)}\right). 
	\end{align*}
	
	With $\eta = \min\left\{\frac{1}{640\Aconst Cd^2\ln(\nu T)\ln (C/\delta)}, \frac{1}{1610\Aconst Cd^2\ln(\nu T)\sqrt{(2\LT+6\ln(1/\delta))\ln(C/\delta)}}\right\}$, the last term becomes nonpositive, and we arrive at
	\begin{align*}
		\sum_{t=1}^T\inner{\wtilde_t-u^\star, \ell_t} &\le \otil\left(d^2\nu\sqrt{\LT \ln\frac{1}{\delta}}+\sqrt{L^\star\ln\frac{1}{\delta}}+d^2\nu\ln\frac{1}{\delta}\right)
	\end{align*}
	Solving the quadratic inequality in terms of $\sqrt{\LT}$ gives the following high probability regret bound
	\begin{align*}
		\sum_{t=1}^T\inner{\wtilde_t-u^\star, \ell_t}\le \mathcal{\tilde{O}}\left(d^2\nu\sqrt{L^\star\ln\frac{1}{\delta}}+d^2\nu\ln\frac{1}{\delta}\right).
	\end{align*}
	This finishes the proof for the case with oblivious adversaries, and the case with adaptive adversaries is again by taking a union bound as done earlier.
\end{proof}

\begin{remark}\emph{	
The tuning of $\eta$ in the proof above depends on the unknown quantity $\LT$.
In fact, the issue seems even more severe than that pointed out in \pref{rem:doubling_trick} because $\LT$ depends on the algorithm's behavior, which in turns depends on $\eta$ itself.
We point out that, however, this can again be addressed using a doubling trick, making the algorithm completely parameter-free.
	We omit the details but refer the reader to \citet[Algorithm~4]{LLZ20} for very similar ideas.
}
\end{remark}	

\section{Omitted details for Section~\ref{sec:MDP}}\label{app:MDP}

\subsection{Preliminary}\label{app:MDP-pre}
In this section, we introduce  the concept of \emph{occupancy measure} (used in previous works already; see~\citep{jin2019learning}), which helps reformulate adversarial MDP problems in a way very similar to adversarial MAB problems. 
For a state $x$, let $k(x)$ denote the index of the layer to which state $x$ belongs.
Given a policy $\pi$ and a transition function $P$, we define occupancy measure $w^{P,\pi}\in \mathbb{R}^{X\times A\times X}$ as follows:
\[w^{P,\pi}(x,a,x')=\mathbb{P}\left[x_k=x,a_k=a,x_{k+1}=x'|P,\pi\right],\]
where $k=k(x)$. In other words, $w^{P,\pi}(x,a,x')$ is the probability of visiting the triple $(x,a,x')$ if we execute policy $\pi$ in an MDP with transition function $P$. 

According to this definition, we have the following two properties for any occupancy measure $w$. 
First, based on the layered structure, we know that each layer is visited exactly once in each episode, which means for each $k=0,1,\dots, \Lyr$, we have
\begin{align}\label{eq:occu-cond1}
	\sum_{x\in X_k, a\in A, x'\in X_{k+1}}w(x,a,x')=1.
\end{align}

Second, the probability of entering one state when coming from the previous layer equals to the probability of leaving the state to the next layer. Therefore, for each $k=1,2,\dots, \Lyr-1$, we have
\begin{align}\label{eq:occu-cond2}
	\sum_{x'\in X_{k-1}, a\in A}w(x',a,x) = \sum_{x'\in X_{k+1}, a\in A}w(x,a,x'), 
\end{align}
for all $x\in X_k$. 

Moreover, the following lemma shows that if $w$ satisfies the above two properties, then $w$ is an occupancy measure with respect to some transition function $P^w$ and policy $\pi^w$. 

\begin{lemma}[Lemma~3.1 in \citep{pmlr-v97-rosenberg19a}]\label{lem:occu-cstrain} For any $w\in [0,1]^{|X|\times|A|\times|X|}$, it satisfies \pref{eq:occu-cond1} and \pref{eq:occu-cond2} if and only if it is a valid occupancy measure associated with the following induced transition function $P^w$ and policy $\pi^w$:
\begin{align*}
	P^w(x'|x,a)=\frac{w(x,a,x')}{\sum_{y\in X_{k(x)+1}}w(x,a,y)},~
	\pi^w(a|x)=\frac{\sum_{x'\in X_{k(x)+1}}w(x,a,x')}{\sum_{a'\in A}\sum_{x'\in X_{k(x)+1}}w(x,a',x')}.
\end{align*}
\end{lemma}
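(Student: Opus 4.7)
The statement is an iff, so I would handle the two directions separately.

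For the ``only if'' direction (that a valid occupancy measure satisfies the two constraints), suppose $w = w^{P,\pi}$ for some transition $P$ and policy $\pi$. Condition \pref{eq:occu-cond1} says the total probability of visiting any triple $(x,a,x')$ with $x \in X_k$ in a single episode is $1$; this holds because, under the layered structure, layer $k$ is visited exactly once per episode, so marginalizing $w^{P,\pi}$ over all triples anchored in $X_k$ gives $1$. Condition \pref{eq:occu-cond2} is the standard flow-conservation identity at state $x \in X_k$: both sides equal $\Pr[x_k = x \mid P, \pi]$, once obtained as the probability of entering $x$ from the previous layer and once as the probability of leaving $x$ to the next layer. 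Each is a one-line computation from the definition of $w^{P,\pi}$.

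For the ``if'' direction, assume $w$ satisfies \pref{eq:occu-cond1} and \pref{eq:occu-cond2} and define $P^w, \pi^w$ by the formulas in the lemma, extending them arbitrarily (say, uniformly) on state--action pairs where the denominator vanishes. A direct calculation gives $\sum_{x'} P^w(x'|x,a) = 1$ and $\sum_a \pi^w(a|x) = 1$, so they are valid. It remains to check $w^{P^w,\pi^w} = w$ pointwise, which I would do by induction on the layer index $k = k(x)$. For the base case $k=0$ (so $x = x_0$),
\[
w^{P^w,\pi^w}(x_0,a,x') = \pi^w(a|x_0)\, P^w(x'|x_0,a) = \frac{\sum_y w(x_0,a,y)}{\sum_{a',y} w(x_0,a',y)} \cdot \frac{w(x_0,a,x')}{\sum_y w(x_0,a,y)},
\]
and the outer denominator equals $1$ by \pref{eq:occu-cond1} at $k=0$, so the product equals $w(x_0,a,x')$. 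For the inductive step, let $q_k(x)$ denote the marginal probability of reaching $x \in X_k$ under $(P^w,\pi^w)$; by the Markov property and the inductive hypothesis applied one layer below,
\[
q_k(x) = \sum_{x'' \in X_{k-1}, a'} w^{P^w,\pi^w}(x'',a',x) = \sum_{x'' \in X_{k-1}, a'} w(x'',a',x).
\]
Invoking \pref{eq:occu-cond2} rewrites this as $\sum_{y \in X_{k+1}, a'} w(x,a',y)$. Multiplying by $\pi^w(a|x) P^w(x'|x,a)$ and telescoping the denominators then yields exactly $w(x,a,x')$.

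The main obstacle is the edge case where the denominators defining $P^w$ or $\pi^w$ are zero. The clean resolution is inductive: such states are exactly those with $q_k(x) = 0$, hence unreachable under $(P^w,\pi^w)$; the particular extension chosen there cannot affect $w^{P^w,\pi^w}$, and a symmetric check using \pref{eq:occu-cond2} shows $w(x,a,x') = 0$ on that set as well, so the identity $w^{P^w,\pi^w} = w$ holds trivially on the unreachable part and the induction only needs to treat reachable $x$. Apart from this bookkeeping, everything else is algebraic cancellation.
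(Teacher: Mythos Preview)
The paper does not actually prove this lemma; it merely cites it as Lemma~3.1 of \citep{pmlr-v97-rosenberg19a} and uses it as a black box. So there is no ``paper's own proof'' to compare against. Your proposal is a correct and standard argument: the forward direction is immediate from the layered structure, and the backward direction proceeds by induction on the layer index, using \pref{eq:occu-cond2} to identify the reaching probability $q_k(x)$ with the outgoing mass $\sum_{a',y} w(x,a',y)$, after which the definitions of $P^w$ and $\pi^w$ telescope. Your treatment of the zero-denominator edge case (unreachable states contribute zero on both sides) is also the right way to close the gap.
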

Following \citep{jin2019learning}, we denote by $\Delta$ the set of all valid occupancy measures.
For a fixed transition function, we denote by $\Delta(P)\subseteq \Delta$ the set of occupancy measures whose induced transition function $P^w$ is exactly $P$. 
In addition, we denote by $\Delta(\calP)\subseteq \Delta$ the set of occupancy measures whose induced transition function $P^w$ belongs to a set of transition functions $\calP$.
With a slightly abuse of notation, we define $w(x,a)=\sum_{x'\in X_{k(x)+1}}w(x,a,x')$ for all $x\ne x_\Lyr$ and $a\in A$. 
Using the notations introduced above, we know that the expected loss of using policy $\pi$ at round $t$ is exactly $\inner{w^{P,\pi},\ell_t} \triangleq \sum_{x,a} w^{P,\pi}(x,a) \ell_t(x,a)$.
Let $\pi_t$ be the policy chosen at round $t$. 
Then the total expected loss (with respect to randomness of the transition function) is $\sum_{t=1}^T\inner{w^{P,\pi_t}, \ell_t}$ and the total regret can be written as:
\begin{align}
	\Reg = \sum_{t=1}^T\ell_t(\pi_t)-\min_{\pi}\sum_{t=1}^T\ell_t(\pi)=\sum_{t=1}^T\inner{\Qt-\Qpistar, \ell_t}=\LT-\Lstar, \label{eq:MDP_reg}
\end{align}
where $\Qpistar = w^{P, \pi^\star}$ is the occupancy measure induced by the optimal policy $\pi^\star=\argmin_{\pi}\sum_{t=1}^T\ell_t(\pi)$, $\Qt = w^{P, \pi_t}$, $\LT\triangleq \sum_{t=1}^T\inner{\Qt, \ell_t}$, and $\Lstar\triangleq\sum_{t=1}^T\inner{\Qpistar, \ell_t}$. 
When the regret is written in this way, it is clear that the problem is very similar to MAB or linear bandits with $\Delta(P)$ being the decision set and $\ell_t$ parametrizing the linear loss function at time $t$.

\subsection{Algorithm for MDPs}\label{app:MDP-alg}

\setcounter{AlgoLine}{0}
\begin{algorithm}[!ht]
	\caption{Upper Occupancy Bound Log Barrier Policy Search}
	\label{alg:MDP}
	\textbf{Input:} state space $X$, action space $A$, learning rate $\eta$, and confidence parameter $\delta$.
	
	\textbf{Define:} $\kappa=e^{\frac{1}{7\ln T}}$, \textsc{Comp-UOB} is Algorithm~3 of \citep{jin2019learning}, and
    \[
    \Omega = \left\{\hat{\QQ}:\hat{\QQ}(x,a,x')\geq \frac{1}{T^3|X|^2|A|},\forall k\in\{0,1,\dots,\Lyr-1\},x\in X_k,a\in A,x'\in X_{k+1}\right\}.
    \]
	\textbf{Initialization:} 
	Set epoch index $i=1$ and confidence set $\calP_1$ as the set of all transition functions. 
	For all $k=0, \ldots, \Lyr-1, (x,a,x')\in X_k\times A\times X_{k+1}$, set
	\[
	\Qhat_1(x,a,x') = \frac{1}{|X_k||A||X_{k+1}|}, \quad \pi_1 = \pi^{\Qhat_1}, \quad \eta_1(x,a)=\eta, \quad \rho_1(x,a)=2|X_k||A|,
	\]
	\[
	 \phi_1(x,a) = \textsc{Comp-UOB}(\pi_{1}, x, a, \calP_{1}), \quad N_0(x,a)=N_1(x,a) =  G_0(x'|x,a)=G_1(x'|x,a)=0.
	 \]
	 
	\nl \For{ $t = 1 ,2,\dots, T$ }{
		
		\nl Execute policy $\pi_t$ for $\Lyr$ steps and obtain trajectory $x_k, a_k, \ell_t(x_k, a_k)$ for $k = 0, \ldots, \Lyr-1$.
		
		\nl Construct loss estimators for all $(x,a)\in X\times A$:
		\begin{equation}\label{eq:MDP_estimator}
		\ellhat_t(x,a) = \frac{\ell_t(x, a)}{\Ut(x,a)}\ind_t(x,a), \quad\text{where}\;\; \ind_t(x,a) = \ind\{x_{k(x)} = x, a_{k(x)} = a\}.
		\end{equation}
		
		\nl Update counters: for each $k=0,1,\dots,\Lyr-1$,
		\begin{align*}
		N_i(x_k, a_k) &\gets N_i(x_k, a_k) + 1, \quad G_i(x_{k+1} | x_k, a_k) \gets G_i(x_{k+1} | x_k, a_k)  + 1.
		\end{align*}

		\nl \If {$\exists k, \  N_i(x_k, a_k) \geq \max\{1, 2N_{i-1}(x_k,a_k)\}$}{
			
			\nl Increase epoch index $i \gets i+1$.
			
			\nl Initialize new counters: $N_i = N_{i-1}, G_i = G_{i-1}$ (copy all entries). 
			
			\nl Compute confidence set
			\begin{equation*}
			\begin{split}
			\mathcal{P}_i = &\big\{\hat{P}:\left|\hat{P}(x'|x,a)-\bar{P}_i(x'|x,a)\right|\le \epsilon_i(x'|x,a), \\
			&\quad\forall (x,a,x')\in X_k\times A\times X_{k+1}, k=0,1,\dots, \Lyr-1\big\},
			\end{split}
			\end{equation*}
			where $\bar{P}_i(x'|x,a)=\frac{G_i(x'|x,a)}{\max\left\{1,N_i(x,a)\right\}}$
            and
			\begin{align*}
			\epsilon_i(x'|x,a)\triangleq 4\sqrt{\frac{\bar{P}_i(x'|x,a)\ln \left(\frac{T|X||A|}{\delta}\right)}{\max\{1,N_i(x,a)-1\}}}+\frac{28\ln \left(\frac{T|X||A|}{\delta}\right)}{3\max\{1, N_i(x,a)-1\}}. 
			\end{align*}
		}
		
		\nl Compute
		$
		\Qhat_{t+1} = \argmin_{\QQ\in\Delta(\calP_i)\cap \Omega}\left\{\langle \QQ, \ellhat_t \rangle + D_{\psi_t}(\QQ,\Qhat_{t})\right\},
		$
		where  
		\begin{equation}\label{eq:MDP_log_barrier}
			\psi_t(\QQ)=\sum_{k=0}^{\Lyr-1}\sum_{(x,a,x')\in X_k\times A\times X_{k+1}}\frac{1}{\eta_t(x,a)}\ln\left(\frac{1}{\QQ(x,a,x')}\right).
		\end{equation}
		
		\nl Update policy $\pi_{t+1} = \pi^{\Qhat_{t+1}}$.
		
		\nl \For{each $(x,a)\in X\times A$}{
			\nl Update upper occupancy bound:
			\begin{equation}\label{eq:upconf_def}
			\upconf_{t+1}(x,a) = \max_{\hat{P}\in \mathcal{P}_{i}}\QQ^{\hat{P},\pi_{t+1}}(x,a) = \textsc{Comp-UOB}(\pi_{t+1}, x, a, \calP_{i}).
			\end{equation}
			
			\nl \lIf{$\frac{1}{\upconf_{t+1}(x,a)} \geq \rho_t(x,a)$}{ $\rho_{t+1}(x,a)=\frac{2}{\upconf_{t+1}(x,a)}$, $\eta_{t+1}(x,a)=\eta_t(x,a)\cdot \kappa$.		
			}
			\nl \lElse{	$\rho_{t+1}(x,a)=\rho_t(x,a)$, $\eta_{t+1}(x,a)=\eta_t(x,a)$.
			}
		}
	}
\end{algorithm}

In this section, we introduce our algorithm that achieves high-probability small-loss regret bound for the MDP setting. The full pseudocode of the algorithm is shown in \pref{alg:MDP}.
The algorithm is very similar to \UOB introduced in \citep{jin2019learning}, except for the following two modifications. 

First, in~\citep{jin2019learning}, they propose a loss estimator akin to the importance-weighted estimator using the so-called {\it upper occupancy bound}, denoted by $\phi_t(x,a)$ in our notation.
Indeed, the actual probability $w_t(x,a)$ of visiting state-action pair $(x,a)$ is unknown (due to the unknown transition), and thus standard unbiased importance-weighted estimators do not apply directly.
Instead, since the algorithm maintains a confidence set $\calP_i$ (for epoch $i$) of all the plausible transition functions based on observations, one can calculate the largest probability of visiting state-action pair $(x,a)$ under policy $\pi_t$, among all the plausible transition functions,
which is exactly the definition of $\phi_t(x,a)$ and can be computed efficiently via the sub-routine \textsc{Comp-UOB} as shown in~\citep{jin2019learning}.
In addition, \citet{jin2019learning} also apply the idea of  implicit exploration from~\citep{neu2015explore} and introduce an extra bias with a parameter $\gamma > 0$, leading to the following loss estimator:
		\begin{equation*}
		\ellhat_t(x,a) = \frac{\ell_t(x, a)}{\Ut(x,a)+\gamma}\ind_t(x,a),
		\end{equation*}
which is crucial for them to derive a high-probability bound.
As one can see in \pref{eq:MDP_estimator}, the first difference of our algorithm is that we remove this implicit exploration (that is, $\gamma =0$), similarly to our MAB algorithm in \pref{sec:MAB}.
As we later explain in \pref{app:MDP_discussions}, removing this implicit exploration is important for obtaining a small-loss bound.

Second, while \UOB uses the entropy regularizer with a fixed learning rate, 
we use the log-barrier regularizer with time-varying and individual learning rates for each state-action pair, defined in \pref{eq:MDP_log_barrier},
which is a direct generalization of \pref{alg:mab} for MAB.
The way we increase the learning rate is also essentially identical to the MAB case; see the last part of \pref{alg:MDP}.
We also point out that the analogue of the clipped simplex used in \pref{alg:mab} is now $\Delta(\calP_i)\cap \Omega$ where $\Delta(\calP_i)$ is the set of occupancy measures with induced transition functions in the confidence set $\calP_i$, and $\Omega$ (defined at the beginning of \pref{alg:MDP}) contains all $\Qhat$ with each entry not smaller than $1/(T^3|X|^2|A|)$, which ensures that the learning rates cannot be increased by too many times.

\subsection{Proof of \pref{thm:MDP}}\label{app:MDP-thm}

In this section, we analyze \pref{alg:MDP} and prove \pref{thm:MDP}.
We start with decomposing the regret into five terms (recall the definitions of $\Qt$ and $\Qpistar$ in \pref{eq:MDP_reg} and $\Qhat_t$ and $\ellhat_t$ in \pref{alg:MDP}):
\begin{align*}
	\sum_{t=1}^T\inner{\Qt-\Qpistar, \ell_t} &= \underbrace{\sum_{t=1}^T\inner{\Qt-\Qhat_t, \ell_t}}_{\errorterm}
	 + \underbrace{\sum_{t=1}^T\inner{\Qhat_t, \ell_t-\ellhat_t}}_{\bias}
	 + \underbrace{\sum_{t=1}^T\inner{\Qhat_t - \Qstar, \ellhat_t}}_{\regterm}
	\\
    &\quad + \underbrace{\sum_{t=1}^T\inner{\Qstar, \ellhat_t-\ell_t}}_{\biastwo}+\underbrace{\sum_{t=1}^T\left<\Qstar-\Qpistar, \ell_t\right>}_{\biasthree}.
\end{align*}

Here $\Qstar$ is defined as
\begin{equation}\label{eq:MDP_u_def}
\Qstar=\left(1-\frac{1}{T}\right)\Qpistar+\frac{1}{T|A|}\sum_{a\in A}\QQ^{P_0, \pi_a},
\end{equation}
where $\pi_a$ is the policy that chooses action $a$ at every state, and the definition of the transition function $P_0$ is deferred to \pref{lem:p0xxa}.
Note that $\Qstar$ is random in the case with adaptive adversaries.

In the remaining of this subsection,
we first provide a few useful lemmas in \pref{app:MDP-lemmas},
and then bound \errorterm in \pref{app:MDP-error}, \bias in \pref{app:MDP-biasone}, \biastwo in \pref{app:MDP-biastwo}, and \regterm in \pref{app:MDP-regterm}.
Note that \biasthree~can be trivially bounded by $\Lyr$ as
	\begin{align}\label{eq:bias3}
    \biasthree = \sum_{t=1}^T\left\langle \Qstar-\Qpistar, \ell_t\right\rangle \le \frac{1}{T|A|}\sum_{a\in A}\sum_{t=1}^T\left\langle \QQ^{\pi_a,P_0},\ell_t\right\rangle \le \Lyr.
    \end{align}
We finally put everything together and prove \pref{thm:MDP} in \pref{app:MDP-together}.    
    
\subsubsection{Useful lemmas}\label{app:MDP-lemmas}    

The first two lemmas are from \citep{jin2019learning}.

\begin{lemma}[Lemma~2 in \citep{jin2019learning}]\label{lem: MDP-PinP}
	With probability at least $1-4\delta$, we have for all $k = 0,1,\dots, \Lyr-1$ and $ (x,a,x')\in X_k\times A\times X_{k+1}$,
	\begin{equation}\label{eq:MDP-PinP}
	\left|P(x'|x,a)-\bar{P}_i(x'|x,a)\right|\le \frac{\epsilon_i(x'|x,a)}{2}.
	\end{equation}
	Consequently, we have $P\in \calP_i$ for all $i$.
\end{lemma}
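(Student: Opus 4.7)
The plan is to establish the concentration of $\bar{P}_i(x'|x,a)$ around $P(x'|x,a)$ via an empirical Bernstein inequality, followed by a union bound over state-action-state tuples and visit counts. First I would fix a triple $(x,a,x') \in X_k \times A \times X_{k+1}$ and define $Y_j \in \{0,1\}$ to be the indicator that the successor state on the $j$-th episode in which $(x,a)$ is visited at layer $k$ equals $x'$. By the strong Markov property applied at the stopping times corresponding to these visits, $Y_1,Y_2,\dots$ are i.i.d.\ Bernoulli with mean $P(x'|x,a)$, and whenever $N_i(x,a)\ge 1$ we have the clean identification $\bar{P}_i(x'|x,a) = \frac{1}{N_i(x,a)}\sum_{j=1}^{N_i(x,a)} Y_j$.

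Next, for each fixed count $n \in \{1,\dots,T\}$, I would apply Bernstein's inequality to the i.i.d.\ sum $\sum_{j=1}^n Y_j$ to obtain, with probability at least $1-\delta'$,
$$|\bar P_n - P| \le \sqrt{\tfrac{2P(1-P)\ln(2/\delta')}{n}} + \tfrac{\ln(2/\delta')}{3n}.$$
To replace the true variance $P(1-P)$ with the empirical variance $\bar{P}_n(1-\bar{P}_n)$, I would use the elementary inequality $P(1-P) \le \bar P_n(1-\bar P_n) + |P-\bar P_n|$, substitute it into the above, and solve the resulting quadratic inequality for $|P-\bar P_n|$. This yields a bound of exactly the form of $\epsilon_i(x'|x,a)/2$, and the constants $4$ and $28/3$ that appear in $\epsilon_i$ emerge directly from the AM--GM splitting used when solving the quadratic. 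Alternatively, one may invoke the empirical Bernstein inequality of Maurer--Pontil off the shelf, but doing it by hand keeps the constants transparent.

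Finally, to handle the fact that $N_i(x,a)$ is a random stopping time, I would union-bound over all $n \in \{1,\dots,T\}$ and over all at most $|X|^2|A|$ triples $(x,a,x')$ (noting that $\bar P_i$ depends only on the first $N_i(x,a)$ observations, which are measurable once $n$ is fixed). Choosing $\delta' = \delta/(T|X||A|)$ and absorbing the two-sided tails and the quadratic-conversion slack into the overall factor of $4$ in front of $\delta$ gives the claimed bound uniformly over all $n$, hence in particular at $n = N_i(x,a)$ for every epoch $i$. The conclusion $P \in \calP_i$ then follows verbatim from the definition of $\calP_i$ in \pref{alg:MDP}. The main obstacle is the bookkeeping in the Bernstein-to-empirical-Bernstein conversion while recovering the exact constants in $\epsilon_i$; the union-bound step itself is routine once the uniform-in-$n$ form is in place.
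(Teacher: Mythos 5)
Your proposal is correct, but note that the paper does not prove this lemma at all---it is imported verbatim as Lemma~2 of \citet{jin2019learning}, and your argument is essentially a reconstruction of the proof given there (an empirical Bernstein / Maurer--Pontil bound on the per-visit successor indicators, made uniform over the random visit count $n$ and over all $(x,a,x')$ triples by a union bound with $\delta'=\delta/(T|X||A|)$, which is exactly where the $\ln(T|X||A|/\delta)$ and the $N_i(x,a)-1$ denominator in $\epsilon_i$ come from). The only point worth formalizing is the ``i.i.d.'' claim: since $N_i(x,a)$ is determined by the learner's adaptive policy, one should either couple the dynamics with a pre-drawn i.i.d.\ sequence of successors revealed upon each visit, or run the concentration as a martingale argument; your uniform-over-$n$ union bound then handles the random stopping correctly either way.
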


\begin{lemma}[Lemma~10 in \citep{jin2019learning}]\label{lem: L10-jin2019}
	With probability at least $1-\delta$, we have for all $k=0,\dots,\Lyr-1$,
	\begin{align*}
		\sum_{t=1}^T\sum_{x\in X_k, a\in A}\frac{\QQ_t(x,a)}{\max\{1,N_{i_t}(x,a)\}}&=\otil\left(|X_k|\cdot|A|+\ln\frac{1}{\delta}\right)\\
		\sum_{t=1}^T\sum_{x\in X_k, a\in A}\frac{\QQ_t(x,a)-\ind_t(x,a)}{\sqrt{\max\{1,N_{i_t}(x,a)\}}}&\le \sum_{t=1}^T\sum_{x\in X_k, a\in A}\frac{\QQ_t(x,a)}{\max\{1, N_{i_t}(x,a)\}}+\otil\left(\ln\frac{1}{\delta}\right) \nonumber\\
		&\le \otil\left(|X_k|\cdot|A|+\ln\frac{1}{\delta}\right),
	\end{align*}
	where $i_t$ is the index of the epoch to which episode $t$ belongs. 
\end{lemma}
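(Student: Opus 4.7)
Both inequalities follow from combining a deterministic epoch-doubling telescope with two applications of the strengthened Freedman inequality \pref{thm:Freedman}, one for each inequality. The backbone is the structural fact that, by construction of \pref{alg:MDP}, a new epoch is triggered whenever some counter doubles, so for the running visit count $\nu_t(x,a)=\sum_{s=1}^{t}\ind_s(x,a)$ one has $\nu_t(x,a)\le 2\max\{1,N_{i_t}(x,a)\}$ for every $t$ and every $(x,a)$. This yields, per $(x,a)$, the deterministic bound
\begin{equation*}
\sum_{t=1}^T\frac{\ind_t(x,a)}{\max\{1,N_{i_t}(x,a)\}}\le 2\sum_{t=1}^T\frac{\ind_t(x,a)}{\max\{1,\nu_t(x,a)\}}\le 2(1+\ln T),
\end{equation*}
so that summing over $X_k\times A$ gives $O(|X_k||A|\ln T)$ for the indicator version of the first sum.

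For the first inequality, I would replace $\ind_t(x,a)$ with $\QQ_t(x,a)=\mathbb{E}_t[\ind_t(x,a)]$ via a martingale concentration. Setting $X_t=\sum_{(x,a)\in X_k\times A}(\QQ_t(x,a)-\ind_t(x,a))/\max\{1,N_{i_t}(x,a)\}$, the weights are $\calF_t$-measurable and only one pair in layer $k$ is visited per episode, so $|X_t|\le 1$ almost surely and $\mathbb{E}_t[X_t^2]$ is bounded by $\sum_{x,a}\QQ_t(x,a)/\max\{1,N_{i_t}(x,a)\}$ itself (since $\ind_t(x,a)^2=\ind_t(x,a)$ and the denominator is at least $1$). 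Applying \pref{thm:Freedman} and absorbing the $\sqrt{V\ln(1/\delta)}$ term via AM--GM produces a self-bounding inequality whose resolution is $\otil(|X_k||A|+\ln(1/\delta))$, after a union bound over the $\Lyr$ layers.

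For the second inequality, the left-hand side is already a martingale difference sum with $\calF_t$-measurable weights $1/\sqrt{\max\{1,N_{i_t}(x,a)\}}\in[0,1]$, and exactly one pair per layer contributes in each round. A direct application of \pref{thm:Freedman} therefore gives deviation at most $\sqrt{V\ln(1/\delta)}+\otil(\ln(1/\delta))$, where the conditional variance simplifies to $V=\sum_t\sum_{x,a}\QQ_t(x,a)/\max\{1,N_{i_t}(x,a)\}$, i.e.\ precisely the quantity bounded in part one. AM--GM converts $\sqrt{V\ln(1/\delta)}$ into $V+\otil(\ln(1/\delta))$, producing the middle expression in the statement, and chaining with the first bound delivers the advertised $\otil(|X_k||A|+\ln(1/\delta))$.

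The principal technical obstacle is the epoch-doubling step: one must carefully interpret $N_{i_t}(x,a)$ (which is updated within epoch $i_t$ as trajectories arrive in \pref{alg:MDP}), perhaps by working with its value at the \emph{start} of round $t$, so that the doubling rule gives the clean factor of $2$ in the telescope. A secondary subtlety is packaging each Freedman application as \emph{one} martingale per layer rather than one per $(x,a)$ pair; this is essential to keep the per-round range $B_t$ equal to a $\calF_t$-measurable constant of order $1$ instead of blowing up by a $|X_k||A|$ factor from a naive union bound.
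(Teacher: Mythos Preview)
The paper does not give its own proof of this lemma; it simply quotes it as Lemma~10 of \citep{jin2019learning} and uses it as a black box. Your sketch is essentially the standard argument from that reference: the epoch-doubling structure yields the deterministic $\otil(|X_k||A|)$ bound for the indicator version, and a single per-layer martingale concentration transfers indicators to occupancy measures via a self-bounding step. One small note: since only one $(x,a)$ pair per layer is visited each round, the per-round increments satisfy $|X_t|\le 1$ deterministically, so ordinary Freedman already suffices and you do not need the strengthened version \pref{thm:Freedman}. Your caveat about interpreting $N_{i_t}(x,a)$ at the start of round $t$ is the right place to be careful, and the factor-of-2 slack you identify is exactly how the cited reference handles it.
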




Next, we prove a lemma showing that there exists a transition function $P_0$ that always lies in the confidence set $\calP_i$ of the algorithm, such that for any action $a\in A$ and any two states $x,x'$ in consecutive layers, the probability of reaching $x'$ by taking action $a$ at state $x$ is at least $\frac{1}{T|X|}$. 

\begin{lemma}\label{lem:p0xxa}
	With probability at least $1-4\delta$, there exists $P_0\in \cap_i \mathcal{P}_i$ such that for all $k < \Lyr, x\in X_k, a\in A$, and $x'\in X_{k+1}$, we have $P_0(x'|x,a)\geq \frac{1}{T|X|} $.
\end{lemma}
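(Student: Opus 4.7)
The plan is to construct $P_0$ as an explicit mixture of the true transition $P$ with a uniform perturbation, so that the pointwise lower bound $\tfrac{1}{T|X|}$ is visible by inspection while $P_0$ stays close enough to $P$ (and hence to every $\bar P_i$) to remain inside every confidence set. Concretely, I would set
\begin{equation*}
P_0(x'\mid x,a) \;=\; \Bigl(1-\tfrac{1}{T}\Bigr)P(x'\mid x,a) \;+\; \tfrac{1}{T|X_{k+1}|},
\end{equation*}
for every $k<\Lyr$, $(x,a)\in X_k\times A$, and $x'\in X_{k+1}$. Each row is nonnegative and sums to one, so $P_0$ is a valid transition kernel, and the target lower bound $P_0(x'\mid x,a)\ge \tfrac{1}{T|X_{k+1}|}\ge \tfrac{1}{T|X|}$ is immediate.

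The remaining task is to verify $P_0\in\mathcal{P}_i$ for every epoch $i$. I would work on the event of \pref{lem: MDP-PinP}, which holds with probability at least $1-4\delta$ and guarantees $|P(x'\mid x,a)-\bar P_i(x'\mid x,a)|\le \epsilon_i(x'\mid x,a)/2$ for all relevant tuples and all $i$. Two triangle inequalities, together with the fact that $\tfrac{1}{|X_{k+1}|},\bar P_i\in[0,1]$, then yield
\begin{equation*}
\bigl|P_0(x'\mid x,a)-\bar P_i(x'\mid x,a)\bigr| \;\le\; \Bigl(1-\tfrac{1}{T}\Bigr)\bigl|P-\bar P_i\bigr| + \tfrac{1}{T}\Bigl|\tfrac{1}{|X_{k+1}|}-\bar P_i\Bigr| \;\le\; \tfrac{\epsilon_i(x'\mid x,a)}{2} + \tfrac{1}{T},
\end{equation*}
so membership in $\mathcal{P}_i$ reduces to verifying $\tfrac{1}{T}\le \tfrac{\epsilon_i(x'\mid x,a)}{2}$.

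The one quantitative step, and the only place where the construction could conceivably fail, is this last inequality; the concern is that for a state–action pair visited many times, $\epsilon_i$ could shrink below $1/T$. The explicit Bernstein-type formula saves us: dropping the square-root summand one has
\begin{equation*}
\epsilon_i(x'\mid x,a) \;\ge\; \tfrac{28\ln(T|X||A|/\delta)}{3\max\{1,N_i(x,a)-1\}} \;\ge\; \tfrac{28\ln(T|X||A|/\delta)}{3T},
\end{equation*}
since $N_i(x,a)\le T$, and this is at least $\tfrac{2}{T}$ as soon as $\ln(T|X||A|/\delta)\ge \tfrac{3}{14}$, which is trivially satisfied. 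The key feature being exploited is that the lower-order $1/N_i$ term in $\epsilon_i$ (rather than only a $1/\sqrt{N_i}$ term) prevents the confidence radius from collapsing too fast, which is precisely what allows a perturbation of size $1/T$ to remain in every $\mathcal{P}_i$ and completes the proof.
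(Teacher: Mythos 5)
Your proposal is correct and follows essentially the same route as the paper: both construct $P_0$ as a pointwise perturbation of $P$ of size at most $1/T$ (the paper raises deficient entries to $1/(T|X|)$ by moving mass from the largest entry, you mix with a uniform distribution), and both reduce membership in every $\calP_i$ to the inequality $1/T \le \epsilon_i(x'|x,a)/2$, which holds because the lower-order $1/N_i$ term keeps the confidence radius above $2/T$. Your write-up actually makes explicit the $\epsilon_i \ge 2/T$ verification that the paper leaves implicit.
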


\begin{proof}
   The construction of $P_0$ is as follows.
   First we start with $P_0 = P$.
   Then for each fixed $(x,a)$, we focus on the distribution $P_0(\cdot |x,a)$.
   In particular, for all $x' \in X_{k(x)+1}$ such that $P_0(x' |x,a) < \frac{1}{T|X|}$, we move the weight from the largest entry of $P_0(\cdot |x,a)$ to this entry so that $P_0(x' |x,a) = \frac{1}{T|X|}$ and $P_0(\cdot |x,a)$ remains a valid distribution.
   Repeat the same for all $(x,a)$ pairs finishes the construction of $P_0$.
   
   Clearly, $P_0$ satisfies $P_0(x'|x,a)\geq \frac{1}{T|X|}$, and it remains to show $P_0 \in \calP_i$ for all $i$.
   To this end, we first note that 
   \begin{equation*}
	\left|P_0(x'|x,a)-P(x'|x,a)\right| \le \frac{|X_{k(x')}|}{T|X|} \leq \frac{1}{T} \leq \frac{\epsilon_i(x'|x,a)}{2}
	\end{equation*}
	holds  for all $k = 0,1,\dots, \Lyr-1$ and $ (x,a,x')\in X_k\times A\times X_{k+1}$.
	Combining this with \pref{eq:MDP-PinP} then shows that $\left|P_0(x'|x,a)-\bar{P}_i(x'|x,a)\right|  \leq \epsilon_i(x'|x,a)$, indicating that $P_0$ is indeed in $\calP_i$ by the definition of $\calP_i$.
\end{proof}

The next lemma shows that the upper occupancy bound for each state-action pair is lower bounded.

\begin{lemma}\label{lem:upper-occu-bound}
	We have $\Ut(x,a)\ge\frac{1}{T^3|X|^2|A|}$ for all $x\in X$ and $a\in A$.
\end{lemma}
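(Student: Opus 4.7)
The plan is to observe that this bound is essentially immediate from the definition of the truncated domain $\Omega$ together with the definition of the upper occupancy bound.

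First, recall that by construction of the algorithm, $\Qhat_t$ is obtained by minimizing over $\Delta(\mathcal{P}_{i_t}) \cap \Omega$, where $i_t$ is the epoch containing episode $t$. In particular, $\Qhat_t \in \Omega$, and hence every coordinate satisfies $\Qhat_t(x,a,x') \ge \frac{1}{T^3 |X|^2 |A|}$. Summing this lower bound over $x' \in X_{k(x)+1}$ yields
\[
\Qhat_t(x,a) \;=\; \sum_{x' \in X_{k(x)+1}} \Qhat_t(x,a,x') \;\ge\; \frac{|X_{k(x)+1}|}{T^3 |X|^2 |A|} \;\ge\; \frac{1}{T^3 |X|^2 |A|}.
\]

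Next, since $\Qhat_t \in \Delta(\mathcal{P}_{i_t})$, by \pref{lem:occu-cstrain} there exists a transition function $\hat P \in \mathcal{P}_{i_t}$ such that $\Qhat_t$ is exactly the occupancy measure induced by $\hat P$ and the policy $\pi^{\Qhat_t}$. By line~10 of \pref{alg:MDP} we set $\pi_t = \pi^{\Qhat_t}$, so $\Qhat_t(x,a) = w^{\hat P, \pi_t}(x,a)$. Then, by definition of the upper occupancy bound in \pref{eq:upconf_def},
\[
\upconf_t(x,a) \;=\; \max_{\hat P' \in \mathcal{P}_{i_t}} w^{\hat P', \pi_t}(x,a) \;\ge\; w^{\hat P, \pi_t}(x,a) \;=\; \Qhat_t(x,a) \;\ge\; \frac{1}{T^3 |X|^2 |A|},
\]
which is the desired inequality. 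There is no real obstacle here; the entire argument is one line of chained inequalities, and the role of $\Omega$ in the algorithm is precisely to enforce this uniform lower bound so that subsequent arguments (bounding the number of learning-rate increases, controlling the stability term, etc.) can go through.
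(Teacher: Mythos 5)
Your proof is correct and follows essentially the same (one-line) argument as the paper: $\upconf_t(x,a) \geq \Qhat_t(x,a)$ because the transition induced by $\Qhat_t \in \Delta(\calP_{i_t})$ is one of the candidates in the max defining $\upconf_t$, and $\Qhat_t(x,a) \geq \frac{1}{T^3|X|^2|A|}$ by the constraint $\Qhat_t \in \Omega$. Your version merely spells out the intermediate steps that the paper leaves implicit.
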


\begin{proof}
   This is simply by the definition of $\Ut$ in \pref{eq:upconf_def}  and the definition of $\Omega$: 
$
	\Ut(x,a) \geq \Qhat_t(x, a) \ge \frac{1}{T^3|X|^2|A|}.
$
\end{proof}

The last lemma is an improvement of~\citep[Lemma~4]{jin2019learning} and is important for bounding \errorterm and \bias in terms of $\sqrt{L_T}$, as opposed to $\sqrt{T}$ (which is the case in~\citep{{jin2019learning}}).
    
\begin{lemma}\label{lem:MDP-small-loss-lemma}
With probability at least $1-\cprob\delta$, for any $t$ and any collection of
	transition functions $\left\{P_t^x\right\}_{x\in X}$ such that $P_t^x \in \calP_{i_t}$ for all $x$ (where $i_t$ is the index of the epoch to which episode $t$ belongs), we have
	\begin{align*}
		&\sum_{t=1}^T\sum_{x\in X,a\in A}\left|w^{P_t^x,\pi_t}(x,a)-w_t(x,a)\right| \ell_t(x,a)\\ &= \mathcal{\tilde{O}}\left(|X|\sqrt{\Lyr|A| \LT\ln\frac{1}{\delta}}+|X|^5|A|^2\ln \frac{1}{\delta}+|X|^4|A|\ln^2\frac{1}{\delta}\right).
 	\end{align*}
\end{lemma}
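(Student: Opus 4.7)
I plan to follow the skeleton of Lemma~4 in~\citep{jin2019learning} but retain the factor $\ell_t(x,a)$ throughout so that the final bound scales with $\LT$ rather than $T$. Conditioning on the event of \pref{lem: MDP-PinP} (which holds with probability at least $1-4\delta$), both $P$ and each $P_t^x$ lie in $\calP_{i_t}$. The standard occupancy-measure difference decomposition from~\citep{jin2019learning} then yields
\[
\left|w^{P_t^x,\pi_t}(x,a) - w_t(x,a)\right|\ \le\ 2\sum_{k'<k(x)}\sum_{(u,b,v)\in X_{k'}\times A\times X_{k'+1}} w_t(u,b)\,\epsilon_{i_t}(v\mid u,b)\,q_t^v(x,a),
\]
where $q_t^v(x,a):=\Pr[x_{k(x)}=x,\,a_{k(x)}=a\mid x_{k'+1}=v,P,\pi_t]$. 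Multiplying by $\ell_t(x,a)$ and swapping the $(x,a)$ and $k'$ sums identifies $\sum_{x,a}q_t^v(x,a)\ell_t(x,a)$ as the conditional loss-to-go $\Phi_t(v):=\E\bigl[\sum_{k>k'}\ell_t(x_k,a_k)\mid x_{k'+1}=v,\pi_t,P\bigr]\in[0,\Lyr]$, giving
\[
\sum_{t,x,a}\left|w^{P_t^x,\pi_t}(x,a)-w_t(x,a)\right|\ell_t(x,a)\ \le\ 2\sum_{k'=0}^{\Lyr-1}\sum_{t,u,b,v} w_t(u,b)\,\epsilon_{i_t}(v\mid u,b)\,\Phi_t(v).
\]

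The crux is a Cauchy-Schwarz step that exploits both $\sum_v P(v|u,b)\Phi_t(v)$ and the naive bound $\Phi_t(v)\le\Lyr$ simultaneously. I split $\epsilon_{i_t}$ into its $\sqrt{\bar P/N}$ part and its $1/N$ part; the standard self-bounding inequality gives $\bar P_{i_t}(v|u,b)\le 2P(v|u,b)+O(\ln(T|X||A|/\delta)/N_{i_t}(u,b))$, so the first piece is controlled by a constant multiple of $\sqrt{P(v|u,b)\ln(T|X||A|/\delta)/N_{i_t}(u,b)}$. I then rewrite
\[
w_t(u,b)\sqrt{\tfrac{P(v|u,b)}{N_{i_t}(u,b)}}\,\Phi_t(v)\ =\ \sqrt{w_t(u,b)P(v|u,b)\Phi_t(v)}\cdot\sqrt{\tfrac{w_t(u,b)\Phi_t(v)}{N_{i_t}(u,b)}}
\]
and apply Cauchy-Schwarz over $(t,u,b,v)$. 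For the first factor, $\sum_v P(v|u,b)\Phi_t(v)=\ell^{(>k')}_t(u,b)$ and $\sum_{u,b}w_t(u,b)\ell^{(>k')}_t(u,b)$ is exactly the expected loss of $\pi_t$ in episode $t$ from layer $k'+1$ onwards, which is at most $\ell_t(\pi_t)$; summing over $t$ is thus at most $\LT$. For the second factor, using $\Phi_t(v)\le\Lyr$ together with \pref{lem: L10-jin2019} gives $\sum_{t,u,b,v} w_t(u,b)\Phi_t(v)/N_{i_t}(u,b)\le \Lyr|X_{k'+1}|\cdot\otil(|X_{k'}||A|+\ln(1/\delta))$. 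Multiplying the two factors, summing over $k'$, and invoking $\sum_{k'}|X_{k'+1}||X_{k'}|\le|X|^2$ with a final Cauchy-Schwarz across layers produces the main term $\otil(|X|\sqrt{\Lyr|A|\LT\ln(1/\delta)})$.

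The remaining lower-order terms arise from three auxiliary bookkeeping sources. First, the $1/N_{i_t}$ part of $\epsilon_{i_t}$, after multiplication by $\Phi_t(v)\le\Lyr$ and summation via \pref{lem: L10-jin2019}, contributes $\otil(\Lyr|X|^2|A|\ln(1/\delta))$. Second, the $\bar P\to P$ replacement slack introduces a similar $\otil(\Lyr|X|^2|A|\ln(1/\delta))$ term. Third, for adaptive adversaries one needs a Freedman correction to pass between the expected future loss $\ell^{(>k')}_t$ and its realized counterpart along the trajectory (so that the $\LT$ appearing inside the Cauchy-Schwarz can be taken as the random quantity in the statement); this contributes at most $\otil(|X|^4|A|\ln^2(1/\delta))$ after combining with the per-layer visit-count bounds, and multiplicative blow-ups from a union bound over $\{P_t^x\}_{x}$ account for the stated $|X|^5|A|^2\ln(1/\delta)$ tail. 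The main obstacle is the precise Cauchy-Schwarz split in the main step: a naive choice that isolates all of $\Phi_t(v)$ into one factor (essentially what is done in~\citep{jin2019learning}) only recovers a $\sqrt{T}$ dependence, whereas the split above uses the identity $\sum_v P(v|u,b)\Phi_t(v)=\ell^{(>k')}_t(u,b)$ to telescope one factor all the way down to $\LT$ while leaving the loose $\Phi_t(v)\le\Lyr$ bound to act inside the other factor, which already contains a $1/N_{i_t}$ that tames the sum via \pref{lem: L10-jin2019}.
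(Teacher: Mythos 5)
Your main-term argument is essentially the paper's: retain $\ell_t(x,a)$ in the decomposition, recognize the inner sum as a conditional loss-to-go, and use a Cauchy--Schwarz split in which one factor telescopes to $\LT$ while the other is tamed by the $1/N_{i_t}$ and \pref{lem: L10-jin2019}. Your particular split (keeping $w_t(u,b)$ throughout rather than passing to the realized indicators $\ind_t$) is a legitimate variant that even sidesteps the Freedman step the paper needs to convert $\sum_t\sum_{k>m}\sum_{x,a}\ind_t(x,a)\ell_t(k|x,a)$ back into $\LT$.

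The genuine gap is in your very first display. The exact occupancy-measure decomposition (the paper's \pref{eq: error-1}, i.e.\ Eq.~(15) of \citep{jin2019learning}) produces the conditional occupancy $w^{P_t^x,\pi_t}(x,a\mid x_{m+1})$ \emph{under the plausible transition $P_t^x$}, not the true-transition quantity $q_t^v(x,a)=\Qt(x,a\mid v)$ that your argument needs. You silently replace the former by twice the latter. This is not justified: the confidence set only controls single-step transition probabilities, and a conditional occupancy is a product of up to $\Lyr$ of them, so early in learning $w^{P_t^x,\pi_t}(x,a\mid v)$ can exceed $\Qt(x,a\mid v)$ by an arbitrarily large factor (e.g.\ when $\Qt(x,a\mid v)$ is exponentially small but every per-step deviation is within $\epsilon_{i_t}$). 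The paper handles this with a second expansion (\pref{eq: error-2}, Eq.~(16) of \citep{jin2019learning}), which produces the double-$\epsilon$ cross term $B_2$; it is $|X|B_2$ — not a union bound over $\{P_t^x\}_{x}$, which is unnecessary since the decomposition is deterministic on the confidence-set event — that generates the $\otil(|X|^5|A|^2\ln\frac{1}{\delta}+|X|^4|A|\ln^2\frac{1}{\delta})$ lower-order terms in the statement. So your accounting of where those terms come from is wrong, and without the second expansion (or some other control of the ratio $w^{P_t^x,\pi_t}(\cdot\mid v)/\Qt(\cdot\mid v)$) the proof does not go through. Also, conditioning only on \pref{lem: MDP-PinP} gives $1-4\delta$; you additionally need \pref{lem: L10-jin2019} and the Freedman applications, which is why the paper's failure probability is $\cprob\delta$.
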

\begin{proof}
   The proof is technical but mostly follows the same ideas of that for~\citep[Lemma~4]{jin2019learning}.
	We first assume that the events of \pref{lem: MDP-PinP} and \pref{lem: L10-jin2019} hold, which happens with probability at least $1-5\delta$. According to the proof of \citep[Lemma~4]{jin2019learning} (specifically their Eq.~(15)), we have for any pair $(x,a)$,
	\begin{align}\label{eq: error-1}
	&\left|w^{P_t^x, \pi_t}(x,a)-\Qt(x,a)\right|\cdot \ell_t(x,a) \nonumber\\
	&\le \sum_{m=0}^{k(x)-1}\sum_{x_m,a_m,x_{m+1}}\epsilon_{i_t}^\star(x_{m+1}|x_m,a_m)\Qt(x_m,a_m)w^{P_t^x, \pi_t}(x,a|x_{m+1})\cdot \ell_t(x,a),
	\end{align}
    where $\epsilon_{i_t}^{\star}\left(x^{\prime} | x, a\right)= \mathcal{O}\left(\sqrt{\frac{P\left(x^{\prime} | x, a\right) \ln \left(\frac{T|X||A|}{\delta}\right)}{\max \left\{1, N_{i_t}(x, a)\right\}}}+\frac{\ln \left(\frac{T|X||A|}{\delta}\right)}{\max \left\{1, N_{i_t}(x, a)\right\}}\right)$, and for an occupancy measure $w$, $w(x,a|x')$ denotes the probability of encountering the pair $(x,a)$ given that $x'$ was visited earlier, under policy $\pi^w$ and $P^w$.
	By their Eq.~(16), we also have
	\begin{align}\label{eq: error-2}
	&|w^{P_t^x, \pi_t}(x,a|x_{m+1})-\Qt(x,a|x_{m+1})| \nonumber\\
	&\le \pi_t(a|x)\sum_{h=m+1}^{k(x)-1}\sum_{x_h',a_h',x_{h+1}'}\epsilon_{i_t}^\star(x_{h+1}'|x_h',a_h')\Qt(x_h',a_h'|x_{m+1}),
	\end{align}
    Combining \pref{eq: error-1} and \pref{eq: error-2}, summing over all $t$ and $(x,a)$ and using shorthands $z_m\triangleq(x_m, a_m, x_{m+1})$ and $z_h'\triangleq(x_h',a_h',x_{h+1}')$, we have
	
	\begin{align*}
	&\sum_{t=1}^T\sum_{x\in X,a\in A}|w^{P_t^x, \pi_t}(x,a)-\Qt(x,a)|\cdot \ell_t(x,a)\\
	&\le \sum_{t,x,a}\sum_{m=0}^{k(x)-1}\sum_{z_m}\epsilon_{i_t}^\star(x_{m+1}|x_m,a_m)\Qt(x_m,a_m)\Qt(x,a|x_{m+1})\ell_t(x,a)\\
	&\quad + \sum_{t,x,a}\sum_{m=0}^{k(x)-1}\sum_{z_m}\epsilon_{i_t}^\star(x_{m+1}|x)\Qt(x_m,a_m)\cdot \left(\pi_t(a|x)\sum_{h=m+1}^{k(x)-1}\sum_{z_h'}\epsilon_{i_t}^\star(x_{h+1}'|x_h',a_h')\Qt(x_h',a_h'|x_{m+1})\right) \tag{$\ell_t(x,a)\le 1$}\\
	&= \sum_{t}\sum_{k<\Lyr}\sum_{m=0}^{k-1}\sum_{z_m}\epsilon_{i_t}^\star(x_{m+1}|x_m,a_m)\Qt(x_m,a_m)\sum_{x\in X_k, a\in A}\Qt(x,a|x_{m+1})\ell_t(x,a)\\
	&\quad + \sum_{t}\sum_{0\le m<h<k<\Lyr}\sum_{z_m,z_h'}\epsilon_{i_t}^\star(x_{m+1}|x)\Qt(x_m,a_m)\epsilon_{i_t}^\star(x_{h+1}'|x_h',a_h')\Qt(x_h',a_h'|x_{m+1}) \cdot \left(\sum_{x\in X_k, a\in A}\pi_t(a|x)\right)\\
	&= \sum_{0\leq m<k<\Lyr}\sum_{t, z_m}\epsilon_{i_t}^\star(x_{m+1}|x_m,a_m)\Qt(x_m,a_m)\sum_{x\in X_k, a\in A}\Qt(x,a|x_{m+1})\ell_t(x,a)\\
		&\quad +  \sum_{0\le m<h<k<\Lyr}|X_k|\sum_{t, z_m,z_h'}\epsilon_{i_t}^\star(x_{m+1}|x)\Qt(x_m,a_m)\epsilon_{i_t}^\star(x_{h+1}'|x_h',a_h')\Qt(x_h',a_h'|x_{m+1})  \\
			&\leq \sum_{0\leq m<k<\Lyr}\sum_{t, z_m}\epsilon_{i_t}^\star(x_{m+1}|x_m,a_m)\Qt(x_m,a_m)\sum_{x\in X_k, a\in A}\Qt(x,a|x_{m+1})\ell_t(x,a)\\
		&\quad + |X| \sum_{0\le m<h<\Lyr}\sum_{t,z_m,z_h'}\epsilon_{i_t}^\star(x_{m+1}|x)\Qt(x_m,a_m)\epsilon_{i_t}^\star(x_{h+1}'|x_h',a_h')\Qt(x_h',a_h'|x_{m+1})  \\
	&\triangleq B_1+ |X| B_2.
	\end{align*}
	
	It remains to bound $B_1$ and $B_2$.
	First, $B_2$ is exactly the same as in the proof of~\citep[Lemma~4]{jin2019learning}.
	Below, we outline the proof with the dependence on all parameters explicit (indeed, this is hidden in their proof).
	First, according to their analysis, $B_2$ is bounded by 
	\begin{align*}
		&\otil\left(\sum_{0\le m<h<\Lyr}\sum_{t,z_m, z_h'}\sqrt{\frac{P(x_{m+1}|x_m,a_m)\ln\frac{1}{\delta}}{\max\{1, N_{i_t}(x_m,a_m)\}}}\QQ_t(x_m,a_m)\sqrt{\frac{P(x_{h+1}'|x_h',a_h')\ln\frac{1}{\delta}}{\max\{1, N_{i_t}(x_h',a_h')\}}}\QQ_t(x_h',a_h'|x_{m+1})\right) \\
		&+\otil\left(\sum_{0\le m<h<\Lyr}\sum_{t,z_m,z_h'}\frac{\QQ_t(x_m,a_m)\ln\frac{1}{\delta}}{\max\{1,N_{i_t}(x_m,a_m)\}}\right)+\otil\left(\sum_{0\le m<h<\Lyr}\sum_{t,z_m,z_h'}\frac{\QQ_t(x_h',a_h')\ln\frac{1}{\delta}}{\max\{1,N_{i_t}(x_h',a_h')\}}\right).
	\end{align*}
	
	They show that the first term is bounded by $\otil(|X|^2|A|\ln^2(1/\delta))$. For the second term, we have
	\begin{align*}
		&\sum_{0\le m<h<\Lyr}\sum_{t,z_m,z_h'}\frac{\QQ_t(x_m,a_m)\ln\frac{1}{\delta}}{\max\{1,N_{i_t}(x_m,a_m)\}} \\
		&\le \left(\sum_{h=0}^{J-1}|X_h|\cdot |A|\cdot|X_{h+1}|\ln\frac{1}{\delta}\right)\sum_{m=0}^{\Lyr-1}|X_{m+1}|\cdot \sum_{t,x\in X_m,a\in A}\frac{\QQ_t(x_m,a_m)}{\max\{1,N_{i_t}(x_m,a_m)\}}\\
		&\le \mathcal{O}\left(|X|^2|A|\ln\frac{1}{\delta}\right)\cdot \otil\left(|X|^2|A|+|X|\ln\frac{1}{\delta}\right) \tag{\pref{lem: L10-jin2019}}\\
		&\le \otil\left(|X|^4|A|^2\ln\frac{1}{\delta}+|X|^3|A|\ln\frac{1}{\delta}\right).
	\end{align*}
	The third term can be bounded in the exact same way. Therefore, we arrive at
	\begin{align}\label{eq: MDP-B2}
		|X|B_2\le \otil\left(|X|^5|A|^2\ln(1/\delta)+|X|^4|A|\ln^2(1/\delta)\right).
	\end{align}
	
	Next we show that $B_1$ is bounded by $\otil(|X|\sqrt{\Lyr|A|\LT\ln(1/\delta)}+|X|^3|A|\ln(1/\delta))$. According to the definition of $\epsilon_{i_t}^\star$, we have
	\begin{align}\label{eq: error-B1}
	B_1 &= \mathcal{O}\left(\sum_{0\le m<k<\Lyr}\sum_{t,z_m}\Qt(x_m,a_m)\left(\sum_{x\in X_k,a\in A}\Qt(x,a|x_{m+1})\ell_t(x,a)\right)\right.\nonumber \\
	&\quad\cdot\left.\sqrt{\frac{P(x_{m+1}|x_m,a_m)\ln \left(\frac{T|X||A|}{\delta}\right)}{\max\{1, N_{i_t}(x_m,a_m)\}}}\right)+ \mathcal{O}\left(\sum_{0\le m<k<\Lyr}\sum_{t,z_m}\frac{\Qt(x_m,a_m)\ln\left(\frac{T|X||A|}{\delta}\right)}{\max\{1,N_{i_t}(x_m,a_m)\}}\right).
	\end{align}
	
	According to \pref{lem: L10-jin2019},  the second term is bounded as
	\begin{align}\label{eq: error-3}
	\mathcal{O}\left(\sum_{0\le m<k<\Lyr}\sum_{t,z_m}\frac{\Qt(x_m,a_m)\ln\left(\frac{T|X||A|}{\delta}\right)}{\max\{1,N_{i_t}(x_m,a_m)\}}\right)\le \otil\left(\Lyr|X|^2|A|+\Lyr|X|\ln\frac{1}{\delta}\right).
	\end{align}
	
	In the following, we define $\ell_t(k|x,a)\triangleq\sum_{x_k\in X_k, a_k\in A}\ell_{t}(x_k,a_k) \Qt(x_k,a_k|x,a)$ where $\Qt(x',a'|x,a)$ is the probability of encountering pair $(x',a')$ given that pair $(x,a)$ was encountered earlier, under policy $\pi_t$ and transition $P$. 
	For the first term of \pref{eq: error-B1}, we then have 
	\begin{align}
	&\sum_{0\le m<k<\Lyr}\sum_{t,z_m}\Qt(x_m,a_m)\left(\sum_{x\in X_k,a\in A}\Qt(x,a|x_{m+1})\ell_t(x,a)\right)\cdot\sqrt{\frac{P(x_{m+1}|x_m,a_m)\ln \left(\frac{T|X||A|}{\delta}\right)}{\max\{1, N_{i_t}(x_m,a_m)\}}} \nonumber\\
	&\le \sum_{0\le m<k<\Lyr}\sum_{t,z_m}\Qt(x_m,a_m)\sqrt{\left(\sum_{x\in X_k,a\in A}\Qt(x,a|x_{m+1})\ell_t(x,a)\right)\cdot \frac{P(x_{m+1}|x_m,a_m)\ln \left(\frac{T|X||A|}{\delta}\right)}{\max\{1, N_{i_t}(x_m,a_m)\}}} \nonumber\\
	&\le \sum_{0\le m<k<\Lyr}\sum_{t,x_m,a_m}\Qt(x_m,a_m)\sqrt{|X_{m+1}|\cdot \frac{\ell_t(k|x_m,a_m)\ln \left(\frac{T|X||A|}{\delta}\right)}{\max\{1, N_{i_t}(x_m,a_m)\}}} \tag{Cauchy-Schwarz inequality} \\
	& \le \sum_{0\le m<k<\Lyr}\sqrt{|X_{m+1}|\ln\left(\frac{T|X||A|}{\delta}\right)} \nonumber\\
	&\quad \cdot\sum_{t,x_m,a_m}\left(\ind_t(x_m,a_m)\sqrt{\frac{\ell_t(k|x_m,a_m)}{\max\{1,N_{i_t}(x_m,a_m)\}}}+\frac{\Qt(x_m,a_m)-\ind_t(x_m,a_m)}{\sqrt{\max\{1,N_{i_t}(x_m,a_m)\}}}\right).\label{eq:bias1-term1}
	\end{align}
	
	According to \pref{lem: L10-jin2019} again, we have for all $m=0,1,\dots,\Lyr-1$, 
	\begin{align*}
	\sum_{t=1}^T\sum_{x_m,a_m}\frac{\Qt(x_m,a_m)-\ind_t(x_m,a_m)}{\sqrt{\max\{1, N_{i_t}(x_m,a_m)\}}} \le \otil \left(|X_m||A|+ \ln(1/\delta)\right).
	\end{align*}
	For the term $\sum_{t,x_m,a_m}\ind_t(x_m,a_m)\sqrt{\frac{\ell_t(k|x_m,a_m)}{\max\{1,N_{i_t}(x_m,a_m)\}}}$, using Cauchy-Schwarz inequality, we have
	\begin{align} 
	&\sum_{t,x_m,a_m}\ind_t(x_m,a_m)\sqrt{\frac{\ell_t(k|x_m,a_m)}{\max\{1,N_{i_t}(x_m,a_m)\}}} \nonumber\\
	&\le \sum_{x_m,a_m}\sqrt{\sum_{t=1}^T\frac{\ind_t(x_m,a_m)}{\max\{1,N_{i_t}(x_m,a_m)\}}}\cdot \sqrt{\sum_{t=1}^T\ind_t(x_m,a_m)\ell_t(k|x_m,a_m)} \tag{Cauchy-Schwarz inequality} \nonumber\\
	&\le \mathcal{O}\left(\sqrt{|X_m||A| \left(\sum_{t=1}^T\sum_{x\in X_m, a\in A}\ind_t(x_m,a_m)\ell_t(k|x,a)\right) \cdot \ln T}\right), \label{eq:bias1-term2}
	\end{align}
	where the last step uses Cauchy-Schwarz inequality again and the fact $\sum_{t=1}^T\frac{\ind_t(x_m,a_m)}{\max\{1,N_{i_t}(x_m,a_m)\}}\le \mathcal{O}(\ln T)$.
	Combining \pref{eq:bias1-term1} and \pref{eq:bias1-term2}, we have
	\begin{align}\label{eq: error-4}
	&\sum_{0\le m<k<\Lyr}\sum_{t,z_m}\Qt(x_m,a_m)\left(\sum_{x\in X_k,a\in A}\Qt(x,a|x_{m+1})\ell_t(x,a)\right)\cdot\sqrt{\frac{P(x_{m+1}|x_m,a_m)\ln \left(\frac{T|X||A|}{\delta}\right)}{\max\{1, N_{i_t}(x_m,a_m)\}}} \nonumber\\
	&\le \otil\left(\sum_{0\le m<k<\Lyr}\sqrt{|X_m||A||X_{m+1}|\ln \frac{1}{\delta}} \sqrt{\sum_{t=1}^T\sum_{x\in X_m, a\in A}\ind_t(x_m,a_m)\ell_t(k|x,a)}\right) \nonumber\\
	&\le \otil\left(\sum_{m=0}^{\Lyr-1}\sqrt{\Lyr|X_m||A||X_{m+1}|\sum_{t=1}^T\sum_{k>m}\sum_{x\in X_m, a\in A}\ind_t(x_m,a_m)\ell_t(k|x,a)\ln \frac{1}{\delta}}\right). \tag{Cauchy-Schwarz inequality}
	\end{align}
	Further note that 
  		\begin{align*}
  			&\mathbb{E}_t\left[\sum_{k>m}\sum_{x\in X_m, a\in A}\ind_t(x_m,a_m)\ell_t(k|x,a)\right]\\
  			&= \sum_{k>m} \sum_{x\in X_m, a\in A}\Qt(x,a)\ell_t(k|x,a)\\
  			&= \sum_{x\in X_m, a\in A}\Qt(x,a)\sum_{k>m}\sum_{x'\in X_k, a'\in A}\Qt(x',a'|x,a)\ell_{t}(x',a') \\
  			&= \sum_{k>m}\sum_{x'\in X_k, a'\in A}\Qt(x',a')\ell_t(x',a')\\
              &\le \left<\Qt, \ell_t\right>
        \end{align*}
        and
        \begin{align*}
  			\mathbb{E}_t\left[\left(\sum_{k>m}\sum_{x\in X_m, a\in A}\ind_t(x_m,a_m)\ell_t(k|x,a)\right)^2\right]&\le \Lyr\left<\Qt, \ell_t\right>.
  		\end{align*}
  		Using Freedman inequality $\Lyr$ times with parameter $\delta/\Lyr$ for $m=0,1,\dots,\Lyr-1$ and taking a union bound, we have with probability $1-\delta$, for all $m=0,1,\dots,\Lyr-1$,   		\begin{align*}
  			&\sum_{t=1}^T\sum_{k>m}\sum_{x\in X_m, a\in A}\ind_t(x_m,a_m)\ell_t(k|x,a)-\sum_{t=1}^T\left<\Qt, \ell_t\right>\\
              &\le \otil\left(\sqrt{\Lyr\sum_{t=1}^T\left<\Qt, \ell_t\right>\ln \frac{1}{\delta}}+\Lyr\ln\frac{1}{\delta}\right) =\otil\left(\sqrt{\Lyr\LT\ln \frac{1}{\delta}}+\Lyr\ln\frac{1}{\delta}\right) .	
  		\end{align*}
        Therefore, using AM-GM inequality, we have
        \begin{align*}
        \sum_{t=1}^T\sum_{k>m}\sum_{x\in X_m, a\in A}\ind_t(x_m,a_m)\ell_t(k|x,a)\le\otil\left(\LT+\Lyr\ln\frac{1}{\delta}\right).
        \end{align*}
        Combining the  results above and \pref{eq: error-3}, we know that with probability at least $1-\delta$,
        \begin{align}\label{eq: MDP-B1}
        	B_1&\le \otil\left(|X|\sqrt{\Lyr|A|\LT\ln\frac{1}{\delta}}+\Lyr|X|\sqrt{|A|}\ln\frac{1}{\delta}\right)+\otil\left(\Lyr|X|^2|A|+\Lyr|X|\ln\frac{1}{\delta}\right) \nonumber\\
        	& \le \otil\left(|X|\sqrt{\Lyr|A|\LT\ln\frac{1}{\delta}}+|X|^3|A|\ln\frac{1}{\delta}\right).
        \end{align}
        
        Finally, combining \pref{eq: MDP-B2} and \pref{eq: MDP-B1} and considering the probability of the events of \pref{lem: MDP-PinP} and \pref{lem: L10-jin2019}, we have with probability $1-\cprob \delta$,
        \[
        B_1+|X|B_2\le\otil\left(|X|\sqrt{\Lyr|A| \LT\ln\frac{1}{\delta}}+|X|^5|A|^2\ln\frac{1}{\delta}+|X|^4|A|\ln^2\frac{1}{\delta}\right),
        \]
        finishing the proof.
\end{proof}

\subsubsection{Bounding \errorterm}\label{app:MDP-error}

\begin{lemma}\label{lem: error} With probability at least $1-\cprob \delta$, we have
	\begin{align*}
		\errorterm = \sum_{t=1}^T\inner{\Qt-\Qhat_t, \ell_t} &= \mathcal{\tilde{O}}\left( |X|\sqrt{\Lyr|A| \LT\ln\frac{1}{\delta}}+|X|^5|A|^2\ln \frac{1}{\delta}+|X|^4|A|\ln^2\frac{1}{\delta} \right),
	\end{align*}
\end{lemma}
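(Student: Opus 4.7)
The plan is to recognize \errorterm as a direct specialization of the quantity already controlled by \pref{lem:MDP-small-loss-lemma}. The first step is to pass from the signed inner product to an absolute-value sum, which is valid since $\ell_t(x,a) \in [0,1]$:
\[
\errorterm = \sum_{t=1}^T \sum_{x,a} \bigl(\Qt(x,a) - \Qhat_t(x,a)\bigr) \ell_t(x,a) \le \sum_{t=1}^T \sum_{x,a} \bigl|\Qt(x,a) - \Qhat_t(x,a)\bigr| \ell_t(x,a).
\]

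The second step is to exhibit both $\Qt$ and $\Qhat_t$ as occupancy measures under the same policy $\pi_t$ but possibly different transition functions, both of which lie in $\calP_{i_t}$. For $\Qt = w^{P,\pi_t}$ this is immediate by definition, and $P \in \calP_{i_t}$ holds on the high-probability event of \pref{lem: MDP-PinP}. For $\Qhat_t$, the OMD constraint in \pref{alg:MDP} forces $\Qhat_t \in \Delta(\calP_{i_t})$, and then \pref{lem:occu-cstrain} guarantees the existence of a transition $\hat P_t \in \calP_{i_t}$ such that $\Qhat_t = w^{\hat P_t,\,\pi^{\Qhat_t}}$; since the algorithm sets $\pi_t = \pi^{\Qhat_t}$, we indeed have $\Qhat_t = w^{\hat P_t,\pi_t}$.

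Finally, I would apply \pref{lem:MDP-small-loss-lemma} to the constant collection $P_t^x \equiv \hat P_t$, which is admissible because $\hat P_t \in \calP_{i_t}$. This bounds the right-hand side of the display above by $\tilde{O}(|X|\sqrt{\Lyr|A|\LT\ln(1/\delta)} + |X|^5|A|^2\ln(1/\delta) + |X|^4|A|\ln^2(1/\delta))$ with failure probability at most $\cprob\delta$, matching the statement of the lemma exactly (the event of \pref{lem: MDP-PinP} is already subsumed inside the failure probability accounted for in \pref{lem:MDP-small-loss-lemma}). There is no real obstacle here: all the technical work has already been absorbed into \pref{lem:MDP-small-loss-lemma}, and the present lemma is essentially an immediate corollary once one notices that the OMD iterate is itself a ``counterfactual'' occupancy measure under some transition in the current confidence set.
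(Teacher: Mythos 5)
Your proposal is correct and follows essentially the same route as the paper: bound the inner product by the absolute-value sum, observe that $\Qhat_t\in\Delta(\calP_{i_t})$ so its induced transition lies in the confidence set, and apply \pref{lem:MDP-small-loss-lemma} with the constant collection $P_t^x\equiv P^{\Qhat_t}$. Your write-up is in fact slightly more careful than the paper's (explicitly verifying $\pi_t=\pi^{\Qhat_t}$ and that the failure probability of \pref{lem: MDP-PinP} is subsumed in the $\cprob\delta$), but the argument is the same.
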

\begin{proof}
	Note that according to the definition of $\wh{w}_t$, the transition function $P^{\wh{w}_t}$ induced by $\wh{w}_t$ is in $\calP_{i_t}$. Therefore, applying \pref{lem:MDP-small-loss-lemma}, we know that with probability at least $1-\cprob \delta$,
	\begin{align*}
		\errorterm &= \sum_{t=1}^T\inner{\wh{w}_t-\Qt, \ell_t} \\
		&\le \sum_{t=1}^T\sum_{x\in X, a\in A}\left|\wh{w}_t(x,a)-\Qt(x,a)\right| \ell_t(x,a) \\
		& \le \mathcal{\tilde{O}}\left(|X|\sqrt{\Lyr|A| \LT\ln\frac{1}{\delta}}+|X|^5|A|^2\ln \frac{1}{\delta}+|X|^4|A|\ln^2\frac{1}{\delta}\right),
	\end{align*}
	completing the proof.
\end{proof}

\subsubsection{Bounding \bias}\label{app:MDP-biasone}

\begin{lemma}\label{lem: bias-1}
	With probability at least $1-7\delta$, we have \[\bias = \sum_{t=1}^T\inner{\Qhat_t, \ell_t-\ellhat_t} \le \mathcal{\tilde{O}}\left(|X|\sqrt{\Lyr|A| \LT\ln \frac{1}{\delta}}+|X|^5|A|^2\ln \frac{1}{\delta}+|X|^4|A|\ln^2\frac{1}{\delta}\right).\]
\end{lemma}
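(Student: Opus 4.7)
The plan is to split $\bias$ into a (nonnegative) systematic bias and a zero-mean martingale, and handle each separately. Decompose
\begin{align*}
\bias = \underbrace{\sum_{t=1}^T\inner{\Qhat_t,\ell_t-\E_t[\ellhat_t]}}_{(\mathrm{A})} + \underbrace{\sum_{t=1}^T\inner{\Qhat_t,\E_t[\ellhat_t]-\ellhat_t}}_{(\mathrm{B})}.
\end{align*}
A direct computation gives $\E_t[\ellhat_t(x,a)] = \ell_t(x,a)\,w_t(x,a)/\Ut(x,a)$, so each summand of (A) equals $\Qhat_t(x,a)\ell_t(x,a)(\Ut(x,a)-w_t(x,a))/\Ut(x,a)$, which is nonnegative on the event $P\in\calP_{i_t}$ provided by \pref{lem: MDP-PinP}.

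For term (A), the crucial observation is that $\Qhat_t\in\Delta(\calP_{i_t})$, so $\Qhat_t = w^{\hat P,\pi_t}$ for some $\hat P\in\calP_{i_t}$, and hence $\Qhat_t(x,a)\le \Ut(x,a)$ for every $(x,a)$ by definition of the upper occupancy bound. This cancels the $1/\Ut(x,a)$ factor and yields
\begin{align*}
(\mathrm{A})\le \sum_{t=1}^T\sum_{x,a}\big(\Ut(x,a)-w_t(x,a)\big)\,\ell_t(x,a).
\end{align*}
Since $\Ut(x,a)=w^{P_t^x,\pi_t}(x,a)$ for some $P_t^x\in\calP_{i_t}$, this is exactly of the form controlled by \pref{lem:MDP-small-loss-lemma}, which immediately gives the target small-loss rate for (A) with failure probability $\cprob\delta$.

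For term (B), let $X_t\triangleq\inner{\Qhat_t,\E_t[\ellhat_t]-\ellhat_t}$, a martingale difference sequence. Using again $\Qhat_t(x,a)/\Ut(x,a)\le 1$ and the fact that only $\Lyr$ indicators $\ind_t(x,a)$ are nonzero per episode,
\begin{align*}
\inner{\Qhat_t,\ellhat_t}=\sum_{k=0}^{\Lyr-1}\frac{\Qhat_t(x_k,a_k)\ell_t(x_k,a_k)}{\Ut(x_k,a_k)}\le \Lyr,
\end{align*}
so $|X_t|\le\Lyr$. The same per-term bound together with Cauchy--Schwarz gives $\inner{\Qhat_t,\ellhat_t}^2\le\Lyr\inner{\Qhat_t,\ellhat_t}$, hence
\begin{align*}
\sum_{t=1}^T\E_t[X_t^2] \le \Lyr\sum_{t=1}^T\E_t[\inner{\Qhat_t,\ellhat_t}] \le \Lyr\sum_{t=1}^T\inner{\Qhat_t,\ell_t} = \Lyr(\LT+\errorterm).
\end{align*}
Applying Freedman's inequality (the range being a fixed $\Lyr$, the standard version suffices), invoking \pref{lem: error} to bound $\errorterm$ by the same order as the target, and using AM--GM to absorb the resulting $\sqrt{\Lyr\,\errorterm\,\ln(1/\delta)}$ contribution, we obtain the same rate for (B). A union bound over the $\cprob\delta$ failure of \pref{lem:MDP-small-loss-lemma} (which already contains the event of \pref{lem: MDP-PinP}), the $\cprob\delta$ failure of \pref{lem: error}, and the additional $\delta$ from Freedman gives the claimed $1-7\delta$ probability (after consolidating overlapping events).

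The main obstacle is conceptual rather than computational: recognizing that term (A) matches the quantity studied in \pref{lem:MDP-small-loss-lemma} hinges on the pointwise inequality $\Qhat_t(x,a)\le \Ut(x,a)$, which in turn relies on $\Qhat_t$ being realized from the same confidence set $\calP_{i_t}$ used to define $\Ut$ (and with the same policy $\pi_t=\pi^{\Qhat_t}$). Obtaining a small-loss rate (rather than a $\sqrt{T}$ rate) for (B) then requires squeezing both the range and the variance via the per-layer inequality $\Qhat_t(x_k,a_k)\ell_t(x_k,a_k)/\Ut(x_k,a_k)\le 1$ applied twice, so that the variance ends up proportional to $\Lyr\,\LT$ rather than $\Lyr T$.
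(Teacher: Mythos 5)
Your proposal is correct and follows essentially the same route as the paper: the same decomposition into a nonnegative systematic bias (controlled via $\Qhat_t(x,a)\le\Ut(x,a)$ and \pref{lem:MDP-small-loss-lemma}) and a martingale term with range $\Lyr$ and variance of order $\Lyr\LT$. The only cosmetic difference is that the paper bounds the conditional variance directly by $\Lyr\inner{w_t,\ell_t}$ (using $\Qhat_t(x,a)w_t(x,a)/\Ut(x,a)\le w_t(x,a)$), avoiding your extra detour through $\sum_t\inner{\Qhat_t,\ell_t}$ and \pref{lem: error}; also note the sign convention $\sum_t\inner{\Qhat_t,\ell_t}=\LT-\errorterm$, which is immaterial here.
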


\begin{proof}
First we write
	\begin{align*}
		&\sum_{t=1}^T\left\langle\Qhat_{t},\ell_t-\ellhat_t \right\rangle = \sum_{t=1}^T\left\langle\Qhat_{t},\mathbb{E}_t\left[\ellhat_{t}\right]-\ellhat_t \right\rangle+\sum_{t=1}^T\left\langle\Qhat_{t},\ell_t-\mathbb{E}_t\left[\ellhat_t\right] \right\rangle.
	\end{align*}
	Since $\Qhat_{t}(x,a) \leq \Ut(x,a)$ by the definition of $\Ut$, we have
	\begin{align*}
		\left\langle\Qhat_{t},\ellhat_t \right\rangle &\le \sum_{k=1}^\Lyr\sum_{x\in X_k, a\in A}\frac{\Qhat_{t}(x,a)}{\Ut(x,a)}\cdot \ind_t(x,a)\le \Lyr,\\
		\mathbb{E}_t\left[\left\langle\Qhat_{t},\ellhat_t \right\rangle^2\right] &\le \mathbb{E}_t\left[\Lyr\cdot \left\langle\Qhat_{t},\ellhat_t \right\rangle\right] =\Lyr \sum_{x,a}\Qhat_{t}(x,a)\cdot \frac{\ell_t(x,a)}{\Ut(x,a)}\cdot\QQ_t(x,a) \le \Lyr\cdot  \left\langle\QQ_{t},\ell_t \right\rangle,
	\end{align*}
	and thus according to Freedman inequality, we have with probability at least $1-\delta$,
	\begin{align}\label{eq: bias-1-1}
		\sum_{t=1}^T\left\langle\Qhat_{t},\mathbb{E}_t\left[\ellhat_{t}\right]-\ellhat_t \right\rangle \le \mathcal{O}\left(\sqrt{\Lyr\sum_{t=1}^T\left\langle\QQ_{t},\ell_t \right\rangle\ln\frac{1}{\delta}}+\Lyr\cdot\ln\frac{1}{\delta}\right)=\mathcal{O}\left(\sqrt{\Lyr\LT\ln\frac{1}{\delta}}+|X|\ln\frac{1}{\delta}\right).
	\end{align}
	
	For the second term, we have
	\begin{align*}
		\sum_{t=1}^T\left\langle\Qhat_{t},\ell_t-\mathbb{E}_t\left[\ellhat_t\right] \right\rangle = \sum_{t,x,a}\Qhat_{t}(x,a)\ell_{t}(x,a)\cdot\left(1-\frac{\QQ_t(x,a)}{\Ut(x,a)}\right)\le \sum_{t,x,a}|\Ut(x,a)-\QQ_t(x,a)|\cdot \ell_t(x,a).
	\end{align*}
	
	By the definition of $\upconf_t$, one has $\upconf_t=w^{P_t^x,\pi_t}$ for $P_t^x=\argmax_{\hat{P}\in \calP_{i_t}}\sum_a w^{\hat{P},\pi_t}(x,a)$. Therefore, according to \pref{lem: error}, we have with probability at least $1-\cprob \delta$,
	\begin{align}\label{eq: bias-1-2}
		\sum_{t=1}^T\left\langle\Qhat_{t},\ell_t-\mathbb{E}_t\left[\ellhat_t\right] \right\rangle \le \mathcal{\tilde{O}}\left(|X|\sqrt{\Lyr|A| \LT\ln\frac{1}{\delta}}+|X|^5|A|^2\ln \frac{1}{\delta}+|X|^4|A|\ln^2\frac{1}{\delta}\right).
	\end{align}
	
	Combining \pref{eq: bias-1-1} and \pref{eq: bias-1-2} proves the lemma.
	
\end{proof}

\subsubsection{Bounding \biastwo}\label{app:MDP-biastwo}

\begin{lemma}\label{lem: bias-2}
	With probability at least $1-5\delta$, 
	we have 
	\begin{align*}
	\biastwo= \sum_{t=1}^T\inner{\Qstar, \ellhat_t-\ell_t} &\le C\sum_{x\in X, a\in A}u(x,a)\sqrt{8\rho_T(x,a)\sum_{t=1}^T\ell_t(x,a)\ln \frac{C|X||A|}{\delta}} \\
	&\qquad +2C\inner{u, \rho_T}\ln \frac{C|X||A|}{\delta},
	\end{align*}
for some constant $C = \otil(1)$.
\end{lemma}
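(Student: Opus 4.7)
The plan is to mirror the BIAS bound from the MAB analysis (\pref{thm:MAB}), applying the strengthened Freedman inequality (\pref{thm:Freedman}) separately for each state-action pair $(x,a)$ and then taking a union bound.

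First I would decompose the quantity into a martingale part and a (non-positive) bias part:
\[
\biastwo = \underbrace{\sum_{t=1}^T\inner{u,\,\ellhat_t-\E_t[\ellhat_t]}}_{(\mathrm{I})}\;+\;\underbrace{\sum_{t=1}^T\inner{u,\,\E_t[\ellhat_t]-\ell_t}}_{(\mathrm{II})}.
\]
By the definition of $\ellhat_t$ in \pref{eq:MDP_estimator} we have $\E_t[\ellhat_t(x,a)] = \ell_t(x,a)\,\Qt(x,a)/\Ut(x,a)$. On the event from \pref{lem: MDP-PinP} (which holds with probability $1-4\delta$), $P\in\calP_{i_t}$, and hence by the definition \pref{eq:upconf_def} of the upper occupancy bound, $\Ut(x,a)\ge \Qt(x,a)$. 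Combined with $u(x,a)\ge 0$, this makes (II) non-positive and therefore harmless.

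Next I would bound (I) per coordinate. Fix $(x,a)$ and set $X_t^{x,a} = u(x,a)\big(\ellhat_t(x,a)-\E_t[\ellhat_t(x,a)]\big)$, which is a martingale difference. Using $\ind_t(x,a)\le 1$ and the key monotonicity fact that $\rho_t(x,a)\ge 1/\Ut(x,a)$ (immediate from the update rule in \pref{alg:MDP}) together with $\rho_t(x,a)$ being non-decreasing in $t$, I get the almost-sure bound
\[
X_t^{x,a}\;\le\;\frac{u(x,a)}{\Ut(x,a)}\;\le\;u(x,a)\,\rho_t(x,a)\;\triangleq\;B_t^{x,a},
\]
with $B_t^{x,a}$ being $\calF_t$-measurable (after replacing it by $\max\{1,B_t^{x,a}\}$ to satisfy the $[1,b]$ hypothesis of \pref{thm:Freedman}, which only costs constants). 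A deterministic upper bound $b=\mathrm{poly}(T,|X|,|A|)$ exists because the truncated feasible set $\Omega$ forces $\Ut(x,a)\ge 1/(T^3|X|^2|A|)$ (\pref{lem:upper-occu-bound}), so $C = \lceil\log b\rceil\lceil\log(b^2T)\rceil = \otil(1)$ as required.

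For the conditional variance, a direct computation yields
\[
\E_t\!\left[(X_t^{x,a})^2\right]\le u(x,a)^2\,\E_t\!\left[\ellhat_t(x,a)^2\right]=\frac{u(x,a)^2\,\ell_t(x,a)^2\,\Qt(x,a)}{\Ut(x,a)^2}\le u(x,a)^2\,\rho_t(x,a)\,\ell_t(x,a),
\]
using $\ell_t(x,a)\le 1$ and $\Qt(x,a)\le\Ut(x,a)$. Summing over $t$ and using monotonicity of $\rho_t(x,a)$ gives $V^{x,a}\le u(x,a)^2\rho_T(x,a)\sum_{t=1}^T\ell_t(x,a)$. Applying \pref{thm:Freedman} with failure probability $\delta/(|X||A|)$, and taking a union bound over all $(x,a)$ pairs, produces the stated bound with total failure probability $4\delta+\delta=5\delta$. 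The main obstacle is really just the bookkeeping: one must verify that $B_t^{x,a}$ is $\calF_t$-measurable with a valid deterministic upper bound (which the truncation $\Omega$ supplies) and that the variance and range quantities line up coordinatewise with $\rho_T(x,a)$ and $\sum_t\ell_t(x,a)$ so that the per-coordinate Freedman bounds sum to the expression in the lemma.
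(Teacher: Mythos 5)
Your proposal is correct and follows essentially the same route as the paper: split off the nonpositive conditional-bias part using $\Qt(x,a)\le\Ut(x,a)$ on the event of \pref{lem: MDP-PinP}, then apply the strengthened Freedman inequality coordinatewise with range $\rho_t(x,a)$ and variance $\rho_T(x,a)\sum_t\ell_t(x,a)$, and union bound over $(x,a)$. The only (cosmetic) difference is that the paper applies \pref{thm:Freedman} to the unscaled sequence $\ellhat_t(x,a)-\E_t[\ellhat_t(x,a)]$ with $B_t=\rho_t(x,a)\ge 2|A|\ge 1$ and multiplies by $u(x,a)$ afterwards, which avoids your $\max\{1,\cdot\}$ patch (whose cost is an extra additive $\otil(|X||A|)$ term rather than just constants, though this is harmless downstream).
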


\begin{proof}
First we write
	\begin{align*}
		\sum_{t=1}^T\left\langle\Qstar,\ellhat_t-\ell_t \right\rangle = \sum_{t=1}^T \left\langle\Qstar,\mathbb{E}_t\left[\ellhat_t\right]-\ell_t \right\rangle+\sum_{t=1}^T\left\langle\Qstar,\ellhat_t-\mathbb{E}_t\left[\ellhat_t\right] \right\rangle.
	\end{align*}
	
	The first term is nonpositive under the event of \pref{lem: MDP-PinP} as for any $(x,a)\in X\times A$, $\QQ_t(x,a) \leq \Ut(x,a)$ by the definition of $\Ut$ and thus
	\begin{align}\label{eq: biastwo-1}
		\mathbb{E}_t\left[\ellhat_t(x,a)\right]-\ell_t(x,a) = \QQ_t(x,a)\cdot \frac{\ell_t(x,a)}{\Ut(x,a)}-\ell_t(x,a)\le 0.
	\end{align}
	
	For the second term, note that for each $(x,a)\in X\times A$, we have
	\begin{align*}
		\ellhat_t(x,a) = \frac{\ell_t(x,a)}{\Ut(x,a)}\cdot \ind_t(x,a)
		\le T^3|X|^2|A|, \tag{\pref{lem:upper-occu-bound}}\\
		\ellhat_t(x,a) = \frac{\ell_t(x,a)}{\Ut(x,a)}\cdot \ind_t(x,a)\le \rho_t(x,a),
    \end{align*}
    and
    \begin{align*}
		\sum_{t=1}^T\mathbb{E}_t\left[\ellhat_t(x,a)^2\right]&\le \sum_{t=1}^T\mathbb{E}_t\left[\frac{\ell_{t}(x,a)}{\Ut(x,a)^2}\cdot\ind_t(x,a)\right] \le \rho_T(x,a)\sum_{t=1}^T\ell_t(x,a).
	\end{align*}
	
	Therefore, using 
	\pref{thm:Freedman} with $X_t = \ellhat_t(x,a)-\E_t\sbr{\ellhat_t(x,a)}$, $B_t = \rho_t(x,a)$, $B^\star = \rho_T(x,a)$, $b=T^3|X|^2|A|$, $C=\ceil{\log_2b}\ceil{\log_2b^2T}=\otil(1)$, we have with probability at least $1-\frac{\delta}{|X||A|}$,
	\begin{align*}
		\sum_{t=1}^T\ellhat_t(x,a)-\E_t\sbr{\ellhat_t(x,a)} \le C\left(\sqrt{8\rho_T(x,a)\sum_{t=1}^T\ell_t(x,a)\ln\frac{C|X||A|}{\delta}}+2\rho_T(x,a)\ln\frac{C|X||A|}{\delta}\right).
	\end{align*}
	
	Taking a union bound over all $(x,a)\in X\times A$, multiplying both sides by $u(x,a)$, and summing up all these inequalities, we have with probability at least $1-\delta$,
	\begin{align}\label{eq: biastwo-2}
		&\sum_{t=1}^T\inner{u, \ellhat_t-\mathbb{E}_t\left[\ellhat_t\right]} \nonumber\\
		&\le C\sum_{x\in X, a\in A}u(x,a)\left(\sqrt{8\rho_T(x,a)\sum_{t=1}^T\ell_t(x,a)\ln\frac{C|X||A|}{\delta}}+2\rho_T(x,a)\ln\frac{C|X||A|}{\delta}\right).
	\end{align}
	
	Combining \pref{eq: biastwo-1} and \pref{eq: biastwo-2} finishes the proof.

\end{proof}

\subsubsection{Bounding \regterm}\label{app:MDP-regterm}
\begin{lemma}\label{lem: reg}
With probability at least $1-4\delta$, 
we have 
	 \[\regterm = \sum_{t=1}^T\inner{\Qhat_t - \Qstar, \ellhat_t} \le \mathcal{\tilde{O}}\left(\frac{|X|^2|A|}{\eta}\right)+5\eta\LTbar-\frac{\inner{u,\rho_T}}{70\eta\ln T},\]
	where $\LTbar=\sum_{t=1}^T \sum_{x\in X, a\in A}\ind_t(x,a)\ell_t(x,a)$.
\end{lemma}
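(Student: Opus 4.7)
The plan is to follow the template of the MAB proof (\pref{lem:MAB_lemma}), generalized to the MDP setting in which the log-barrier regularizer is indexed by triples $(x,a,x')$ while the individual learning rates $\eta_t(x,a)$ depend only on state-action pairs. The four ingredients to be assembled are: a per-round OMD inequality, a telescoping of the Bregman terms with adjustments for the changing regularizer, a stability-term bound, and the extraction of a negative $\inner{\rho_T,u}$ term from each learning-rate increase.

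First I would invoke the standard OMD lemma (e.g., Lemma~12 of \citet{agarwal2017corralling}) applied to the update on \pref{line:OMD} of \pref{alg:MDP}. Because the gradient of $\inner{\QQ,\ellhat_t}$ with respect to $\QQ(x,a,x')$ is $\ellhat_t(x,a)$ (broadcast over $x'$), the resulting per-round bound is
\[
\inner{\Qhat_t-u,\ellhat_t}\le D_{\psi_t}(u,\Qhat_t)-D_{\psi_t}(u,\Qhat_{t+1})+\sum_{x,a,x'}\eta_t(x,a)\Qhat_t(x,a,x')^2\ellhat_t(x,a)^2.
\]
The stability term is easy to control: $\sum_{x'}\Qhat_t(x,a,x')^2\le \Qhat_t(x,a)^2\le \Ut(x,a)^2$ (the second inequality by the definition of $\Ut$), and $\Ut(x,a)^2\ellhat_t(x,a)^2=\ind_t(x,a)\ell_t(x,a)^2\le\ind_t(x,a)\ell_t(x,a)$. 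Combined with $\eta_t(x,a)\le 5\eta$---which follows because \pref{lem:upper-occu-bound} limits the number of increases per $(x,a)$ to $O(\log T)$ and $\kappa=e^{1/(7\ln T)}$---the total stability contribution is bounded by $5\eta\LTbar$.

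Next I would telescope the Bregman divergences exactly as in \pref{lem:MAB_lemma}:
\[
\sum_t\sbr{D_{\psi_t}(u,\Qhat_t)-D_{\psi_t}(u,\Qhat_{t+1})}\le D_{\psi_1}(u,\Qhat_1)+\sum_{t=1}^{T-1}\sbr{D_{\psi_{t+1}}(u,\Qhat_{t+1})-D_{\psi_t}(u,\Qhat_{t+1})}.
\]
For the initial Bregman term, the key identity $\sum_{(x,a,x')\in X_k\times A\times X_{k+1}}u(x,a,x')=1$ gives $\sum_{x,a,x'}(u/\Qhat_1-1)=0$, so only the logarithmic part survives and is bounded using $u(x,a,x')\ge \frac{1}{T|A|(T|X|)^{\Lyr}}$ (which follows from the definition of $u$ in~\pref{eq:MDP_u_def} and \pref{lem:p0xxa}); this yields $D_{\psi_1}(u,\Qhat_1)\le\tilde O(|X|^2|A|/\eta)$. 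The second sum is the critical one: only terms with $\eta_{t+1}(x,a)>\eta_t(x,a)$ contribute, each equal to $\frac{1-\kappa}{\eta_{t+1}(x,a)}h\rbr{u(x,a,x')/\Qhat_{t+1}(x,a,x')}$. Dropping all but the last increase time $t_{xa}$ per $(x,a)$ (the discarded terms are non-positive) and using $1-\kappa\le -1/(7\ln T)$ together with $\eta_{t+1}(x,a)\le 5\eta$, each surviving contribution is at most $-h(\cdot)/(35\eta\ln T)$.

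The main obstacle is producing the clean negative $\inner{\rho_T,u}$ term from the $h(\cdot)$ values. At $t=t_{xa}$ one has $\rho_T(x,a)=2/\upconf_{t_{xa}+1}(x,a)$ and $\Qhat_{t_{xa}+1}(x,a,x')\le\upconf_{t_{xa}+1}(x,a)$, so $u(x,a,x')/\Qhat_{t_{xa}+1}(x,a,x')\ge u(x,a,x')\rho_T(x,a)/2$. Mimicking the MAB argument, I would bound $-h(y)=\ln y -y+1\le \ln T^3|X|^2|A| - y + 1$ (valid because $y\le 1/\Qhat\le T^3|X|^2|A|$ by \pref{lem:upper-occu-bound}), then sum over $x'$ to get $\sum_{x'}u(x,a,x')\rho_T(x,a)/2=u(x,a)\rho_T(x,a)/2$, and finally sum over $(x,a)$ using $\sum_k|X_k||A||X_{k+1}|\le|X|^2|A|$ (Cauchy-Schwarz) to obtain $\tilde O(|X|^2|A|/(\eta\ln T))-\inner{\rho_T,u}/(70\eta\ln T)$. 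Combining this with the $D_{\psi_1}$ bound and the stability bound yields the stated inequality. The subtle point throughout is that $h$ is not monotone in $y$, so the inequality $h(u/\Qhat_{t+1})\ge h(u\rho_T/2)$ is only useful when $u\rho_T/2\ge 1$; however, the linear upper bound on $-h$ via $\ln T^3|X|^2|A|-y+1$ works uniformly and absorbs the small-$y$ cases into the additive $\tilde O(|X|^2|A|/\eta)$ term.
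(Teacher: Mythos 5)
Your overall route is the same as the paper's: per-round OMD inequality, telescoping of Bregman terms, stability bound via $\Qhat_t(x,a)\le\Ut(x,a)$ and $\eta_t(x,a)\le 5\eta$, and extraction of the negative $\inner{\rho_T,u}$ term from the last learning-rate increase using the linear upper bound on $-h$. Those pieces are all correct as you describe them. However, there is one genuine gap: you never verify that the comparator $\Qstar$ actually lies in the feasible set $\Delta(\calP_i)\cap\Omega$ of the OMD update. The standard OMD regret inequality only holds for comparators inside the constraint set over which the $\argmin$ is taken, and here that set changes with the epoch (it is $\Delta(\calP_i)\cap\Omega$, not a fixed truncated simplex as in the MAB case). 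The paper devotes the first part of its proof to exactly this: it conditions on the event of \pref{lem: MDP-PinP} (which is where the $1-4\delta$ in the lemma statement comes from), so that $P\in\calP_i$ and hence $\Qpistar\in\Delta(\calP_i)$, and it invokes \pref{lem:p0xxa} so that $\QQ^{P_0,\pi_a}\in\Delta(\calP_i)$ and every entry of $\Qstar$ is at least $1/(T^3|X|^2|A|)$, i.e.\ $\Qstar\in\Omega$. Your proof as written is purely deterministic and does not account for this; without it the application of the OMD lemma against $\Qstar$ is unjustified.

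A secondary, quantitative slip: your lower bound $u(x,a,x')\ge \frac{1}{T|A|(T|X|)^{\Lyr}}$ (multiplying the $1/(T|X|)$ transition bound along a whole trajectory) yields $D_{\psi_1}(u,\Qhat_1)\le\otil(\Lyr|X|^2|A|/\eta)$, not $\otil(|X|^2|A|/\eta)$, since $\Lyr$ is not a logarithmic factor. The paper avoids this by lower-bounding $u(x,a,x')\ge \frac{1}{T|A|}\QQ^{P_0,\pi_a}(x)P_0(x'|x,a)$ and then using only \emph{one} application of $P_0(x|x'',a)\ge\frac{1}{T|X|}$ together with $\sum_{x''\in X_{k(x)-1}}\QQ^{P_0,\pi_a}(x'')=1$, giving $u(x,a,x')\ge\frac{1}{T^3|X|^2|A|}$ and hence the stated $\otil(|X|^2|A|/\eta)$. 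Both issues are repairable with the paper's arguments, but as written the proposal is incomplete on the first point and lossy on the second.
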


\begin{proof}
We condition on the event of \pref{lem: MDP-PinP}.
	First, we prove that $\Qstar\in \Delta(\calP_i)\cap\Omega$ for all $i$ (recall its definition in \pref{eq:MDP_u_def}). Indeed, for any fixed $(x,a,x')\in X_k\times A\times X_{k+1}$, $k=0,1,\dots,\Lyr-1$, we have (with $\QQ^{P_0, \pi_a}(x)$ being the probability of visiting $x$ under $P_0$ and $\pi_a$)
	\begin{align*}
		\Qstar(x,a,x')&\ge \frac{1}{T|A|}\QQ^{P_0, \pi_a}(x,a,x') \\
		& = \frac{1}{T|A|}\QQ^{P_0, \pi_a}(x)P_0(x'|x,a) \\
		&\ge \frac{1}{T|A|}\left(\sum_{x''\in X_{k(x)-1}}\QQ^{P_0, \pi_a}(x'')\cdot P_0(x|x'',a)\right)\cdot \frac{1}{T|X|}\tag{\pref{lem:p0xxa}}\\
		&\ge \frac{1}{T^3|X|^2|A|}\left(\sum_{x''\in X_{k(x)-1}}\QQ^{P_0, \pi_a}(x'')\right), \tag{\pref{lem:p0xxa} again} \\
		&= \frac{1}{T^3|X|^2|A|},
	\end{align*}
	which shows $\Qstar\in \Omega$.
	On the other hand, since $P\in \calP_i$ under \pref{lem: MDP-PinP} and $P_0 \in \calP_i$ as well by \pref{lem:p0xxa}, we have $u^\star \in \Delta(\calP_i)$ and $\QQ^{P_0, \pi_a} \in \Delta(\calP_i)$,
	which indicates that, as a convex combination of $u^\star$  and $\QQ^{P_0, \pi_a}$ for all $a$, $u$ has to be in $\Delta(\calP_i)$ as well.
	
	Therefore, by standard OMD analysis (e.g.,~\citep[Lemma~12]{agarwal2017corralling}), we have
    \begin{align*}
		&\left<\Qhat_t-\Qstar, \ellhat_t\right> \\
        &\le D_{\psi_t}(\Qstar, \Qhat_t)-D_{\psi_t}(\Qstar, \Qhat_{t+1})+\sum_{k=0}^{\Lyr-1}\sum_{(x,a,x')\in X_k \times  A \times X_{k+1}}\eta_t(x,a)\Qhat_{t}^2(x,a,x')\ellhat_{t}^2(x,a)\\
        &\le D_{\psi_t}(\Qstar, \Qhat_t)-D_{\psi_t}(\Qstar, \Qhat_{t+1})+\sum_{k=0}^{\Lyr-1}\sum_{(x,a)\in X_k \times  A }\eta_t(x,a)\Qhat_{t}^2(x,a)\ellhat_{t}^2(x,a)\tag{$\sum_{x'\in X_{k+1}}\Qhat_{t}(x,a,x')^2\le \Qhat_{t}(x,a)^2$}\\ 
        &\leq D_{\psi_t}(\Qstar, \Qhat_t)-D_{\psi_t}(\Qstar, \Qhat_{t+1})+\sum_{x\in X,a\in A}\eta_t(x,a)\ind_t(x,a)\ell_t(x,a) \tag{$\Qhat_{t}(x,a) \leq \upconf_t(x,a)$}.
	\end{align*}
	Summing over $t$ gives
		\begin{align}
		\sum_{t=1}^T\left<\Qhat_t-\Qstar, \ellhat_t\right>&\le D_{\psi_1}(\Qstar,\Qhat_1)+\sum_{t=1}^{T-1}D_{\psi_{t+1}}(\Qstar,\Qhat_{t+1})-D_{\psi_t}(\Qstar,\Qhat_{t+1}) \notag \\
        &\quad+\sum_{t=1}^T\sum_{x\in X,a\in A}\eta_t(x,a)\ind_t(x,a)\ell_t(x,a). \label{eq:MDP-stability}
    \end{align}
	
	Next, for a fixed $(x,a)$ pair, let $n(x,a)$ be the total number of times the learning rate for $(x,a)$ has increased, such that $\eta_T(x,a) = \eta\kappa^{n(x,a)}$, and let $t_1,\dots,t_{n(x,a)}$ be the rounds where $\eta_t(x,a)$ is increased, such that $\eta_{t_i+1}(x,a) = \eta_{t_i}(x,a)\kappa$.
	Then since $\frac{1}{\upconf_{t_{n(x,a)}+1}(x,a)}>\rho_{t_{n(x,a)}}(x,a)>2\rho_{t_{n(x,a)-1}}(x,a)>\dots>2^{n(x,a)-1}\rho_1(x,a)>2^{n(x,a)}|A|$ and $\frac{1}{\upconf_{t_{n(x,a)}+1}(x,a)}\le T^3|X|^2|A|$ (\pref{lem:upper-occu-bound}), we have $n\le \log_2\left(T^3|X|^2\right)\le 7\log_2 T$.
	
Therefore, we have $\eta_t(x,a)\le \eta e^{\frac{7\log_2 T}{7\ln T}}\le 5\eta$ for any $t$, $x\in X$, and $a\in A$, and the last term in \pref{eq:MDP-stability} is thus bounded by $5\eta\LTbar$. 
For the second term, with $h(y)=y-1-\ln y$ , we have
    \begin{align*}
        &\sum_{t=1}^{T-1}D_{\psi_{t+1}}(\Qstar,\Qhat_{t+1})-D_{\psi_t}(\Qstar,\Qhat_{t+1})\\
		&\le \sum_{t=1}^{T-1}\sum_{k=0}^{\Lyr-1}\sum_{x\in X_k, a\in A, x'\in X_{k+1}}\left(\frac{1}{\eta_{t+1}(x,a)}-\frac{1}{\eta_{t}(x,a)}\right)h\left(\frac{\Qstar(x,a,x')}{\Qhat_{t+1}(x,a,x')}\right)\\
		&\le \sum_{k=0}^{\Lyr-1}\sum_{x\in X_k, a\in A, x'\in X_{k+1}}\frac{1-\kappa}{\eta\cdot \kappa^{n(x,a)}}\cdot h\left(\frac{\Qstar(x,a,x')}{\Qhat_{t_{n(x,a)}+1}(x,a,x')}\right) \\
		&\le -\frac{1}{35\eta\ln T}\sum_{k=0}^{\Lyr-1}\sum_{x\in X_k, a\in A, x'\in X_{k+1}}h\left(\frac{\Qstar(x,a,x')}{\Qhat_{t_{n(x,a)}+1}(x,a,x')}\right) \tag{$1-\kappa\le -\frac{1}{7\ln T}$ and $\kappa^{n(x,a)}\le e^{\frac{7\log_2 T}{7\ln T}}\le 5$}\\
		&= -\frac{1}{35\eta\ln T}\sum_{k=0}^{\Lyr-1}\sum_{x\in X_k, a\in A, x'\in X_{k+1}}\left(\frac{\Qstar(x,a,x')}{\Qhat_{t_{n(x,a)}+1}(x,a,x')}-1-\ln\frac{u(x,a,x')}{\Qhat_{t_{n(x,a)}+1}(x,a,x')}\right)\\
		&\le \frac{|X|^2|A|(1+6\ln T)}{35\eta\ln T}-\frac{1}{35\eta\ln T}\sum_{k=0}^{\Lyr-1}\sum_{x\in X_k, a\in A, x'\in X_{k+1}} \frac{\Qstar(x,a,x')}{\Qhat_{t_{n(x,a)}+1}(x,a,x')} \tag{$\ln\frac{\Qstar(x,a,x')}{\Qhat_{t_{n(x,a)}+1}(x,a,x')} \le 6\ln T$}\\
		&\le \frac{|X|^2|A|}{5\eta}-\frac{1}{35\eta\ln T}\sum_{k=0}^{\Lyr-1}\sum_{x\in X_k, a\in A, x'\in X_{k+1}} \frac{\Qstar(x,a,x')}{\upconf_{t_{n(x,a)}+1}(x,a)} \\
		&= \frac{|X|^2|A|}{5\eta}-\frac{1}{35\eta\ln T}\sum_{k=0}^{\Lyr-1}\sum_{x\in X_k, a\in A} \frac{\Qstar(x,a)}{\upconf_{t_{n(x,a)}+1}(x,a)} \\
		&= \frac{|X|^2|A|}{5\eta}-\frac{\inner{u,\rho_T}}{70\eta\ln T} \tag{$\rho_T(x,a)=\frac{2}{\upconf_{t_{n(x,a)}+1}(x,a)}$}.
	\end{align*}

Finally, we bound the first term in \pref{eq:MDP-stability}:
    	\begin{align*}
    		D_{\psi_1}(\Qstar,\Qhat_1)&=\frac{1}{\eta}\left(\sum_{k=0}^{\Lyr-1}\sum_{(x,a,x')\in X_k\times A\times X_{k+1}}h\left(\frac{\Qstar(x,a,x')}{\Qhat_1(x,a,x')}\right)\right)\\
    		&=\frac{1}{\eta}\left(\sum_{k=0}^{\Lyr-1}\sum_{(x,a,x')\in X_k\times A\times X_{k+1}}h\left(|X_k|\cdot|A|\cdot |X_{k+1}|\cdot \Qstar(x,a,x')\right)\right)\\
    		&=  \frac{1}{\eta}\left(\sum_{k=0}^{\Lyr-1}\sum_{(x,a,x')\in X_k\times A\times X_{k+1}}\ln\left( \frac{1}{|X_k|\cdot|A|\cdot|X_{k+1}|\cdot \Qstar(x,a,x')}\right)\right) \\
    		&\le \mathcal{\tilde{O}}\left(\frac{|X|^2|A|}{\eta}\right).
    \end{align*}
    Combining all the bounds finishes the proof.
\end{proof}

\subsubsection{Putting everything together}\label{app:MDP-together}

Now we are ready to prove \pref{thm:MDP}. For completeness, we restate the theorem below.

\begin{theorem}
	\pref{alg:MDP} with a suitable choice of $\eta$ ensures that with probability at least $1-\delta$, $
	\Reg = \otil\left(|X|\sqrt{\Lyr|A| L^{\star}\ln \frac{1}{\delta}}+|X|^5|A|^2\ln^2\frac{1}{\delta}\right).
	$
\end{theorem}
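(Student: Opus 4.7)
The plan is to combine the five-term regret decomposition that precedes the theorem, namely
$\Reg = \errorterm + \bias + \regterm + \biastwo + \biasthree$,
with the bounds from \pref{lem: error}, \pref{lem: bias-1}, \pref{lem: reg}, \pref{lem: bias-2}, and the trivial bound \pref{eq:bias3}. A union bound over the relevant high-probability events gives, with probability at least $1 - O(\delta)$, a combined upper bound on $\Reg$ equal to the sum of those five bounds.

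The heart of the argument is to cancel the two $\rho_T$-dependent contributions from \biastwo against the single negative term $-\langle u, \rho_T\rangle/(70\eta \ln T)$ in \pref{lem: reg}. The linear piece $2C \langle u, \rho_T\rangle \ln(C|X||A|/\delta)$ from \biastwo is absorbed by requiring $\eta$ small enough that $1/(140 \eta \ln T)$ exceeds its coefficient. The square-root piece is handled by AM-GM applied coordinate-wise,
\[
C u(x,a)\sqrt{8\rho_T(x,a) \textstyle\sum_t \ell_t(x,a) \ln(C|X||A|/\delta)} \leq \frac{u(x,a)\rho_T(x,a)}{140\eta\ln T} + 280 C^2 \eta \ln T \ln(C|X||A|/\delta)\, u(x,a) \textstyle\sum_t \ell_t(x,a),
\]
and summing over $(x,a)$ introduces a residual of order $\otil(\eta L_u \ln(1/\delta))$, where $L_u = \sum_t \langle u, \ell_t\rangle$. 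By the definition of $u$ in \pref{eq:MDP_u_def} and the fact that $\langle w^{P_0,\pi_a}, \ell_t\rangle \le \Lyr$, we have $L_u \le L^\star + \Lyr$, so this residual is $\otil(\eta L^\star \ln(1/\delta))$.

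After the cancellation (valid under a mild $\eta$ cap), and noting that $\LTbar = \LT$ since losses are nonnegative, the remaining inequality has the schematic form
\[
\Reg \leq \otil\!\left(|X|\sqrt{\Lyr|A|\, \LT \ln(1/\delta)} + |X|^5|A|^2 \ln(1/\delta) + |X|^4|A|\ln^2(1/\delta) + \tfrac{|X|^2|A|}{\eta} + \eta \LT + \eta L^\star \ln(1/\delta)\right).
\]
Since $\LT = L^\star + \Reg$, I would substitute, split $\sqrt{L^\star + \Reg} \leq \sqrt{L^\star} + \sqrt{\Reg}$, and apply AM-GM to move a constant fraction of $\Reg$ to the left-hand side. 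Finally I would tune $\eta \asymp \sqrt{|X|^2|A|/(L^\star \ln(1/\delta))}$, capped by $\Theta(1/(\ln T \ln(1/\delta)))$, to balance $|X|^2|A|/\eta$ against $\eta L^\star \ln(1/\delta)$, arriving at $\Reg = \otil(|X|\sqrt{\Lyr|A|L^\star \ln(1/\delta)} + |X|^5|A|^2 \ln^2(1/\delta))$.

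The main obstacle will be making the AM-GM cancellation work uniformly: the coefficient of $-\langle u,\rho_T\rangle$ from \regterm must dominate the coefficients generated by both the linear and the square-root pieces of \biastwo under a single choice of $\eta$, which forces a careful book-keeping of the polylogarithmic factors. A secondary subtlety is that the optimal $\eta$ depends on the unknown $L^\star$, but this is resolved by a standard doubling trick, exactly as remarked for the MAB and linear bandit results (see \pref{rem:doubling_trick}).
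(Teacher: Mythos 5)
Your proposal follows essentially the same route as the paper: the same five-term decomposition, the same cancellation of the two $\rho_T$-dependent pieces of \biastwo against the negative term from \pref{lem: reg} (linear piece absorbed by capping $\eta$, square-root piece via coordinate-wise AM-GM leaving an $\otil(\eta L_u \ln(1/\delta))$ residual with $L_u \le L^\star + \Lyr$), and the same final tuning of $\eta$ followed by solving a quadratic inequality in $\sqrt{\LT}$.

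One step is wrong as stated: you claim $\LTbar = \LT$ ``since losses are nonnegative.'' In the MDP setting these are different random variables --- $\LTbar = \sum_t \sum_{x,a}\ind_t(x,a)\ell_t(x,a)$ is the \emph{realized} loss along the sampled trajectories, while $\LT = \sum_t \inner{w_t,\ell_t}$ is the \emph{expected} loss under the occupancy measure $w_t$ --- so nonnegativity does not make them equal (you may be thinking of the linear-bandit section, where $\LTbar$ denotes $\sum_t|\inner{\wtilde_t,\ell_t}|$ and the identification is valid). The gap is easily repaired exactly as the paper does: one more application of Freedman's inequality, using $\E_t[(\sum_{x,a}\ind_t(x,a)\ell_t(x,a))^2]\le \Lyr\inner{w_t,\ell_t}$, gives $\LTbar \le 2\LT + 2\Lyr\ln(1/\delta)$ with high probability, which feeds into the $5\eta\LTbar$ term from \pref{lem: reg} without changing the final order. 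With that correction inserted, the rest of your argument goes through.
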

\begin{proof}
		First, note that 
		\begin{align*}		
		\mathbb{E}_t\left[\sum_{k=0}^{\Lyr-1}\sum_{x\in X_k, a\in A}\ind_t(x,a)\cdot \ell_t(x,a)\right]&=\left<\Qt, \ell_t\right>\le \Lyr,\\
		\mathbb{E}_t\left[\left(\sum_{k=0}^{\Lyr-1}\sum_{x\in X_k, a\in A}\ind_t(x,a)\cdot \ell_t(x,a)\right)^2\right]&\le \Lyr\cdot \left<\Qt, \ell_t\right>.
		\end{align*}
		 Therefore, using Freedman's inequality, we have with probability at least $1-\delta$
		 \begin{align*}
		 \LTbar -\LT \le 2\sqrt{\Lyr \LT\ln\frac{1}{\delta}}+\Lyr\ln \frac{1}{\delta},
		 \end{align*}
		 where $\LTbar$ is defined in \pref{lem: reg}.
		 Furthermore, using AM-GM inequality, we have with probability at least $1-\delta$,
		 \begin{align}\label{eq:LtarLT}
		 \LTbar\le 2\LT+2\Lyr\ln \frac{1}{\delta}.
		 \end{align} 
		Choosing $\eta \le \frac{1}{280C\ln(C|X||A|/\delta)\ln T}$, combining \pref{lem: error}, \pref{lem: bias-1}, \pref{lem: bias-2} and \pref{lem: reg} and letting $L_{\Qstar}\triangleq \sum_{t=1}^T\left<\Qstar,\ell_t\right>$, we have with probability at least $1-22\delta$:
		\begin{align*}
			&\LT-\Lstar\\
			&\le \mathcal{\tilde{O}}\left(|X|\sqrt{\Lyr|A| \LT\ln \frac{1}{\delta}}+\frac{|X|^2|A|}{\eta}\right)+5\eta\LTbar+\underbrace{\left(2C\inner{u,\rho_T}\ln\frac{C|X||A|}{\delta}-\frac{\inner{u,\rho_T}}{140\eta\ln T}\right)}_{\textsc{term1}}\\
            &\quad+\sum_{x\in X, a\in A}u(x,a)\underbrace{\left(C\sqrt{8\rho_T(x,a)\sum_{t=1}^T\ell_t(x,a)\ln\frac{C|X||A|}{\delta}}-\frac{\rho_T(x,a)}{140\eta\ln T}\right)}_{\textsc{term2}}\\
            &\quad+\otil\left(|X|^5|A|^2 \ln \frac{1}{\delta}+|X|^4|A|\ln^2\frac{1}{\delta}\right)\\
			&\le \mathcal{\tilde{O}}\left(|X|\sqrt{\Lyr|A|\LT\ln \frac{1}{\delta}}+\frac{|X|^2|A|}{\eta}+\eta L_{\Qstar}\ln\frac{1}{\delta}+|X|^5|A|^2 \ln \frac{1}{\delta}+|X|^4|A|\ln^2\frac{1}{\delta}\right)+10\eta\LT \tag{\textsc{term1} is nonpositive, AM-GM inequality for \textsc{term2}, and \pref{eq:LtarLT}}\\
			&\le \mathcal{\tilde{O}}\left(|X|\sqrt{\Lyr|A|\LT\ln \frac{1}{\delta}}+\frac{|X|^2|A|}{\eta}+\eta\Lstar\ln\frac{1}{\delta}+|X|^5|A|^2\ln \frac{1}{\delta}+|X|^4|A|\ln^2\frac{1}{\delta}\right)+10\eta\LT\tag{\pref{eq:bias3}}.
		\end{align*}
		As $\eta\le \frac{1}{280C\ln(C|X||A|/\delta)\ln T}\le \frac{1}{20}$, rearranging the terms gives
		\begin{align*}
			\LT-\Lstar\le \mathcal{\tilde{O}}\left(|X|\sqrt{\Lyr|A|\LT\ln \frac{1}{\delta}}+\frac{|X|^2|A|}{\eta}+\eta \Lstar\ln\frac{1}{\delta}+|X|^5|A|^2\ln \frac{1}{\delta}+|X|^4|A|\ln^2\frac{1}{\delta}\right).
		\end{align*}
		Finally, choosing $\eta = \min\left\{\sqrt{\frac{|X|^2|A|}{\Lstar\ln \frac{1}{\delta}}}, \frac{1}{280C\ln(C|X||A|/\delta)\ln T}\right\}$, $\delta=\delta'/22$, and solving the quadratic inequality, we have with probability at least $1-\delta'$,
		\begin{align*}
			\LT-\Lstar\le \mathcal{\tilde{O}}\left(|X|\sqrt{\Lyr|A|\Lstar\ln\frac{1}{\delta'}}+|X|^5|A|^2\ln \frac{1}{\delta'}+|X|^4|A|\ln^2\frac{1}{\delta'}\right),
		\end{align*}
		finishing the proof.
	\end{proof}

\begin{remark}\emph{
Similarly to the MAB case, the proof above requires tuning the initial learning rate $\eta$ in terms of the unknown quantity $\Lstar$, and again, using standard doubling trick can remove this restriction, as pointed out in \pref{rem:doubling_trick}.
}
\end{remark}

\subsection{Issues of other potential approaches}\label{app:MDP_discussions}

In this section, we discuss why the idea of clipping \cite{AllenbergAuGyOt06} or implicit exploration \citep{neu2015explore} may not be directly applicable to achieve near-optimal high-probability small-loss bounds. 

\paragraph{Implicit exploration.}
First, we consider the idea of implicit exploration. 
As mentioned in \pref{app:MDP-alg}, this means using the following loss estimator:
$\ellhat_t = \frac{\ell_t(x,a)}{\upconf_t(x,a)+\gamma}\cdot \ind_t\{x,a\}$ for all $x\in X$ and $a\in A$ and some parameter $\gamma > 0$, and without using our increasing learning schedule. 
The concentration results of~\citep[Lemma~12]{jin2019learning} show that the deviation contains a term of order $1/\gamma$, meaning that $\gamma$ cannot be too small.

Repeating the same analysis, one can see that the main difficulty of obtaining high-probability small-loss bounds in this case is to bound \biastwo~by the loss of the algorithm $\LT = \sum_{t=1}^T\inner{w_t, \ell_t}$ or $L^\star$, instead of the number of episodes $T$. Indeed, consider the term $\sum_{t=1}^T\inner{\what_t, \ell_t-\mathbb{E}_t\left[\ellhat_t\right]}$:
\begin{align*}
	\sum_{t=1}^T\inner{\what_t, \ell_t-\mathbb{E}_t\left[\ellhat_t\right]} &= \sum_{t=1}^T\sum_{x\in X, a\in A}\what_t(x,a)\ell_t(x,a)\cdot \left(1-\frac{w_t(x,a)}{\upconf_t(x,a)+\gamma}\right) \\
	&\le \sum_{t=1}^T\sum_{x\in X, a\in A}|\upconf_t(x,a)-w_t(x,a)|\ell_t(x,a)+\frac{\gamma}{\gamma+\upconf_t(x,a)}\cdot \what_t(x,a)\ell_t(x,a).
\end{align*}

The first term can still be bounded by $\mathcal{O}\left(|X|\sqrt{\Lyr|A|\LT}+|X|^5|A|^2\ln \frac{1}{\delta}+|X|^4|A|\ln^2 \frac{1}{\delta}\right)$ according to \pref{lem: bias-1}. 
For the second term,
while it is at most $\gamma\sum_{t=1}^T\sum_{x,a}\ell_t(x,a) \leq \gamma|X||A|T$, 
it is not clear at all how to bound it in terms of $\LT$ or $L^\star$.
For MAB (where there is only one state $x_0$), it is possible to show that $\sum_{t=1}^T\ell_t(x_0, a) \leq \sum_{t=1}^T\ell_t(x_0, a^\star) + \otil(\frac{1}{\eta}+\frac{1}{\gamma})$ for all $a\neq a^\star$ where $a^\star$ is the best action, making it possible to connect the second term with $L^\star$.
However, we do not see a way of doing similar analysis for general MDPs.


\paragraph{Clipping.}
On the other hand, the idea of clipping for MAB is to clip all small probabilities so that actions with probability smaller than $\gamma$ are never selected.
Even from an algorithmic perspective, it is not clear how to generalize this idea to MDPs,
because it is possible that for a state $x$, $\Qhat_t(x,a)$ is smaller than $\gamma$ for {\it all} $a$.
In this case, the clipping idea suggests not to ``pick'' $(x,a)$ at all for any $a$, but there is no way to ensure that if the transition function is such that $x$ can always be visited with some positive probability regardless of the policy we execute.

Moreover, even if there is a way to fix this, the analysis of clipping for MAB is also similar to the idea of implicit exploration in terms of obtaining small-loss bounds of order $\otil(\sqrt{\Lstar})$, and as we argued already, even for implicit exploration there are difficulties in generalizing the analysis to MDPs.

\end{document}